\documentclass{fmcad-arxiv}  %

\usepackage[T1]{fontenc}

\usepackage{graphicx}
\usepackage{amsmath}
\usepackage{amsfonts}
\usepackage{amssymb}
\usepackage{amsthm}
\usepackage{mathtools} 
\usepackage{algorithm}
\usepackage{algpseudocode}

\algtext*{EndIf}

\usepackage{enumitem}

\usepackage{upgreek}

\usepackage{multirow}
\usepackage{tabularx}
\usepackage{threeparttable}
\usepackage{array}

\usepackage{wrapfig}
\usepackage{subcaption}
\usepackage{cleveref}

\usepackage{tikz}
\usetikzlibrary{external, positioning, shadows, arrows.meta, fit, backgrounds, calc}

\usepackage{booktabs}
\usepackage{tabularx}

\usepackage{pgfplots}
\pgfplotsset{compat=1.18}
\usepgfplotslibrary{fillbetween}

\usepackage[disable]{todonotes} %

\makeatletter
\newcommand{\alglabel}[1]{%
  \begingroup
    \addtocounter{ALG@line}{-1}%
    \refstepcounter{ALG@line}%
    \label{#1}%
  \endgroup
}
\makeatother

\crefname{ALG@line}{Line}{Lines}
\Crefname{ALG@line}{Line}{Lines}

\newcolumntype{L}[1]{>{\raggedright\let\newline\\\arraybackslash\hspace{0pt}}m{#1}}
\newcolumntype{C}[1]{>{\centering\let\newline\\\arraybackslash\hspace{0pt}}m{#1}}
\newcolumntype{R}[1]{>{\raggedleft\let\newline\\\arraybackslash\hspace{0pt}}m{#1}}

\renewcommand{\vec}[1]{\boldsymbol{\mathbf{#1}}}
\let\lo\underline
\let\up\overline

\DeclareMathOperator{\relu}{\mathrm{ReLU}}
\DeclareMathOperator{\elu}{\mathrm{ELU}}

\DeclareMathOperator{\arctanh}{\mathrm{arctanh}}
\DeclareMathOperator{\NN}{\mathcal{N}}

\DeclareMathOperator{\Opt}{\mathcal{P}_{back}}

\DeclareMathOperator{\I}{\mathbb{I}}
\DeclareMathOperator{\N}{\mathbb{N}}
\DeclareMathOperator{\Z}{\mathbb{Z}}

\DeclareMathOperator{\R}{\mathbb{R}}
\DeclareMathOperator{\X}{\mathcal{X}}

\DeclareMathOperator*{\argmin}{\mathrm{argmin}}

\newcommand{\aCROWN}[0]{$\alpha$-\textsc{CROWN}}
\newcommand{\abCROWN}[0]{$\alpha$-$\beta$-\textsc{CROWN}}
\newcommand{\autoLirpa}[0]{\textsc{AutoLiRPA}}
\newcommand{\linSyn}[0]{\textsc{LinSyn}}
\newcommand{\sol}[0]{\textsc{SOL}}
\newcommand{\eran}[0]{\textsc{ERAN}}

\newcommand{\swish}[0]{\textsc{Swish}}
\newcommand{\gelu}[0]{\textsc{GELU}}
\newcommand{\mish}[0]{\textsc{Mish}}
\newcommand{\lisht}[0]{\textsc{LiSHT}}
\newcommand{\atansq}[0]{\textsc{AtanSq}}
\newcommand{\loglog}[0]{\textsc{LogLog}}

\DeclarePairedDelimiter\abs{\lvert}{\rvert}%
\DeclarePairedDelimiter\norm{\lVert}{\rVert}%

\makeatletter
\let\oldabs\abs
\def\abs{\@ifstar{\oldabs}{\oldabs*}}
\let\oldnorm\norm
\def\norm{\@ifstar{\oldnorm}{\oldnorm*}}
\makeatother

\newtheorem{lemma}{Lemma}
\newtheorem{proposition}{Proposition}
\newtheorem{definition}{Definition}

\title{
    Shifting-based Optimizable Linear Relaxations for\\General Activation Functions
    \thanks{Thanks to Philipp for the nice collaboration.}
}

\author{
\IEEEauthorblockN{Philipp Kern}
\IEEEauthorblockA{
Karlsruhe Institute of Technology\\
philipp.kern@kit.edu
}
\and
\IEEEauthorblockN{L{\'a}szl{\'o} Antal and Erika {\'A}brah{\'a}m}
\IEEEauthorblockA{
RWTH Aachen University\\
\{antal,abraham\}@cs.rwth-aachen.de
}
\and
\IEEEauthorblockN{Carsten Sinz}
\IEEEauthorblockA{
Karlsruhe University of Applied Sciences\\
carsten.sinz@h-ka.de
}
}

\begin{document}

\maketitle

\begin{abstract}
The use of neural networks (NNs) is rapidly increasing, including in safety- and security-critical domains. To provide formal guarantees about NN behavior, many verification methods rely on optimizable linear relaxations of activation functions.
However, existing techniques depend on hand-crafted relaxations for each activation function.
Extension to state-of-the-art activation functions therefore requires substantial manual effort.

In contrast, our approach SLiR (Shifting-based Linear Relaxations) is broadly applicable, requiring only a Lipschitz constant or a set of critical points.
SLiR parameterizes relaxations by their slope and computes the corresponding offset via a shifting procedure that ensures sound upper and lower bounds over the input domain, enabling efficient optimization while maintaining correctness.
Our experiments show that SLiR produces tight relaxations across a wide range of practical activation functions and enables verification of up to $7.8 \times$ more properties compared to state-of-the-art methods.
\end{abstract}

\begin{figure*}[t]
    \centering
    \resizebox{\textwidth}{!}{
        \pgfdeclarelayer{outerstack}
\pgfdeclarelayer{innerstack}
\pgfdeclarelayer{midground}
\pgfsetlayers{outerstack,innerstack,midground,background,main}

\begin{tikzpicture}[
    neuronbox/.style={
        draw,
        fill=white,
        thick,
        rounded corners=2pt,
        text width=3cm,
        align=center,
        inner ysep=2pt,
        inner xsep=2pt,
        font=\large,
    },
    relaxbox/.style={
        draw,
        dashed,
        fill=blue!5,
        fill opacity=0.4,
        text opacity=1,
        rounded corners=5pt,
        inner xsep=2pt,
        inner ysep=2pt
    },
    arrow/.style={-{Stealth[scale=1.2]}, thick}
]

    \coordinate (inputsCoord) at (0,0);

    \coordinate (nnCoord) at ($(inputsCoord)+(0.5,0)$);

    \node[neuronbox, anchor=west] (startBounds) at (nnCoord) {Neural network};
    \node[neuronbox, anchor=west, xshift=2pt] (startNet) at ($(startBounds.east)+(0.25,0)$) {Input bounds};
    \node[neuronbox, anchor=west, xshift=2pt, text width=4cm] (startConstr) at ($(startNet.east)+(0.25,0)$) {Output constraints};

    \coordinate (startTop) at ($(startBounds.north)+(0.01,0.45)$);

    \begin{pgfonlayer}{outerstack}
        \node[
            neuronbox,
            fit=(startBounds) (startNet) (startConstr) (startTop),
            inner xsep=6pt,
            inner ysep=4pt,
            fill=gray!10,
            label={[anchor=north west, font=\bfseries, xshift=0.2cm]north west:Inputs}
        ] (start) {};
    \end{pgfonlayer}

    \coordinate[left=0.6cm of start] (startArrow);

    \coordinate (splitPoint) at ($(start.south west) + (0.8cm,-5.2cm)$);

    \node[neuronbox, right=0.5cm of splitPoint] (pwlApprox)
        {Piecewise linear approx. of activation function (Sec. \ref{subsec:pwl-envelopes})};

    \node[neuronbox, right=0.5cm of pwlApprox] (pwlEnvelope)
        {Sound lower and upper piecewise linear envelope (Sec. \ref{subsec:pwl-envelopes})};

    \node[neuronbox, above=0.3cm of pwlApprox] (exactOpt)
        {Exact optimization possible (Sec. \ref{subsec:exact-shift})};

    \coordinate[right=0.2cm of pwlEnvelope] (mergePoint);

    \node[neuronbox, right=1.2cm of pwlEnvelope] (chooseSlope) {Choose\\ slope $m_j$};
    \node[neuronbox, right=0.5cm of chooseSlope] (shift) {Compute\\ shift};

    \node[neuronbox, above=0.7cm of shift] (compVol)
        {
            Compute volume (local) 
            (Sec.~\ref{ssec:param-init})
        };

    \node[neuronbox, above=0.5cm of chooseSlope] (optVol)
        {
            Minimize volume (local) 
            (Sec.~\ref{ssec:param-init})
        };

    \draw[arrow] (chooseSlope) -- (shift);
    \draw[arrow] (shift) -- (compVol);
    \draw[arrow] (compVol) -- (optVol);
    \draw[arrow] (optVol) -- (chooseSlope);

    \node[
        anchor=north west,
        font=\bfseries\normalsize,
        inner sep=1pt,
        above of=optVol,
        xshift=1.2cm,
        yshift=0.3cm
    ] (neuronlevel) {Neuron-level optimization: Neuron $j$};

    \node[
        anchor=north west,
        font=\bfseries,
        above of=exactOpt,
        xshift=0.9cm,
        yshift=0.5cm
    ] (layerwise) {Layer-wise initialization: Layer $i$};

    \node[
        anchor=north west,
        font=\normalsize,
        below of=chooseSlope,
        xshift=1.7cm,
        yshift=0.3cm,
        inner sep=1pt
    ] (linrel) {Linear relaxation};

    \begin{pgfonlayer}{midground}
        \node[relaxbox, fit=(chooseSlope) (shift) (linrel), inner ysep=3pt] (neuronRelax) {};
    \end{pgfonlayer}

    \begin{pgfonlayer}{innerstack}
        \foreach \i in {3,2,1} {
            \node[
                neuronbox,
                fit=(neuronRelax) (compVol) (optVol) (neuronlevel) (linrel),
                inner sep=6pt,
                shift={(-\i*0.15,+\i*0.15)},
                fill=white
            ] (neuronLevelOpt\i) {};
        }
    \end{pgfonlayer}

    \begin{pgfonlayer}{outerstack}
        \foreach \i in {3,2,1} {
            \node[
                neuronbox,
                fit=(pwlApprox) (pwlEnvelope) (exactOpt) (layerwise) (neuronLevelOpt1) (splitPoint),
                inner xsep=7pt,
                inner ysep=7pt,
                shift={(-\i*0.15,\i*0.15)},
                fill=gray!10
            ] (initLayer\i) {};
        }
    \end{pgfonlayer}

    \node[
        neuronbox,
        inner xsep=2pt,
        text width=3.1cm,
        right=0.9cm of shift
    ] (computeBounds) {Compute\\ output bounds\\ (Sec. \ref{sec:background})};

    \node[
        neuronbox,
        inner xsep=2pt,
        text width=3.1cm,
        right=1.0cm of computeBounds
    ] (optBounds) {
        Optimize output bounds (global)
        (Sec.~\ref{ssec:param-opt})
    };

    \node[
        neuronbox,
        inner xsep=2pt,
        text width=3.1cm,
        above=0.75cm of optBounds
    ] (chooseSlopeNet) {Choose\\ slopes $m_1,...,m_k$};

    \node[
        neuronbox,
        inner xsep=2pt,
        text width=3.1cm,
        above=0.60cm of computeBounds,
        yshift=3pt
    ] (shiftNet) {Compute\\ shift};

    \node[neuronbox, below=0.25cm of optBounds] (result) {Verification result};

    \node[above=0cm of shiftNet, font=\normalsize, xshift=2.0cm] (lin)
        {Linear output relaxation};

    \begin{pgfonlayer}{midground}
        \node[relaxbox, fit=(chooseSlopeNet) (shiftNet) (lin)] (neuronRelaxNet) {};
    \end{pgfonlayer}

    \coordinate[above=0.8cm of neuronRelaxNet] (netOptTop);
    \begin{pgfonlayer}{outerstack}
        \node[
            neuronbox,
            fit=(computeBounds) (optBounds) (neuronRelaxNet) (netOptTop),
            inner xsep=4pt,
            inner ysep=4pt,
            fill=gray!10
        ] (netOpt) {};
        \node[
            anchor=north west,
            font=\bfseries,
            xshift=1cm,
            yshift=-0.1cm
        ] at (netOpt.north west) {Network-level optimization};
    \end{pgfonlayer}

    \draw[thick] ($(start.south west) + (0.8cm,0)$) -- (splitPoint);

    \draw[arrow] (splitPoint) |- (pwlApprox.west);
    \draw[arrow] (splitPoint) |- (exactOpt.west);
    \draw[arrow] (pwlApprox) -- (pwlEnvelope);

    \draw[thick] (pwlEnvelope.east) -| (mergePoint);
    \draw[thick] (exactOpt.east) -| (mergePoint);
    \draw[arrow] (mergePoint) -- (chooseSlope.west);

    \draw[arrow] (shift.east) -- (computeBounds.west);

    \draw[arrow] (computeBounds) -- (optBounds);
    \draw[arrow] (optBounds) -- (chooseSlopeNet);
    \draw[arrow] (chooseSlopeNet) -- (shiftNet);
    \draw[arrow] (shiftNet) -- (computeBounds);

    \draw[arrow] (computeBounds.south) -- ($(result.west) + (-2.7cm,0)$) -- (result.west);

    \draw[arrow] (startArrow) -- (start.west);

\end{tikzpicture}
    }
    \caption{Overview of our Shifting-based Linear Relaxation (SLiR) approach for neural network verification.
    \textcolor{red}{
    }
    }
    \label{fig:visual-abstract}
\end{figure*}

\section{Introduction}
\label{sec:intro}

\begin{figure*}[t]
\centering

\begin{minipage}[t]{0.475\linewidth}
\centering
\setlength{\tabcolsep}{12pt}
\renewcommand{\arraystretch}{1.05}
\begin{tabularx}{\columnwidth}{@{}lX@{}}
\toprule
\textbf{Name} & \textbf{Definition} \\
\midrule
\swish{}  & $x \cdot \sigma(x)$, where $\sigma(x) = (1 + e^{-x})^{-1}$ \\[0.15em]
\gelu{}   & $x \cdot \Phi(x)$, where $\Phi(x)$ is the CDF of the Gaussian distribution \\[0.15em]
\mish{}   & $x \cdot \tanh\bigl(\ln(1 + e^x)\bigr)$ \\[0.15em]
\lisht{}  & $x \cdot \tanh(x)$ \\[0.15em]
\atansq{} & $\arctan(x)^2 - x$ \\[0.15em]
\loglog{} & $1 - e^{-e^{x}}$ \\
\bottomrule
\end{tabularx}%
\end{minipage}%
\hfill
\begin{minipage}[t]{0.525\linewidth}
\centering
\vspace{-6em}
\hspace*{2.4em}
\begin{tikzpicture}
\begin{axis}[
    width=\columnwidth,
    axis lines=middle,
    axis line style={-stealth, very thin},
    xmin=-3.20, xmax=3.20,
    ymin=-1.0, yscale=0.50,
    xscale=1.10,
    ticks=none,
    clip=false,
    legend style={
        draw=none,
        fill=none,
        font=\small,
        at={(0.35, 0.275)},
        anchor=north,
        legend columns=3,
        column sep=0.75em
    },
    samples=200,
    domain=-3.0:3.0,
]
\addplot[very thick, color=teal] { x * (1/(1 + exp(-x))) };
\addlegendentry{\swish{}}

\addplot[very thick, color=blue!55!black] { x * tanh(x) };
\addlegendentry{\lisht{}}

\addplot[very thick, color=pink] { x * tanh(ln(1 + exp(x))) };
\addlegendentry{\mish{}}

\addplot[very thick, color=red!75!black]
    { 0.5*x*(1 + tanh(sqrt(2/pi)*(x + 0.044715*x^3))) };
\addlegendentry{\gelu{}}

\addplot[very thick, color=orange] { (rad(atan(x)))^2 - x };
\addlegendentry{\atansq{}}

\addplot[very thick, color=olive] { 1 - exp(-exp(x)) };
\addlegendentry{\loglog{}}
\end{axis}
\end{tikzpicture}%
\end{minipage}%

\caption{Activation function definitions (left) and their visual illustration (right).}
\label{fig:activations_defs_plot}
\end{figure*}

Bound-propagation-based techniques are among the most successful approaches for scalable neural network verification. 
These methods overapproximate the nonlinear behavior of a network by propagating symbolic linear bounds through its layers, enabling efficient reasoning about safety or robustness.
A key factor behind the effectiveness of recent approaches is the use of \emph{optimizable linear relaxations} of activation functions, whose parameters (e.g.,  slope or a tangent point) are tuned to minimize concrete upper bounds on the output.
State-of-the-art verifiers such as \aCROWN{}~\cite{Xu21} and its successors rely heavily on gradient-based optimization.

Although this paradigm is effective for networks with ReLUs (and some other well-studied activation functions), extending optimizable linear relaxations to \emph{general activation functions} remains a challenge. 
Modern NNs increasingly rely on complex activation functions (see Fig. \ref{fig:activations_defs_plot}) such as \gelu{} (used in GPT~\cite{Radford2018}, BERT~\cite{Devlin2019}, and Vision Transformers~\cite{Dosovitskiy2021}), \swish{} and its variants (appearing in models such as LLaMA~\cite{Touvron2023}, YOLOv4~\cite{Bochkovskiy2020}, and non-edge versions of YOLOv7~\cite{Wang2023Silu}), \mish{}, or adaptively learned functions (used in physics-informed neural networks (PINNs)~\cite{wang2023learning}), for which no simple closed-form relaxations are available.
For such functions, existing approaches typically construct \emph{static} linear relaxations using sampling, SMT solving, or Lipschitz-based global optimization. 
Although these techniques can yield tight relaxations, they are fundamentally ill-suited for extending to slope optimization: soundness must be re-established from scratch for each candidate slope, rendering gradient-based optimization prohibitively expensive.

Our key observation is that, for any activation function $f(x)$,
a sound upper linear relaxation can be obtained by fixing a slope 
$m$ and computing an offset
\begin{equation}
b(m) = \max_{x \in [l, u]} f(x) - m x \label{eq:max-shift}
\end{equation}
that shifts the line $m x$ to upper-bound $f$ for its domain $[l,u]$.

This can be computed efficiently, either given a closed form for the critical points\footnote{A critical point of Eq.~\eqref{eq:max-shift} is a point $x\! \in\! [l, u]$ where the derivative
of $f(x)\!-\!mx$ is zero, i.e. where the slope of  $f$ equals the candidate slope $m$.} of Eq.~\eqref{eq:max-shift}, or by optimizing over a sound piecewise linear (PWL) surrogate that we need to construct  \emph{only once} before gradient-based optimization.
We show that closed-form solutions are available for a broad class of activation functions, and for the remaining cases PWL surrogates can be computed using Lipschitz-based optimization.
Since any choice of the slope~$m$ yields a valid upper relaxation (or a lower relaxation when taking the minimum), we can optimize over the slope parameter $m$ to obtain relaxations that lead to tighter bounds of the network output.
Our \emph{Shifting-based Linear Relaxations (SLiR)} approach, illustrated in Fig.~\ref{fig:visual-abstract}, integrates directly with modern bound-propagation frameworks and supports slope-optimized verification for activation functions unsupported by existing methods.

Our contributions in this paper are as follows:
\begin{enumerate}[label=(\textbf{C\arabic*}), leftmargin=*, itemsep=0.3em]
\item We introduce a novel framework for \emph{optimizable} linear relaxations, based on parameterization and a shifting formalism, which is applicable to general activation functions
  (Section~\ref{sec:approach}).
    \item If no closed form for the critical points is available, we replace the original activation function by a piecewise linear envelope \emph{once} using Lipschitz optimization and then optimize over that (Section~\ref{subsec:pwl-envelopes}) for efficiency. 
    \item Improvements for the Lipschitz optimization method tailored to our problem setting (Section~\ref{subsec:pyavskii-optim}), resulting in tighter overapproximation and faster convergence.
    \item We derive an admissible range of slope values from the convex hull of the graph of the activation function and apply parameter transformation to significantly improve (or enable in the first place) convergence of the output bound optimization presented in this work (Section~\ref{sec:slope-opt}).
\end{enumerate}

Proofs for lemmas and propositions stated in our paper can be found in Appendix~\ref{sec:proofs}.
An artifact with our code and experimental evaluation is available online at: \url{https://zenodo.org/records/20133926}.

Our approach significantly reduces the effort required for implementing new activation functions --- requiring only a formula for computing the critical points (in the exact case) or an overapproximation of the Lipschitz constant: 
while relaxations for the set of shared univariate activation functions require 1,340 lines of code in $\alpha$-CROWN, only 154 lines of code are sufficient using our approach. 
Moreover, handcrafting custom relaxations can be complex and difficult to handle, even for domain experts.
For the $\gelu$ and $\tanh$ functions, our approach yields better initializations than the hand-crafted implementation in $\alpha$-CROWN (see Appendix~\ref{ssec:failure-cases}).

\paragraph{Related Work}
Constructing tight linear relaxations has received significant attention in recent years, as the effectiveness of many NN verification algorithms depends on the quality of these bounds.
Various works focus on synthesizing \emph{non-optimizable} relaxations for \emph{specific} activation functions.
In particular, \cite{Balunovic2019,Ryou2021,Laurel2023} follow a two-stage approach:
they first approximate the function over the input domain using sampling-based techniques, and then restore soundness using dedicated correction procedures.
While these methods can yield tight bounds, they are typically tailored to individual activation functions and do not directly support optimization.

Paulsen et al.~\cite{Paulsen2022linsyn} also employ a sampling-based approach combined with SMT-based verification to ensure soundness.
Although applicable to a broad class of activation functions, this procedure is computationally expensive, as soundness must be re-established for each relaxation.
In \cite{Paulsen2022example}, the authors propose a template-based approach combined with machine learning.
However, this method requires predefined ranges for the bounds and may fall back to less precise techniques outside these ranges.

Biktairov et al.~\cite{Biktairov2023} synthesize linear overapproximations that are close to optimal, but compute only a single relaxation rather than a family of parameterized bounds.
Similarly, Ma et al.~\cite{Ma2025} construct convex hulls for multi-neuron constraints, but do not support slope optimization.
Related but more distant are synthesis- and SMT-based approaches such as \cite{Wang2023}, which construct linear or mixed-integer overapproximations but are not designed for efficient optimization.

Overall, existing methods for constructing \emph{optimizable} linear relaxations remain difficult to apply in practice and often require substantial expert knowledge (see, e.g., \cite{Shi2025}).
Decomposition-based approaches, such as \cite{Xu21}, enable optimization by breaking functions into simpler components, but can introduce loose bounds and increased computational cost.
Methods based on sampling, SMT solving, or repeated (expensive) soundness checks
are likewise poorly suited for gradient-based optimization, since soundness must be re-established at every iteration.

In contrast, our approach decouples soundness from parameter optimization by construction.
By parameterizing relaxations via their slope and computing sound offsets through a shifting procedure, we obtain a family of valid relaxations that can be optimized efficiently for general activation functions.

In the next section we provide background on linear relaxations for NN verification.
SLiR extends this method by changing the construction of linear upper and lower relaxations for activation functions in Eq.~\eqref{eq:act-approx}.

\section{Background}
\label{sec:background}

\paragraph{Neural Networks} A feed-forward NN is a function $\NN: \R^{d_0} \to \R^{d_L}$ with an input-, an output- and $L-1$ hidden layers.
The $k$-th layer consists of $d_k$ neurons and its activation values $\vec{n}_k$ and preactivation values $\hat{\vec{n}}_k$ can be computed as
\begin{equation*}
    \hat{\vec{n}}_k = W_k \vec{n}_{k-1} + \vec{b}_k \qquad
    \vec{n}_k = f\!\left(\hat{\vec{n}}_k\right) \;,
\end{equation*}
where $W_k$ and $\vec{b}_k$ are the weight matrix and bias of the $k$-th layer,
and $f: \R \to \R$ is a non-linear activation function applied element-wise.
To compute $\vec{y} = \NN(\vec{x})$, we identify the input $\vec{x} \in \R^{d_0}$ with $\vec{n}_0$ and the output $\vec{y} \in \R^{d_L}$ with $\hat{\vec{n}}_L$.
This representation also covers convolutional networks, since convolutions can be represented as matrix multiplications \cite{DBLP:journals/corr/abs-2002-12920}.

\paragraph{NN Verification} The NN verification problem that we consider consists of proving properties of input-output relations of the network. 
These properties are expressed as linear inequalities $\vec{a}^T \vec{y} \leq d$ over the output $\vec{y} = \NN(\vec{x})$, which have to hold for all inputs $\vec{x}$ within a hyper-rectangle $\X \subseteq \R^{d_0}$.

Instead of directly proving these universally quantified properties, we consider the equivalent optimization problem
\begin{equation}
    v^* = \max_{\vec{x} \in \X} \ \vec{a}^T \vec{y} - d \quad
                        \textit{such that} \quad \vec{y} = \NN(\vec{x})\enspace,
     \label{eq:opt_nn}
\end{equation}
that finds the \emph{maximum violation} $v^*$ of the property $\vec{a}^T\vec{y}\leq d$
(properties can be checked independently).
If $v^*$ is non-positive, the property is proven safe;
otherwise, it is violated.

\paragraph{Linear Relaxations for NN Verification} 

This optimization problem, due to the activation functions, is highly non-convex in general and therefore difficult to solve exactly.
Thus, many approaches focus on solving a relaxation of Eq.~\eqref{eq:opt_nn}, where the activation functions are overapproximated using linear constraints, yielding a sound but incomplete linear programming (LP) formulation.
However, modern NN verifiers~(\cite{Singh19deeppoly,Xu2020,Zelazny2022}) do not rely on standard LP solvers:
Restricting the overapproximation of each activation function to exactly two constraints
\begin{equation}
  \lo{f}(x) := \lo{\alpha} x + \lo{\beta} \leq f(x) \leq \up{\alpha} x + \up{\beta} =: \up{f}(x) \label{eq:act-approx}  
\end{equation}
valid for all preactivation values $x \in [l, u]$, enables them to instead use a specialized bound propagation procedure.
This technique, referred to as \emph{backsubstitution}~(\cite{Singh19deeppoly,Xu2020,Wong2018provable}), computes the same bounds on $v^*$ as a standard LP solver on the two-constraint LP~(\cite{Salman2019barrier,Lyu2020fastened}) under mild conditions, but is significantly more efficient (see Appendix~\ref{sec:backsubs} for more details).

The preactivation bounds necessary to construct the linear relaxations given in Eq.~\eqref{eq:act-approx} can be computed by applying the same procedure recursively to bound preactivation values of earlier layers --- once from above and once from below.
We denote this procedure to compute
an upper bound $\up{v^*}$ of the maximum violation $v^*$ as %
\begin{align}
    \up{v^*} = \textsc{Backsubstitute}(\NN, \X) \;. \label{eq:fun-backsubs}
\end{align}
If $\up{v^*} \leq 0$, then the property is proven successfully; otherwise, no conclusions can be drawn.
Due to the recursive bounds computation, the runtime of this approach is in $\mathcal{O}(L^2)$. %

Concerning the linear upper and lower relaxations in Eq.~\eqref{eq:act-approx}, computed for each neuron,
most methods (\cite{Paulsen2022linsyn,Paulsen2022example,Biktairov2023,Singh19deeppoly}) choose $\lo{\alpha}, \lo{\beta}, \up{\alpha}$ and $\up{\beta}$ such that the area between $\up{\alpha} x + \up{\beta}$ and $\lo{\alpha} x + \lo{\beta}$ over a given bounded range of $x \in [l, u]$ is minimized.
However, while these relaxations are \emph{locally} optimal, they may not lead to the tightest possible concrete upper bound on $v^*$ achievable by linear relaxations \cite{Xu21}. 
Some approaches therefore parameterize the relaxations as
\[
\lo{f}(x \mid \theta)
= \lo{\alpha}(\theta)\, x + \lo{\beta}(\theta)
\qquad
\up{f}(x \mid \theta)
= \up{\alpha}(\theta)\, x + \up{\beta}(\theta)\;,
\]
where $\lo{\alpha}(\theta), \lo{\beta}(\theta), \up{\alpha}(\theta)$ and $\up{\beta}(\theta)$ yield slope and bias coefficients for valid lower and upper relaxations for each parameter~$\theta$.
The search for the smallest possible upper bound can then be expressed as another optimization problem $\Opt$ over the stacked parameters $\vec{\theta}$ for all neurons in $\NN$:
\begin{align}
    \Opt : \quad \min_{\vec{\theta}} ~ \textsc{Backsubstitute}(\NN, \X \mid \vec{\theta}) \;. \label{eq:bound-opt}
\end{align} %
This optimization problem can then be solved using gradient-based methods.
If multiple linear properties need to be verified, $\Opt$ minimizes the sum of their violations.

\section{General Approach}
\label{sec:approach}

\begin{figure}[t]
    \centering
    \includegraphics[width=0.8\linewidth]{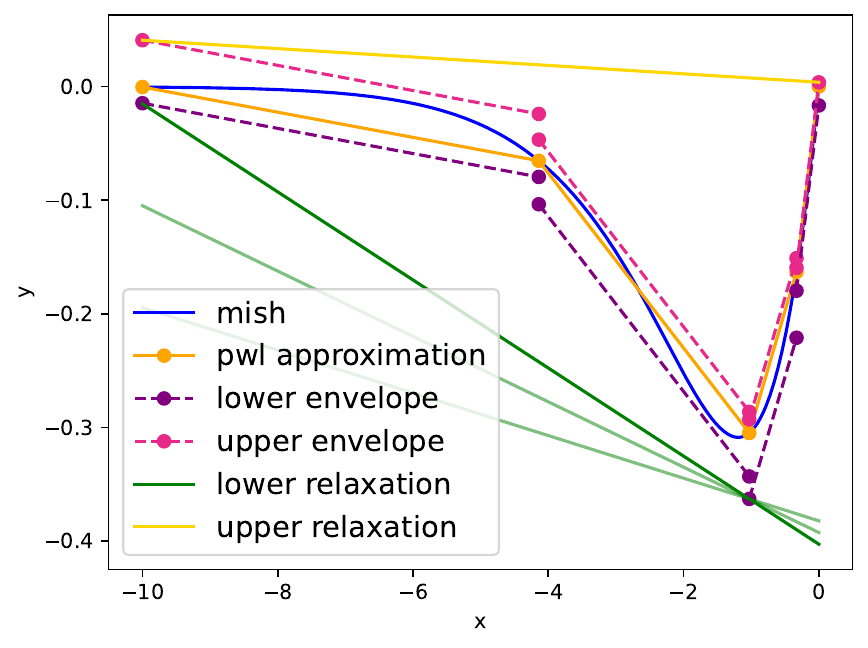}
    \caption{Illustrating our approach for \mish{}: (i) Fit a PWL approximation of the function, (ii) extend it to lower/upper envelopes, and  (iii) use them to construct linear relaxations.}
    \label{fig:overview}
\end{figure}

Solving problem $\Opt$ %
requires a parameterization of relaxations $\lo{f}(x \mid \theta) = \lo{\alpha}(\theta)x + \lo{\beta}(\theta)$ and $\up{f}(x \mid \theta) = \up{\alpha}(\theta)x + \up{\beta}(\theta)$ in the first place.
In our approach, we only optimize over the slope values $\lo{\alpha}(\theta)$ and $\up{\alpha}(\theta)$ ensuring soundness of the relaxations via \emph{shifting} up or down by a sufficient amount
\begin{align}
    \up{\beta}(\theta) = \max_{x \in [l, u]} f(x) - \up{\alpha}(\theta) \cdot x \;, \label{eq:problem-def}
\end{align}
where $[l, u]$ are the preactivation bounds for the neuron that needs to be relaxed.
Similarly, we set $\lo{\beta}(\theta)$ to the corresponding minimum value for the lower relaxation.

In cases --- as discussed in Sec. \ref{subsec:exact-shift} --- where the critical points of $f(x) - \up{\alpha}(\theta) \cdot x$ for $x \in [l, u]$ can be computed exactly, we can directly solve Eq.~$\eqref{eq:problem-def}$ to obtain sound relaxations.
Otherwise, the approximate solution to Eq.~\eqref{eq:problem-def} must guarantee soundness.
Prior work 
has already followed a similar approach using interval (\cite{Paulsen2022linsyn,Paulsen2022example}) or Lipschitz optimization \cite{Biktairov2023} as overapproximating methods.
However, calling these procedures for every neuron during every iteration of the gradient-descent process for $\Opt$ is prohibitively expensive.

Our solution to this problem is illustrated in Fig. \ref{fig:overview}:
Instead of calling these solvers in every iteration, we compute a PWL approximation and extend it to sound PWL lower and upper envelopes  before the first pass of the gradient-descent iteration (\Cref{sssec:envelopes}).
Then, we use the lower or upper envelopes as surrogates for the original activation function in Eq. \eqref{eq:problem-def}.
This approach is efficient, since optimization over PWL functions is cheap and guarantees soundness, as the lower and upper envelopes enclose the original activation function.

In both cases, (1) exact and (2) overapproximate optimization, our approach is easy to implement and to extend to new functions.
In the latter case, we only require a Lipschitz constant.
In the first case, we only need a formula (see \Cref{subsec:exact-shift} for further details) for computing the critical points.

In the remainder of this section, we use a simplified form of Eq.~\eqref{eq:problem-def}, disregarding the parametrization: 
$b=\max_{x\in[l,u]} f(x)-m x$, 
where $f$ is the activation function, $m$ a \emph{fixed} candidate slope, and $[l,u]$ the input domain.
\todo{Is this just simplified notation? Upper/lower min/max becomes a bit fuzzy by that.}

\subsection{Exact Shifting}
\label{subsec:exact-shift}

For some functions, we can exactly solve Problem~\eqref{eq:problem-def}.
This is the case for functions $f$ where the pre-image $f'^{-1}(y) = \left\{x \mid f'(x) = y\right\}$ of the derivative of $f$ is computable:
\begin{proposition}
    Let $f: \R \to \R$ be differentiable over the interval~$[l, u]$ and let $f'^{-1}(y) = \left\{x \mid f'(x) = y\right\}$ be the pre-image of the derivative of $f$. Then
    \[
    \begin{array}{lcl}
        b \!\!\!\! &=& \!\!\!\! \max_{x \in [l, u]} f(x) - m \cdot x \\[0.5ex]
          \!\!\!\! &=& \!\!\!\! \max \! \big\{ f(x^*\!) \! - \! m \!\cdot\! x^* \, | \, x^* \!\in\! \big(\{l, u \} \!\cup\! \big(f'^{-1}(m) \!\cap\! [l, u] \big)\big) \big\} \;. 
    \end{array}
    \]
    \label{prop:preim}
\end{proposition}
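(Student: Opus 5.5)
The plan is the classical closed-interval method (Fermat's theorem plus the extreme value theorem), applied to the auxiliary function $g(x) := f(x) - m\cdot x$ on $[l,u]$. Let $C := \{l,u\} \cup \big(f'^{-1}(m) \cap [l,u]\big)$. The argument will show that $\max_{x\in[l,u]} g(x)$ exists, that it is attained at some point of $C$, and that $C \subseteq [l,u]$. Together these give the claimed equality.

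\textbf{Existence of the maximum.} Since $f$ is differentiable on $[l,u]$, it is continuous there, and so is $g$. By the Weierstrass extreme value theorem, $g$ attains its maximum on the compact interval $[l,u]$ at some point $x_0$. The left-hand side $b$ is therefore a genuine maximum (not merely a supremum), and $b = g(x_0)$.

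\textbf{Localizing the maximizer.} I split into two cases.
\begin{itemize}
\item If $x_0 \in \{l,u\}$, then $x_0 \in C$ and nothing more is needed.
\item If $x_0 \in (l,u)$, then $x_0$ is an interior local maximum of the differentiable function $g$. By Fermat's theorem, $g'(x_0) = f'(x_0) - m = 0$. Hence $f'(x_0) = m$, that is, $x_0 \in f'^{-1}(m) \cap [l,u] \subseteq C$.
\end{itemize}
In both cases $b = g(x_0) \le \max_{x^*\in C} g(x^*)$. This maximum over $C$ is well defined: $C$ is nonempty since it contains $l$ and $u$, and the maximum is attained because $x_0 \in C$ and every element of $C$ has $g$-value at most $g(x_0)$. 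When $C$ is infinite, for instance when $f'$ equals $m$ on a whole subinterval, this attainment argument is still what justifies writing max rather than sup on the right-hand side.

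\textbf{Reverse inequality.} Every $x^* \in C$ lies in $[l,u]$, so $g(x^*) \le b$, which gives $\max_{x^*\in C} g(x^*) \le b$. Combining this with the previous step yields equality.

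There is no real obstacle here. The only point requiring care is interpreting ``differentiable over $[l,u]$'': Fermat's theorem is needed only at interior points, and continuity at the endpoints follows either from one-sided differentiability or from the usual reading of the hypothesis. If only differentiability on the open interval $(l,u)$ were intended, I would add continuity on $[l,u]$ as an explicit hypothesis, since without it the extreme value theorem does not apply.
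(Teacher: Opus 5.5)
Your proof is correct and follows the same route as the paper: a maximizer lies either at an endpoint or in the interior, and Fermat's theorem places any interior maximizer in $f'^{-1}(m)\cap[l,u]$. Beyond what the paper writes, you make explicit that the maximum exists (extreme value theorem), that the maximum over the candidate set is attained, and the reverse inequality.
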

If there is a closed form expression for $f'^{-1}(m)$ and the set $f'^{-1}(m) \cap [l, u]$ is finite, then we can compute the maximum by just evaluating a \emph{finite} number of points.
Many functions fit this category.
Examples include sigmoid, $\tanh$ and $\elu$ \cite{Clevert2015elu}:
\begin{align*}
    \sigma(x) \! = \frac{1}{1 \! + \! e^{-x}} ,
    \hspace{0.95em} \tanh(x) ,
    \hspace{0.95em} \mathrm{ELU}(x) \! = \!
    \begin{cases}
        x, & \!\!\! x > 0\\
        e^x \! - \! 1, & \!\!\! \text{otherwise}
    \end{cases}
\end{align*}
where the %
critical points can be computed using
\[\begin{array}{lcl}
  \multicolumn{3}{l}{\sigma'^{-1}(m) \! = \! \big\{
    \!\! -\!\log ( \frac{1}{s_i} \!-\! 1 ) \, \big| \, s_{1,2} = \frac{1}{2} \pm \sqrt{\frac{1}{4} - m}, ~m \!\in\! (0, \frac{1}{4}]
  \big\}} \\
    \tanh'^{-1}(m) \!\!\!\!\! &=& \!\!\!\! \left\{\arctan(s_i) \; \middle| \; s_{1,2} = \pm \sqrt{1 - m}, ~m \in (0, 1]\right\}\\
    \mathrm{ELU}'^{-1}(m) \!\!\!\!\! &=& \!\!\!\! \{0\} \cup \left\{ \log(m) \mid m > 0 \right\} \;.
\end{array}
\]
Further functions of this category are listed in Appendix~\ref{sec:closed-form-crit}.

\subsection{Verified Piecewise Linear Envelopes}
\label{subsec:pwl-envelopes}

If there is no closed form for $f'^{-1}(m)$, then we replace the function $f$ in Eq.~\eqref{eq:problem-def} with a sound upper PWL envelope $f_{\mathit{pwl}}$, and maximize $f_{\mathit{pwl}}(x) - m \cdot x$.

As a first step, we construct a PWL \emph{approximation} $\tilde{f}_{\mathit{pwl}}$
(that is not necessarily sound for verification)
and then apply a sufficiently large shift \emph{to each linear segment} to achieve a valid upper envelope $f_{\mathit{pwl}}$.
Our approach works for any $\tilde{f}_{\mathit{pwl}}$, however, its tightness will affect the tightness of the upper envelope and thus the linear relaxations.
Additionally, Eq.~\eqref{eq:problem-def} needs to be solved many times during gradient-based search for the best slopes.
Here, fewer linear segments lead to higher efficiency.
Therefore, we use an approach that tries to construct $\tilde{f}_{\mathit{pwl}}$ with linear segments that are maximally long while approximately maintaining a target approximation error (see Appendix~\ref{sssec:pwl-approx}).

\subsubsection*{Construction of Sound Envelopes}
\label{sssec:envelopes}

Given any PWL approximation $\tilde{f}_{\mathit{pwl}}(x)$ of an activation function $f: \R \to \R$, specified by the set of segment boundaries $\Xi = \left\{ \xi_i \mid i = 1, \dots, n+1 \right\}$, we compute an overapproximation
\begin{align}
    \hat{y}_i^* \geq \max_{x \in [\xi_i, \xi_{i+1}]} f(x) - \gamma_i \cdot x, \quad \forall i = 1, \dots, n
    \label{eq:overapprox-max}
\end{align}
of the shifting distance for $\gamma_i \cdot x$ to be a valid overapproximation of $f(x)$ over the interval $[\xi_i, \xi_{i+1}]$, where $\gamma_i \cdot x + \delta_i$ is the $i$-th linear segment of $\tilde{f}_{\mathit{pwl}}$ over the interval $[\xi_i, \xi_{i+1}]$.
The extension of the approximation $\tilde{f}_{\mathit{pwl}}$ to a verified upper envelope is then
\begin{align}
    f_{\mathit{pwl}}(x) = \begin{cases}
        \gamma_i \cdot x + \hat{y}_i^* , &x \in [\xi_i, \xi_{i+1}), ~i \leq n \\
        \gamma_{n} \cdot x + \hat{y}_{n}^* , &x = \xi_{n+1}\\
        \bot , & \text{otherwise} \; .
    \end{cases}
\end{align}
Similarly, we use an underapproximation of the minimum to construct a valid lower envelope.

Note that, due to different $\hat{y}_i^*$ in neighboring segments, $f_{\mathit{pwl}}$ is not necessarily continuous (see Fig.~\ref{fig:overview}).
This does not affect correctness, since verification only requires pointwise overapproximation of the activation function.

To compute the overapproximate shift $\hat{y}_i^*$, we use an extension to the Lipschitz-based Piyavskii algorithm
\cite{PIYAVSKII197257}, which we introduce in Section \ref{subsec:pyavskii-optim}.
This kind of overapproximate optimization methods can often compute tighter bounds on smaller intervals.
When the segment bounds $[\xi_i, \xi_{i+1}]$ of $\tilde{f}_{\mathit{pwl}}$ are smaller than the domains defined by the pre-activation bounds $[l_j, u_j]$ of each neuron $j$, our approach can benefit from tighter bounds compared to approaches that require verified optimization of $f(x) - m \cdot x$ for each neuron in a NN (\cite{Paulsen2022linsyn,Biktairov2023}).

Additionally, the number of optimization problems we have to solve only depends on the width $[l, u] \!=\! [\min_j l_j, \max_j u_j]$ of the pre-activation bounds of all neurons $j$ in a layer of the NN and the corresponding number of linear segments of $\tilde{f}_{\mathit{pwl}}$.
This can be advantageous when the number of neurons is large compared to the number of linear segments of~$\tilde{f}_{\mathit{pwl}}$.

\subsubsection*{Optimization over PWL Envelopes}
\label{sssec:opt-envelopes}

Because $f_{\mathit{pwl}}$ is not necessarily continuous, it is not sufficient to consider just $\{f_{\mathit{pwl}}(x) - m \cdot x \mid x \in \{l, u, \xi_1, ..., \xi_{n+1}\} \cap [l, u]\}$, which only evaluates each linear segment for its left boundary point, as candidates for the optimal value of Eq.~\eqref{eq:problem-def}.
Instead, the set containing evaluations of both boundary points for each segment
\begin{align}
    &\{\gamma_1 \cdot l + \hat{y}_1^* - m \cdot l, ~~ \gamma_n \cdot u + \hat{y}_n^* - m \cdot u\}\\ 
    \cup ~ &\{\gamma_i \cdot \xi_i + \hat{y}_i^*  - m \cdot \xi_i \mid i \in 1,...,n\}\\
    \cup ~ &\{\gamma_i \cdot \xi_{i+1} + \hat{y}_i^*  - m \cdot \xi_{i+1} \mid i \in 1,...,n\}
\end{align}
is guaranteed to contain the optimal value.
Here, the indices $1,...,n$ correspond to the segments intersecting with $[l, u]$.
Using that the linear terms $(\gamma_i - m) \cdot x + \hat{y}_i^*$ are monotonic, we can halve the number of boundary points that need to be evaluated.
The maximum value over each linear segment is then
\begin{align}
    \max_{x \in [\hat{l}, \hat{u}]} \! (\gamma_i \!-\! m) \!\cdot\! x + \hat{y}_i^* \!=\! \begin{cases}
        \!(\gamma_i \!-\! m) \!\cdot\! \hat{l} \!+\! \hat{y}_i^*, & \!\!\!  \gamma_i \!-\! m \leq 0 \\
        \!(\gamma_i \!-\! m) \!\cdot\! \hat{u} \!+\! \hat{y}_i^*, & \!\!\! \text{otherwise}
    \end{cases}
\end{align}
where $[\hat{l}, \hat{u}]\! =\! [l, u]\! \cap\! [\xi_i, \xi_{i+1}]$.
Minimization is analogous.

\subsection{Piyavskii Optimization}
\label{subsec:pyavskii-optim}

\begin{figure*}[t]
    \centering
    \begin{subfigure}[t]{0.45\textwidth}
        \centering
        \begin{tikzpicture}
\begin{axis}[
    axis lines=middle,
    xlabel={$x$},
    ylabel={$g(x)$},
    xlabel style={anchor=west},
    ylabel style={anchor=south},
    xmin=-4.50, xmax=2.50,
    ymin=-3.5, ymax=5.5,
    samples=400,
    domain=-4.00:2.00,
    xtick=\empty,
    ytick=\empty,
    clip=false,
    yscale=0.65
]

\pgfmathsetmacro{\LB}{-4.0}
\pgfmathsetmacro{\UB}{+2.0}
\pgfmathsetmacro{\Lsw}{1.0908}     %
\pgfmathsetmacro{\slope}{0.4}      %
\pgfmathsetmacro{\Lg}{\Lsw + abs(\slope)} %

\pgfmathdeclarefunction{f}{1}{%
  \pgfmathparse{#1/(1 + exp(-#1))}%
}
\pgfmathdeclarefunction{g}{1}{%
  \pgfmathparse{f(#1) - \slope*(#1)}%
}

\pgfmathsetmacro{\fLB}{f(\LB)}
\pgfmathsetmacro{\fUB}{f(\UB)}
\pgfmathsetmacro{\gLB}{g(\LB)}
\pgfmathsetmacro{\gUB}{g(\UB)}

\addplot[
    blue!60,
    dashed,
    opacity=0.5,
    very thick
] coordinates {
    (\LB,\pgfkeysvalueof{/pgfplots/ymin})
    (\LB,\pgfkeysvalueof{/pgfplots/ymax})
};

\addplot[
    blue!60,
    dashed,
    opacity=0.5,
    very thick
] coordinates {
    (\UB,\pgfkeysvalueof{/pgfplots/ymin})
    (\UB,\pgfkeysvalueof{/pgfplots/ymax})
};

\addplot[very thick, blue!70!black, domain=\LB:\UB] {g(x)};
\addplot[only marks, mark=*, mark size=2.6pt, blue!70!black]
  coordinates {(\LB,\gLB) (\UB,\gUB)};

\node[blue!70!black, anchor=south east]
at (axis cs:\LB,\gLB)
{\scriptsize$(\ell,\, g(\ell))$};

\node[blue!70!black, anchor=south west]
at (axis cs:\UB,\gUB)
{\scriptsize$(u,\, g(u))$};

\pgfmathsetmacro{\xP}{(\gUB - \gLB + \Lg*(\LB + \UB)) / (2*\Lg)}
\pgfmathsetmacro{\yP}{\gLB + \Lg*(\xP - \LB)}

\pgfmathsetmacro{\xO}{(\gLB - \gUB + \Lg*(\LB + \UB)) / (2*\Lg)}
\pgfmathsetmacro{\yO}{\gLB - \Lg*(\xO - \LB)}

\addplot[draw=none, fill=violet, opacity=0.13] coordinates
  {(\LB,\gLB) (\xP,\yP) (\UB,\gUB) (\xO,\yO)} -- cycle;

\addplot[very thick, violet, domain=\LB:\xP]
  {\gLB + \Lg*(x - \LB)};
\addplot[very thick, violet, domain=\xP:\UB]
  {\gUB - \Lg*(x - \UB)};

\addplot[very thick, violet, domain=\LB:\xO]
  {\gLB - \Lg*(x - \LB)};
\addplot[very thick, violet, domain=\xO:\UB]
  {\gUB + \Lg*(x - \UB)};

\pgfmathsetmacro{\xU}{(\fUB - \fLB + \Lsw*(\LB + \UB)) / (2*\Lsw)}
\pgfmathsetmacro{\xL}{(\fLB - \fUB + \Lsw*(\LB + \UB)) / (2*\Lsw)}

\pgfmathsetmacro{\yU}{\fLB + \Lsw*(\xU - \LB) - \slope*\xU}
\pgfmathsetmacro{\yL}{\fLB - \Lsw*(\xL - \LB) - \slope*\xL}

\addplot[draw=none, fill=orange, opacity=0.10] coordinates
  {(\LB,\gLB) (\xU,\yU) (\UB,\gUB) (\xL,\yL)} -- cycle;

\addplot[very thick, dashed, orange, domain=\LB:\xU]
  {\fLB + \Lsw*(x - \LB) - \slope*x};
\addplot[very thick, dashed, orange, domain=\xU:\UB]
  {\fUB + \Lsw*(\UB - x) - \slope*x};

\addplot[very thick, dashed, orange, domain=\LB:\xL]
  {\fLB - \Lsw*(x - \LB) - \slope*x};
\addplot[very thick, dashed, orange, domain=\xL:\UB]
  {\fUB - \Lsw*(\UB - x) - \slope*x};
\def\EqSize{\scriptsize}

\node[orange!80!black, rotate=-32.5, anchor=north,
      xshift=-0pt, yshift=-0pt]
at (axis cs:{0.5*(\LB+\xL)},
    {\fLB - \Lsw*(0.5*(\xL-\LB)) - \slope*(0.5*(\LB+\xL))})
{\EqSize$\displaystyle f(\ell)-L_f(x-\ell)-\gamma x$};

\node[orange!80!black, rotate=-32.5, anchor=south,
      xshift=0pt, yshift=0pt]
at (axis cs:{0.5*(\xU+\UB)},
    {\fUB + \Lsw*(\UB-0.5*(\xU+\UB)) - \slope*(0.5*(\xU+\UB))})
{\EqSize$\displaystyle f(u)+L_f(u-x)-\gamma x$};

\node[violet!90!black, rotate=32.5, anchor=south,
      xshift=-0pt, yshift=0pt]
at (axis cs:{0.5*(\LB+\xP)},
    {\gLB + \Lg*(0.5*(\xP-\LB))})
{\EqSize$\displaystyle f(\ell)-\gamma\ell+(L_f+|\gamma|)(x-\ell)$};

\node[orange!80!black, rotate=15, anchor=north,
      xshift=0pt, yshift=-0pt]
at (axis cs:{0.5*(\LB+\xU)},
    {\fLB + \Lsw*(0.5*(\xU-\LB)) - \slope*(0.5*(\LB+\xU))})
{\EqSize$\displaystyle f(\ell)+L_f(x-\ell)-\gamma x$};

\node[violet!90!black, rotate=32.5, anchor=north,
      xshift=0pt, yshift=-0pt]
at (axis cs:{0.65*(\xO+\UB)},
    {\gUB + \Lg*(0.65*(\xO+\UB)-\UB)})
{\EqSize$\displaystyle f(u)-\gamma u+(L_f+|\gamma|)(x-u)$};

\node[orange!80!black, rotate=15.0, anchor=south,
      xshift=-0pt, yshift=0pt]
at (axis cs:{0.65*(\xL+\UB)},
    {\fUB - \Lsw*(\UB-0.65*(\xL+\UB)) - \slope*(0.65*(\xL+\UB))})
{\EqSize$\displaystyle f(u)-L_f(u-x)-\gamma x$};

\end{axis}
\end{tikzpicture}
        \label{fig:lipschitz-envelope}
    \end{subfigure}\hfill
    \begin{subfigure}[t]{0.45\textwidth}
        \centering
        \begin{tikzpicture}
\begin{axis}[
    axis lines=middle,
    xlabel={$x$},
    ylabel={$g(x)$},
    xlabel style={anchor=west},
    ylabel style={anchor=south},
    xmin=-8.5, xmax=2.5,
    ymin=-2.75, ymax=2.25,
    samples=100,
    domain=-8:2,
    xtick=\empty,
    ytick=\empty,
    clip=false,
    yscale=0.65
]

\pgfmathsetmacro{\LB}{-8}
\pgfmathsetmacro{\UB}{ 2}
\pgfmathsetmacro{\MB}{0.5*(\LB+\UB)} %

\pgfmathdeclarefunction{sig}{1}{%
  \pgfmathparse{1/(1 + exp(-#1))}%
}
\pgfmathdeclarefunction{swish}{1}{%
  \pgfmathparse{(#1) * sig(#1)}%
}
\pgfmathdeclarefunction{dswish}{1}{%
  \pgfmathparse{sig(#1) + (#1)*sig(#1)*(1 - sig(#1))}%
}
\pgfmathdeclarefunction{dabs}{1}{%
  \pgfmathparse{abs(dswish(#1))}%
}

\foreach \c in {-8,-3,2}{
  \addplot[dashed, black!25]
    coordinates {(\c,\pgfkeysvalueof{/pgfplots/ymin})
                 (\c,\pgfkeysvalueof{/pgfplots/ymax})};
}

\pgfmathsetmacro{\Lone}{dabs(-5.0)} %
\pgfmathsetmacro{\Ltwo}{dabs(-3.0)} %
\pgfmathsetmacro{\Lthr}{0.09986}    %
\pgfmathsetmacro{\Lfor}{0.50}       %
\pgfmathsetmacro{\Lfiv}{dabs(1.0)}  %
\pgfmathsetmacro{\Lsix}{dabs(2.0)}  %

\pgfmathsetmacro{\Lglob}{max(\Lone, max(\Ltwo, max(\Lthr, max(\Lfor, max(\Lfiv, \Lsix)))))}

\pgfmathsetmacro{\LlocLeft}{max(\Lone,\Ltwo)}
\pgfmathsetmacro{\LlocRight}{max(\Lthr, max(\Lfor, max(\Lfiv,\Lsix)))}

\newcommand{\DrawMinorant}[4]{%
  \pgfmathsetmacro{\a}{#1}
  \pgfmathsetmacro{\b}{#2}
  \pgfmathsetmacro{\L}{#3}
  \pgfmathsetmacro{\fa}{swish(\a)}
  \pgfmathsetmacro{\fb}{swish(\b)}
  \pgfmathsetmacro{\xk}{(\fa - \fb + \L*(\a + \b)) / (2*\L)}
  \addplot[#4, domain=\a:\xk] {\fa - \L*(x - \a)};
  \addplot[#4, domain=\xk:\b] {\fb - \L*(\b - x)};
}

\addplot[very thick, blue!70!black, domain=\LB:\UB] {swish(x)};

\pgfmathsetmacro{\fLB}{swish(\LB)}
\pgfmathsetmacro{\fMB}{swish(\MB)}
\pgfmathsetmacro{\fUB}{swish(\UB)}
\addplot[only marks, mark=*, mark size=2.2pt, blue!70!black]
  coordinates {(\LB,\fLB) (\MB,\fMB) (\UB,\fUB)};

\node[anchor=south, yshift=1pt] at (axis cs:\LB,\fLB) {$x_1$};
\node[anchor=south, yshift=4pt] at (axis cs:\MB,\fMB) {$x_3$};
\node[anchor=south, yshift=2pt] at (axis cs:\UB,\fUB) {$x_2$};

\DrawMinorant{\LB}{\MB}{\Lglob}{very thick, dashed, violet}
\DrawMinorant{\MB}{\UB}{\Lglob}{very thick, dashed, violet}

\DrawMinorant{\LB}{\MB}{\LlocLeft}{thick, dashed, dash phase=2pt, orange}
\DrawMinorant{\MB}{\UB}{\LlocRight}{thick, dashed, dash phase=2pt, orange}

\pgfmathsetmacro{\faG}{swish(\LB)}
\pgfmathsetmacro{\fbG}{swish(\MB)}
\pgfmathsetmacro{\fcG}{swish(\UB)}

\pgfmathsetmacro{\xkGL}{(\faG - \fbG + \Lglob*(\LB + \MB)) / (2*\Lglob)}
\pgfmathsetmacro{\ykGL}{\faG - \Lglob*(\xkGL - \LB)}

\pgfmathsetmacro{\xkGR}{(\fbG - \fcG + \Lglob*(\MB + \UB)) / (2*\Lglob)}
\pgfmathsetmacro{\ykGR}{\fbG - \Lglob*(\xkGR - \MB)}

\pgfmathsetmacro{\xNewG}{ifthenelse(\ykGL < \ykGR, \xkGL, \xkGR)}
\pgfmathsetmacro{\yNewGm}{min(\ykGL,\ykGR)}      %
\pgfmathsetmacro{\yNewG}{swish(\xNewG)}          %

\addplot[only marks, mark=*, mark size=2.4pt, red!80!black]
  coordinates {(\xNewG,\yNewG)};
\node[anchor=south, yshift=1pt, text=violet]
  at (axis cs:\xNewG,\yNewG) {$x_4$};

\addplot[only marks, mark=*, mark size=2.2pt, violet]
  coordinates {(\xNewG,\yNewGm)};

\pgfmathsetmacro{\isLeftG}{ifthenelse(\ykGL < \ykGR, 1, 0)}
\pgfmathsetmacro{\aG}{ifthenelse(\isLeftG==1, \LB, \MB)}
\pgfmathsetmacro{\bG}{ifthenelse(\isLeftG==1, \MB, \UB)}
\DrawMinorant{\aG}{\xNewG}{\Lglob}{thick, dotted, violet}
\DrawMinorant{\xNewG}{\bG}{\Lglob}{thick, dotted, violet}

\pgfmathsetmacro{\xkLL}{(\faG - \fbG + \LlocLeft*(\LB + \MB)) / (2*\LlocLeft)}
\pgfmathsetmacro{\ykLL}{\faG - \LlocLeft*(\xkLL - \LB)}

\pgfmathsetmacro{\xkLR}{(\fbG - \fcG + \LlocRight*(\MB + \UB)) / (2*\LlocRight)}
\pgfmathsetmacro{\ykLR}{\fbG - \LlocRight*(\xkLR - \MB)}

\pgfmathsetmacro{\xNewL}{ifthenelse(\ykLL < \ykLR, \xkLL, \xkLR)}
\pgfmathsetmacro{\yNewLm}{min(\ykLL,\ykLR)}      %
\pgfmathsetmacro{\yNewL}{swish(\xNewL)}          %

\addplot[only marks, mark=*, mark size=2.4pt, red!80!black]
  coordinates {(\xNewL,\yNewL)};
\node[anchor=south, yshift=8pt, text=orange]
  at (axis cs:\xNewL,\yNewL) {$x_4$};

\addplot[only marks, mark=*, mark size=2.2pt, orange]
  coordinates {(\xNewL,\yNewLm)};

\pgfmathsetmacro{\isLeftL}{ifthenelse(\ykLL < \ykLR, 1, 0)}
\pgfmathsetmacro{\aL}{ifthenelse(\isLeftL==1, \LB, \MB)}
\pgfmathsetmacro{\bL}{ifthenelse(\isLeftL==1, \MB, \UB)}
\pgfmathsetmacro{\LLuse}{ifthenelse(\isLeftL==1, \LlocLeft, \LlocRight)}
\DrawMinorant{\aL}{\xNewL}{\LLuse}{thick, dotted, orange}
\DrawMinorant{\xNewL}{\bL}{\LLuse}{thick, dotted, orange}

\end{axis}
\end{tikzpicture}
        \label{fig:lipschitz-minorant}
    \end{subfigure}
    \caption{Lipschitz envelope and minorant of an objective function~$g(x) = f(x) - \gamma x$. Left: Construction of a straightforward Lipschitz envelope (purple/solid) and of the improved envelope (orange/dashed). Right: Lipschitz minorant constructed using a global Lipschitz constant (purple/dashed) and local Lipschitz constants (orange/dashed). The next trial points $x_4$ obtained from both minorants are shown. Dotted lines (purple and orange) show the minorant segments of the next iteration.}
    \label{fig:Lipschitz-example}
\end{figure*}

For computing $\hat{y}_i^*$ of Eq.~\eqref{eq:overapprox-max}, we resort to Lipschitz optimization methods.
These methods assume that the objective function $g$ is Lipschitz-continuous over the domain $[l, u]$, i.e.,
\begin{equation}
    \abs{g(x) - g(y)} \leq L_g \abs{x - y} \quad \forall x,y \in [l, u] \; ,
\end{equation}
for some known constant $L_g \in \R_{\geq 0}$.

The optimization method implemented in our framework is a tensor-friendly and efficient implementation of the Piyavskii--Schubert method (sawtooth method)~\cite{PIYAVSKII197257}.
Using this method, by underapproximating the minimum (overapproximating the maximum), we compute a sound lower (or upper) bound $\hat{y}_i^*$ for $g(x) = f(x) - \gamma_i x$ over $[\xi_i, \xi_{i+1}]$.
In the remainder of this section we focus on the minimization case --- maximization can be achieved by minimizing $g(x) = \gamma_i x - f(x)$. %

The intuition behind the Piyavskii method is that, given the Lipschitz constant $L_g$ (or a sound overapproximation thereof), one can construct a straightforward lower Lipschitz envelope between two sample points $x_l^i < x_r^i$ via linear lower bounds 
\begin{align}
    g(x) \!\geq\! -L_g (x \!-\! x_l^i) \!+\! g(x_l^i) && g(x) \!\geq\! L_g (x \!-\! x_r^i) \!+\! g(x_r^i) \label{eq:lipschitz-envelope}
\end{align}
(and similarly for the upper envelope) that forms a sound underpproximation of the objective function (see left sub-figure of Fig.~\ref{fig:Lipschitz-example}) without requiring an in-depth analytical treatment of $g$.
Based on this envelope, a lower piecewise linear approximation, referred to as a \emph{minorant} (see right sub-figure of Fig.~\ref{fig:Lipschitz-example}), can be constructed over a finite decomposition of the domain $[l, u]$.
The Piyavskii algorithm iteratively refines this minorant until its minimum is $\varepsilon$-close to the function $g$.
We provide pseudocode in Alg.~\ref{alg:pyavskii}.

The \textsc{initialize} method (\cref{line:init}) builds an initial minorant with $n_{ini}-1$ segments by sampling $n_{ini}$ uniformly spaced points in the interval $[l,u]$.
Each minorant segment $i$ is characterized by its left and right endpoints $x_l^i$ and $x_r^i$, their corresponding trial values $z_l^i = g(x_l^i)$ and $z_r^i = g(x_r^i)$, and the minorant intersection point $(x_m^i, z_m^i)$, computed as the intersection of the linear functions in Eq.~\eqref{eq:lipschitz-envelope}
\begin{equation}
    \begin{aligned}
    x_m^i = \frac{x_l^i + x_r^i}{2} - \frac{z_r^i - z_l^i}{2L_g},
    &&
    z_m^i = L_g (x_m^i - x_r^i) + z_r^i \; .
    \end{aligned}
    \label{eq:minorant-intersection}
\end{equation}

The \textsc{improve} step (\cref{line:improve}) refines the current approximation by splitting segment~$i$ into two subsegments via inserting a new trial point at the minimizer of the current minorant, denoted by~$x_m^i$.
The objective function is evaluated at this point, yielding $z_m^i = g(x_m^i)$, and the original segment $[x_l^i, x_r^i]$ is replaced by the two segments $[x_l^i, x_m^i]$ and $[x_m^i, x_r^i]$.
For each of the newly created segments, the corresponding minorant intersection point $(x_m^k, z_m^k)$ is computed according to Eq.~ \eqref{eq:minorant-intersection}.
The algorithm iteratively applies this refinement by selecting, at each step, the segment~$i$ whose current minorant minimum $z_m^i$ is smallest.
The algorithm terminates either when a predefined maximum number of iterations is reached or when the difference between the minimum value of the minorant $z_m^i$ and the objective function $g$ is at most~$\varepsilon$.
Upon termination, the algorithm returns a sound lower bound obtained as the minimum of the minorant, that is $\mathrm{min}\left\{z_m^i\right\}$ (\cref{line:min5-1,line:min5-2}) for the normal Piyavskii optimization, but $\mathrm{min}\left\{g(l), z_m^i, g(u)\right\}$ if we consider our improved envelope described below.

\begin{algorithm}[t]
    \begin{algorithmic}[1]
        \Require Function $g(x)$, domain $[l, u]$, Lipschitz constant $L_g$ %
        \Ensure Sound underestimation $\hat{y}^*_i$
            \State $\vec{x}_l, \vec{z}_l, \vec{x}_r, \vec{z}_r, \vec{x}_m, \vec{z}_m \leftarrow \textsc{initialize}(g, L_g, l, u, n_{ini})$  \alglabel{line:init}
            \For{$k \gets 1,\dots,max\_iter$}
                \State $i \leftarrow \argmin_i{z_m^i}$  \Comment{index of minimal segment}
                \If{$\abs{z_m^i - g(x_m^i)} \leq \varepsilon $}
                    \State \Return $\mathrm{min}\left\{g(l), z_m^i, g(u)\right\}$ \alglabel{line:min5-1}
                \EndIf
                \State $\textsc{improve}(i, g, L_g, \vec{x}_l, \vec{z}_l, \vec{x}_r, \vec{z}_r, \vec{x}_m, \vec{z}_m)$ \alglabel{line:improve}
            \EndFor

            \State $i \leftarrow \argmin_i{z_m^i}$
            \State \Return $\mathrm{min}\left\{g(l), z_m^i, g(u)\right\}$ \alglabel{line:min5-2}
    \end{algorithmic}
    \caption{Pseudocode of the Piyavskii method.}
    \label{alg:pyavskii}
\end{algorithm}

\paragraph{Improved Piyavskii} We add two improvements to the Piyavskii method that increase its efficiency.
First, we allow the practitioner to provide a piecewise constant overestimation of the Lipschitz constant of the activation function $f$
\begin{align}
    L_f(x) = \begin{cases}
        L_f^i &, x \in [a_i, a_{i+1}) \\
        \bot  &, \text{otherwise} \;,
    \end{cases}
\end{align}
for some interval bounds $a_i$, rather than a single global con\-stant over the entire domain.
That is, for each segment $[x_l^i, x_r^i]$, a tighter local Lipschitz constant $\max_{x \in [x_l^i, x_r^i]} L_f(x)$ can be used, yielding a tighter minorant (right sub-figure of Fig.~\ref{fig:Lipschitz-example}).

Secondly, we exploit that we apply Lipschitz optimization to the specific case of optimizing the difference of a general function and a linear function: $g(x)\! =\! f(x) - \gamma_i x$.
Traditional Lipschitz optimization would infer the global Lipschitz constant for $g$ as $L_g = L_f + |\gamma_i|$, where $L_f$ is a Lipschitz constant of $f$ (see Proposition 2.3.3 of~\cite{cobzacs2019lipschitz}).
However, in our case, we can derive tighter lower bounds for $x \in [x_l, x_r]$ directly from the Lipschitz envelope of~$f$:
\begin{align*}
f(x) \!\ge\! -L_f (x - x_l) + f(x_l)
\quad
f(x) \!\ge\!  L_f (x - x_r) + f(x_r) \; ,
\end{align*}
and subtracting $\gamma_i x$ from both sides we obtain %
the following valid lower bounds for $g(x)$:
\begin{align}
g(x) = f(x) - \gamma_i x & \ge -L_f (x - x_l) + f(x_l) - \gamma_i x \label{eq:improved-bounds-1}\\
g(x) = f(x) - \gamma_i x & \ge L_f (x - x_r) + f(x_r) - \gamma_i x \;. \label{eq:improved-bounds-2} 
\end{align}

These linear bounds define an envelope (see left sub-figure of Fig.~\ref{fig:Lipschitz-example}) that provably improves tightness:
\begin{proposition}
Let $f:[x_l,x_r]\to\mathbb{R}$ be Lipschitz continuous with constant $L_f$ and $g(x)=f(x)-\gamma x$.
Let the \emph{straightforward} Lipschitz envelope use $L_g=L_f+|\gamma|$, and let the
\emph{improved} envelope be obtained by applying the Lipschitz bounds to $f$
and subtracting $\gamma x$.

Then, for all $x\in[x_l,x_r]$, the improved lower envelope
\begin{align}
    g(x)\ge f(x_l)-L_f(x-x_l) - \gamma x&=:\ell^{\mathrm{imp}}_l(x) \\
    g(x)\ge f(x_r)+L_f(x-x_r) - \gamma x&=:\ell^{\mathrm{imp}}_r(x)
\end{align}

is pointwise at least as tight as
the straightforward one:
\begin{align}
    g(x)\ge g(x_l)-L_g(x-x_l)&=:\ell^{\mathrm{std}}_l(x) \\
    g(x)\ge g(x_r)+L_g(x-x_r)&=:\ell^{\mathrm{std}}_r(x) \;,
\end{align}
i.e., $\ell^{\mathrm{imp}}_l(x) \geq \ell^{\mathrm{std}}_l(x)$ and $\ell^{\mathrm{imp}}_r(x) \geq \ell^{\mathrm{std}}_r(x), \forall x\in[x_l,x_r]$.

Moreover, if $\gamma>0$, the bottom-right
segment~$\ell^{\mathrm{imp}}_r(x)$ is strictly tighter while the bottom-left coincide; if $\gamma<0$,
the bottom-left segment~$\ell^{\mathrm{imp}}_l(x)$ is strictly tighter while the bottom-right coincide.
\label{prop:imp-env}
\end{proposition}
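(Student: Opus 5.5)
The plan is a direct computation: for each of the two bounds, subtract the straightforward envelope from the improved one. The unknown value $f(x)$ never appears in these differences, and $f(x_l)$, $f(x_r)$ cancel, leaving an expression in $\gamma$, $|\gamma|$ and the distance to the anchor point whose sign can be read off directly.

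First I will confirm that both families really are lower bounds on $g$. The improved bounds follow from Lipschitz continuity of $f$ on $[x_l,x_r]$, namely $f(x)\ge f(x_l)-L_f(x-x_l)$ and $f(x)\ge f(x_r)-L_f(x_r-x)$, after subtracting $\gamma x$. This is exactly Eqs.~\eqref{eq:improved-bounds-1}--\eqref{eq:improved-bounds-2}. The straightforward bounds follow because $g$ is Lipschitz with constant $L_g=L_f+|\gamma|$, as a sum of Lipschitz functions.

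Next I will expand $\ell^{\mathrm{std}}_l(x)=f(x_l)-\gamma x_l-(L_f+|\gamma|)(x-x_l)$ and subtract it from $\ell^{\mathrm{imp}}_l(x)$. The terms $f(x_l)$ and $L_f(x-x_l)$ cancel, and $-\gamma x+\gamma x_l=-\gamma(x-x_l)$, so
\[
\ell^{\mathrm{imp}}_l(x)-\ell^{\mathrm{std}}_l(x)=(|\gamma|-\gamma)(x-x_l).
\]
In the same way, using $\ell^{\mathrm{std}}_r(x)=f(x_r)-\gamma x_r+(L_f+|\gamma|)(x-x_r)$,
\[
\ell^{\mathrm{imp}}_r(x)-\ell^{\mathrm{std}}_r(x)=(\gamma+|\gamma|)(x_r-x).
\]
Both factors $|\gamma|\mp\gamma$ are nonnegative. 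For $x\in[x_l,x_r]$ we also have $x-x_l\ge 0$ and $x_r-x\ge 0$. This gives the pointwise inequalities.

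For the \emph{moreover} part I will split on the sign of $\gamma$. If $\gamma>0$, the left difference is $0$, so the left bounds coincide, and the right difference is $2\gamma(x_r-x)$. If $\gamma<0$, the right difference is $0$, and the left difference is $-2\gamma(x-x_l)=2|\gamma|(x-x_l)$.

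The only delicate point is the word ``strictly''. The improvement vanishes at the anchor point itself, where both envelopes equal $g(x_r)$ (respectively $g(x_l)$). So strictness holds on $[x_l,x_r)$ (respectively $(x_l,x_r]$), meaning the segment is strictly tighter everywhere except at its own anchor. I will state this qualification explicitly.
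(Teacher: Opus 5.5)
Your proposal is correct and is essentially the paper's proof: you subtract the straightforward bounds from the improved ones directly. The paper splits into the cases $\gamma\ge 0$ and $\gamma\le 0$ before computing, whereas you get both cases at once from $(|\gamma|-\gamma)(x-x_l)$ and $(\gamma+|\gamma|)(x_r-x)$; your strictness caveat (strict except at the anchor point) matches the paper's.
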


A similar statement holds for the upper improved Lipschitz envelope.
In this case, the top-left segment is strictly tighter for $\gamma > 0$, while for $\gamma < 0$, the top-right segment is strictly tighter than the standard Lipschitz envelope.

Note that, in the standard Piyavskii method (Eq.~\eqref{eq:lipschitz-envelope}) the bounds have different slope signs, which guarantees that the minimum of the minorant is at the intersection point $x_m$ of the bounds.
However, in Eq.~\eqref{eq:improved-bounds-1} and \eqref{eq:improved-bounds-2} we might have matching slope signs, so the true minimum of the minorant might be also at the endpoints $g(x_l)$ or $g(x_r)$.

\section{Parameter Optimization}
\label{sec:slope-opt}
Following existing work \cite{Xu21}, we divide the process of finding parameters that lead to a good solution of the optimization problem $\Opt$ into two phases:
First, we heuristically set initial slopes for the lower and upper linear relaxations.
Then we use these slopes as starting point for gradient-based optimization of $\Opt$.

\subsection{Parameter Initialization}
\label{ssec:param-init}

When optimizing $\Opt$, it is beneficial to start with initial parameters that already yield close bounds.
Many NN verifiers (\cite{Paulsen2022linsyn,Paulsen2022example,Biktairov2023,Singh19deeppoly,Zhang18}) choose parameters that minimize the area between the lower and upper linear relaxation over the pre-activation bounds $[l, u]$ for each neuron, by solving
\begin{align}
    \argmin_{\lo{\alpha}, \up{\alpha},\lo{\beta}, \up{\beta}} \int_l^u \up{\alpha}x + \up{\beta} - (\lo{\alpha} x + \lo{\beta}) \;dx \;. \label{eq:min-area}
\end{align}

While other verifiers rely on manually crafted closed-form solutions for specific activation functions (\cite{Singh19deeppoly,Zhang18}), sampling and linear programming (\cite{Paulsen2022linsyn,Biktairov2023}), or precomputation \cite{Paulsen2022example}, we use gradient descent to find slopes $\lo{\alpha}, \up{\alpha}$ that result in small areas. 
Note that via Eq.~\eqref{eq:problem-def}, $\lo{\beta}$ and $\up{\beta}$ are functions of the slopes in our case, reducing Eq.~\eqref{eq:min-area} to a bivariate optimization problem.
Examples for initial relaxations found by SLiR are shown in Fig.~\ref{fig:area-min}.
\begin{figure*}[t]
    \centering
    \begin{subfigure}[t]{0.3\textwidth}
        \centering
        \includegraphics[width=0.9\linewidth]{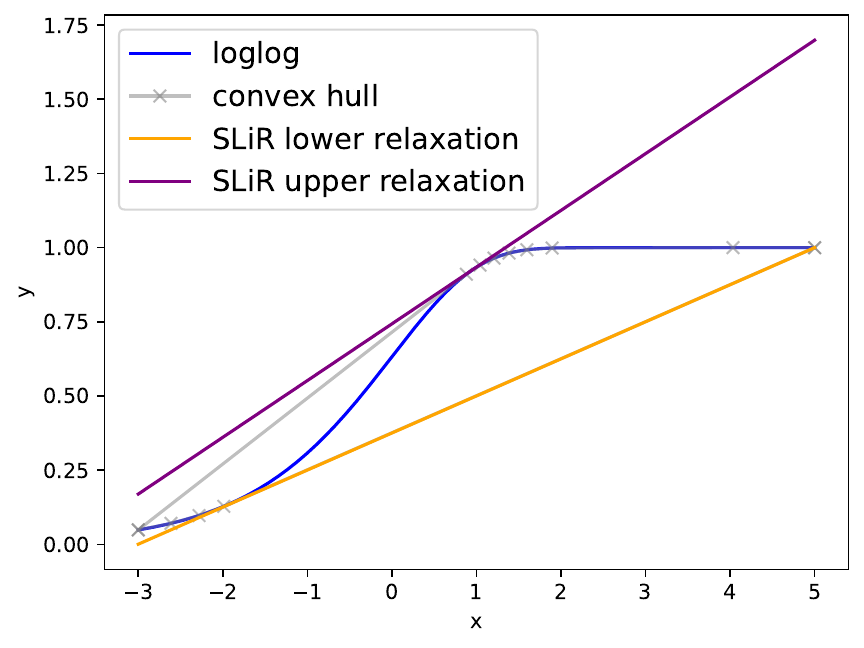}
        \caption{SLiR relaxation after area minimization.}
        \label{fig:area-min}
    \end{subfigure}\hfill
    \begin{subfigure}[t]{0.3\textwidth}
        \centering
        \includegraphics[width=0.9\linewidth]{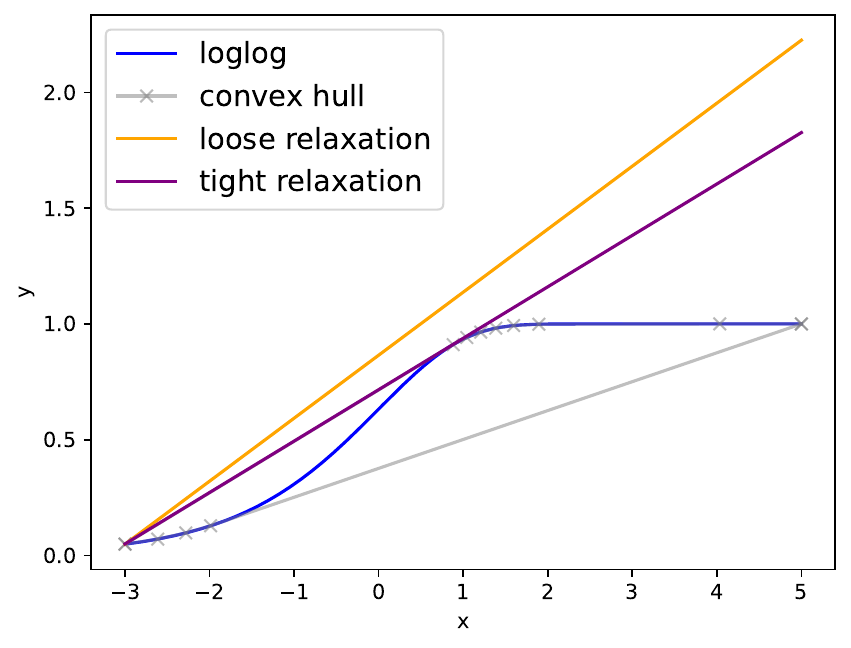}
        \caption{Tight relaxation with $\alpha = a_{max}$ and loose relaxation with $\alpha = a_{max} + 0.05$.}
        \label{fig:conv-hull-slopes}
    \end{subfigure}\hfill
    \begin{subfigure}[t]{0.3\textwidth}
        \centering
        \includegraphics[width=0.9\linewidth]{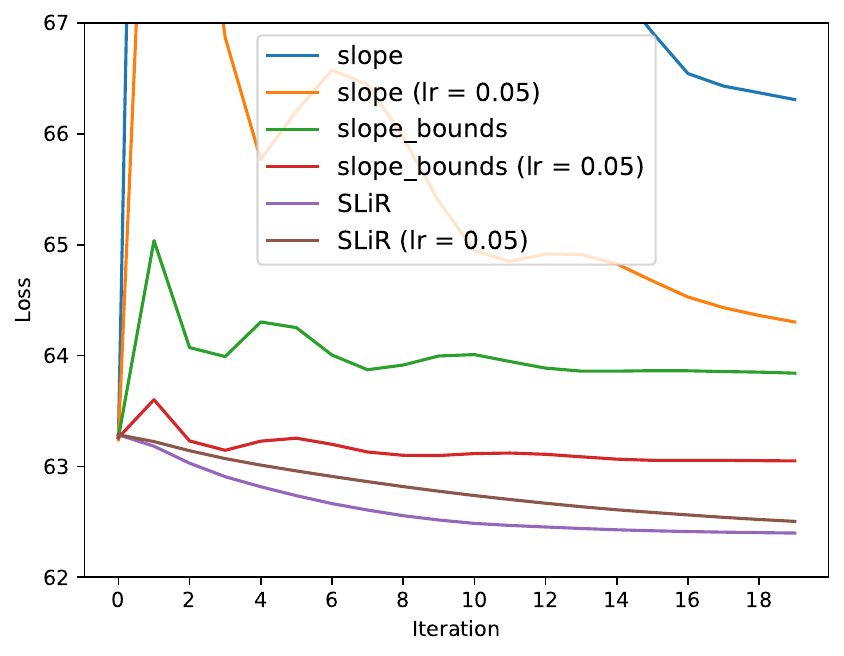}
        \caption{Loss (sum of concrete upper bounds) for a CIFAR NN with $\mathrm{AtanSq}$ activation.}
        \label{fig:conv-hull-loss}
    \end{subfigure}
    \caption{Effect of using the convex hull's lower and upper envelopes to bound the admissible slopes of the linear relaxations.}
    \label{fig:slope-bounds}
\end{figure*}

\subsection{Parameter Optimization}
\label{ssec:param-opt}

After initial slopes have been found for all neurons, we use them as starting point for solving $\Opt$.
Thereby, the choice of the parametrization functions $\lo{\alpha}(\theta), \up{\alpha}(\theta), \lo{\beta}(\theta)$ and $\up{\beta}(\theta)$ is highly relevant.
However, a naive choice of
\begin{align}
    \lo{\alpha}(\theta) = \theta \qquad \lo{\beta}(\theta) = \min_{x \in [l, u]} f(x) - \lo{\alpha}(\theta) \cdot x \label{eq:param-naive-lower}\\
    \up{\alpha}(\theta) = \theta \qquad \up{\beta}(\theta) = \max_{x \in [l, u]} f(x) - \up{\alpha}(\theta) \cdot x \label{eq:param-naive-upper}
\end{align}
for parameter $\theta \in \R$ leads to unstable optimization performance.
As shown in Fig.~\ref{fig:conv-hull-loss} (topmost two lines for learning rates $0.1$ and $0.05$), the loss curve spikes sharply after the first gradient descent iteration.
We identify two reasons for this behaviour: High sensitivity to the learning rate and suboptimal relaxations achievable via these parameters.
While SLiR generates valid linear relaxations for any slope $\lo{\alpha}, \up{\alpha} \in \R$, many of these relaxations are not tight: 
For some slopes $a$ the corresponding upper relaxations $U(x) = a \cdot x + b$ are dominated by linear upper relaxations $\hat{U}(x)$ that are strictly better over the interior of the whole domain $x \in [l, u]$ (see Fig.~\ref{fig:conv-hull-slopes}).
To address this issue, we restrict the admissible slopes to the range defined by the upper (or lower) convex hull of $f(x)$ over $[l, u]$.
This approach is justified by the following result:

\begin{lemma}
    \label{lemma:upper-max}
    Let $l < u$, $f: \R \to \R$, $a_{max}$ the maximum slope of its upper convex hull $H(x)$ over $x \in [l, u]$, $\epsilon > 0$ and $U(x) = (a_{max} + \epsilon)x + b \geq f(x)$ be a linear upper relaxation over $[l, u]$.
    Then there is another linear overapproximation $\hat{U}(x)$ with slope $a_{max}$, s.t. $f(x) \leq \hat{U}(x) < U(x)$ over $x \in (l,u]$.
\end{lemma}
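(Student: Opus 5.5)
The plan is to construct $\hat{U}$ explicitly as the supporting line of the upper convex hull $H$ with slope $a_{max}$, and then compare it with $U$. Since $H$ is the upper convex hull of $f$ over $[l,u]$, it is concave and piecewise supported by lines. We have $f \le H \le V$ for every linear (hence concave) upper bound $V$ of $f$ on $[l,u]$, because $H$ is the pointwise infimum of such majorants. The maximum slope of the concave function $H$ is attained at its left end, i.e.\ $a_{max} = H'_+(l)$ (right derivative). We then take
\[
\hat{U}(x) := H(l) + a_{max}(x-l).
\]

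The first key step is soundness, $f \le \hat{U}$ on $[l,u]$. By concavity, the chord/tangent inequality gives $H(x) \le H(l) + H'_+(l)(x-l)$ for all $x\in[l,u]$. Combined with $f\le H$, this yields $f(x)\le \hat U(x)$.

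The second step is the strict inequality $\hat U < U$ on $(l,u]$. Since $U$ is a linear upper bound of $f$, we have $H \le U$, and in particular $H(l)\le U(l)$. Hence
\[
U(x)-\hat U(x) = \bigl(U(l)-H(l)\bigr) + \epsilon (x-l) \ge \epsilon(x-l) > 0
\]
for all $x\in(l,u]$. This matches the statement, which excludes $x=l$, where equality is possible.

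The main obstacle is making the convex-hull objects rigorous under the minimal hypotheses. Nothing in the lemma says $f$ is continuous, so $H$ must be defined as the infimum of affine majorants of $f$ on $[l,u]$ (or the hull of the hypograph). This requires $f$ to be bounded above, which the existence of $U$ guarantees. We also need $a_{max}$ to be finite. I would interpret $a_{max}$ as $\sup$ of the slopes of $H$, i.e.\ $H'_+(l)$, and argue it is finite under the paper's implicit assumption. If $H$ has a vertical jump at $l$ (e.g.\ $f(l)$ is low compared with nearby values), the right derivative still exists as a limit of difference quotients, provided one uses $H(l^+)$. In that case I would replace $H(l)$ by $\lim_{x\downarrow l}H(x)$ in the definition of $\hat U$; the inequality $H(l^+)\le U(l)$ still follows by continuity of $U$. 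If needed, I would simply assume $f$ continuous, as for all activation functions considered, in which case $H$ is continuous on $[l,u]$ and these issues disappear.
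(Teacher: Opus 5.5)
Your proof is correct. It ends with exactly the paper's line $\hat U(x)=a_{max}(x-l)+H(l)$, and soundness comes from the same inequality $H(x)-H(l)\le a_{max}(x-l)$. Where you differ is the strictness step.

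The paper argues geometrically that $U$ can meet $H$ only at an endpoint. It then rules out contact at $u$ by contradiction and concludes $U(l)=H(l)$. The case where $U$ does not touch $H$ is handled only by a remark about lowering the bias.

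You notice that equality at $l$ is never needed. The inequality $H(l)\le U(l)$ holds outright, because $U$ is itself a concave majorant of $f$. Then $U(x)-\hat U(x)=(U(l)-H(l))+\epsilon(x-l)\ge\epsilon(x-l)>0$ on $(l,u]$. The paper's contradiction at $u$ is obtained precisely by violating $H(l)\le U(l)$, so your argument is essentially the paper's with the detour removed.

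What the paper's route adds is a geometric picture: an over-steep upper relaxation can only pivot about the left end of the hull. What yours buys is a uniform treatment of the contact and non-contact cases, no reliance on the informal secant/tangent claim, and an explicit quantitative gap $\epsilon(x-l)$.

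Your caveat about a downward jump of $H$ at $l$ for discontinuous $f$ flags a case the paper's proof silently excludes. With your alternative definition of $H$ as the infimum of affine majorants, $H$ is concave and upper semicontinuous, hence continuous at $l$. Under that definition the $H(l^+)$ patch is unnecessary.
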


A similar statement holds for the minimum slope $a_{min}$ of the upper convex hull (see Lemma~\ref{lemma:upper-min} in Appendix~\ref{sec:proofs}).
Hence, we do not compromise quality of the achievable linear relaxations by clamping the admissible slope values for the upper relaxations to $\up{\alpha} \in [a_{min}, a_{max}]$ (and similarly for lower relaxations and the lower convex hull).

We obtain $a_{min}$ and $a_{max}$, by computing the convex hull of sampled points $\left\{(x_i, f(x_i))~\mid~x_i~\in~[l, u]\right\}$ (if we computed a PWL approximation $\hat{f}_{\mathit{pwl}}$ for $f$, we use the function values at the boundary points $\left\{(l, f(l)), (u, f(u))\right\} \cup \left\{(\xi_i, f(\xi_i)) \mid \xi_i \in [l, u]\right\}$) using the \emph{monotone chain algorithm}~\cite{Andrew1979}.
This algorithm directly returns the lower and upper convex envelope and runs in $\mathcal{O}(n)$ time when the input points are sorted first by their $x$-values and then by their $y$-values.
However, our vectorized implementation may not take full advantage of the linear runtime.
Since the local input bounds $[l, u]$ can get tighter with each gradient-descent step for $\Opt$, we have to recompute the convex hull for each neuron in each iteration.

As illustrated in Fig.~\ref{fig:conv-hull-loss} (center two lines), restricting the range of the slope values drastically reduces the deterioration of the loss.
However, the optimization process is still highly sensitive to the learning rate.

In contrast to parameterized lower relaxations for the $\relu$ function, where the interval $[0,1]$ always produces tight lower relaxations for any unstable $\relu$, %
admissible ranges for other activation functions can be narrower (for the $\mathrm{loglog}$ activation function shown in Fig.~\ref{fig:conv-hull-slopes} the admissible range is just $[0.0571, 0.1248]$ for the depicted domain) and vary between each neuron (a different domain would result in different admissible ranges for the slopes).

To ensure a smooth mapping from parameter $\theta \in \R$ to slope values within $[a_{min}, a_{max}]$, we parameterize slopes via the sigmoid function $\sigma(\cdot)$:
\begin{align}
    \lo{\alpha}(\theta) &= a^l_{min} + (a^l_{max} - a^l_{min}) \cdot \sigma(\theta)\\
    \up{\alpha}(\theta) &= a^u_{min} + (a^u_{max} - a^u_{min}) \cdot \sigma(\theta) \;,
\end{align}
$a^l_{min},a^l_{max},a^u_{min}$ and $a^u_{max}$ are the minimum and maximum slope of the lower and upper convex envelopes.

Initial slopes $\alpha$ are mapped to $\theta$ by application of the inverse sigmoid function
\begin{align}
    \theta = \log\left( \frac{\alpha - a_{min}}{a_{max} - a_{min}} \right) \;, \label{eq:sigmoid-transform}
\end{align}
where we clamp the argument of the $\log$ to $[\epsilon_{\sigma}, 1 \!-\! \epsilon_{\sigma}]$~to~avoid numerical issues and $a_{min}$ and $a_{max}$ correspond to $a^l_{min}$ and $a^l_{max}$ or $a^u_{min}$ and $a^u_{max}$ respectively.
Illustrated by the lines labeled SLiR in Fig.~\ref{fig:conv-hull-loss}, this transformation allows consistent improvement in the loss curves, even for higher learning rate.

\section{Evaluation}
\label{sec:evaluation}

We evaluate (i) how our approach compares to prior work and (ii) the impact of our optimizations on performance.
We report the number of verified properties (\# certified) and total runtime (in seconds) at initialization and after $20$ gradient-based optimization steps for each lower and upper output bounds.
All experiments were run on a cluster with multiple nodes of Intel Xeon Platinum 8358 processors (64 cores, 256 GiB RAM, 2.6 GHz), with each run limited to 4 CPU cores.
Further evaluation results can be found in Appendix~\ref{sec:supplementary-eval}.

\smallskip\noindent\emph{Tools.}\
We implemented our approach in PyTorch~\cite{Paszke2019} and integrated it into the backsubstitution-based framework \autoLirpa{}~\cite{Xu2020}.
We compare against \aCROWN{}~\cite{Xu21}, the leading bound propagation method with optimizable relaxations and the basis of \abCROWN{}, winner of VNN-COMP 2021-2025~(\cite{Bak2021,Muller2022,Brix2023,Brix2024,Kaulen2025}).
While \aCROWN{} relies on custom relaxations or decomposition into elementary operations, we also compare to methods for general activation functions:
\linSyn{}~\cite{Paulsen2022linsyn} and \sol{}~\cite{Biktairov2023}, which synthesize relaxations at runtime, and the example-guided approach~\cite{Paulsen2022example}, which precomputes relaxations and falls back to interval overapproximation outside known bounds.

\smallskip\noindent\emph{Benchmarks.}\
We use benchmarks from the literature with general activation functions~(\cite{Paulsen2022linsyn,Paulsen2022example,Biktairov2023}), which are convolutional NNs trained on MNIST~\cite{LeCun1998mnist} and CIFAR10~\cite{krizhevsky2009learning} by Paulsen et al.~\cite{Paulsen2022example}.
The MNIST NNs have input dimension $28\times28$, hidden layers of size $1568, 784, 256$, and output size $10$, while the CIFAR10 NNs have input dimension $3\times32\times32$, hidden layers of size $2048, 2048, 1024, 256$, and output size $10$.
For both datasets, we consider \atansq{}~\cite{Ramachandran2018searching}, \gelu{}~\cite{Hendrycks2016gelu}, \lisht{}~\cite{Roy2022lisht}, \loglog{}~\cite{gomes2008complementary}, \mish{}~\cite{Misra2020mish}, and \swish{}~\cite{Hendrycks2016gelu} activations.

For cases where the critical points of Eq.~\eqref{eq:problem-def} can be computed exactly, we also evaluate two NNs with $6$ layers of $500$ neurons using sigmoid and $\tanh$ activations from the \eran{} benchmark suite~(\cite{Singh2018deepz,Singh19deeppoly}).
We additionally train a NN of the same architecture with $\elu{}$ activations~\cite{Clevert2015elu}.

We verify adversarial robustness~\cite{Szegedy2013} in the $L_\infty$-sense with $\epsilon=8/255$ for MNIST and $\epsilon=1/255$ for CIFAR10 on the $100$ images used in prior work~(\cite{Paulsen2022linsyn,Paulsen2022example,Biktairov2023}), skipping inputs that are already misclassified.
For the \eran{} MNIST NNs, we follow the same procedure as \cite{Singh2018deepz}, using the same $100$ images and $\epsilon=8/255$.

\renewcommand{\tablename}{Tab.}

\begin{table}[tbh]
\caption{Results for functions with computable critical points.
Total time shows runtime; \#certified counts verified instances.}
\label{tab:inverse-derivatives}
\centering
\footnotesize

\setlength{\tabcolsep}{5pt}
\renewcommand{\arraystretch}{1.15}

\begin{tikzpicture}
\node (tab) {
\begin{tabular}{llc rrr}
\toprule
\multicolumn{3}{c}{} & \multicolumn{3}{c}{\eran{} MNIST $6\times 500$} \\
\cmidrule(lr){4-6}
Method & & Metric & sigmoid & tanh & ELU \\
\midrule

\aCROWN{} & init & \#certified 
& 27 & 10 & $-^{\dag}$ \\

& & total time 
& 16.3s & 18.5s & $-^{\dag}$ \\

& opt & \#certified 
& \textbf{33} & 20 & $-^{\dag}$ \\

& & total time 
& 3\,185s & 3\,579s & $-^{\dag}$ \\

\midrule

SLiR & init & \#certified 
& 29 & 16 & 62 \\

& & total time 
& 257s & 541s & 352s \\

& opt & \#certified 
& 31 & \textbf{21} & \textbf{64} \\

& & total time 
& 5\,942s & 11\,364s & 7\,137s \\

\midrule

SLiR (exact) & init & \#certified 
& 29 & 16 & 62 \\

& & total time 
& 205s & 201s & 200s \\

& opt & \#certified 
& \textbf{33} & \textbf{21} & \textbf{64} \\

& & total time 
& 4\,966s & 4\,953s & 4\,730s \\

\bottomrule
\end{tabular}
};

\node[rotate=-90, anchor=south] at ($(tab.east)+(0.15,0)$) {\small $^{\dag}$ \aCROWN{} cannot decompose $\mathrm{ELU}$.};
\end{tikzpicture}
\end{table}

\smallskip\noindent\emph{Functions with Computable Critical Points.}\
Tab.~\ref{tab:inverse-derivatives} shows that both SLiR configurations verify more instances than \aCROWN{} after initialization, indicating tighter initial relaxations.
However, \aCROWN{} is significantly faster at initialization, as it avoids PWL approximation and gradient-based parameter search.
Optimization times for SLiR (exact) are relatively consistent across networks, whereas the PWL variant depends strongly on pre-activation bounds and approximation quality: more segments lead to more critical points to evaluate.

\begin{table*}[tbh]
\caption{Verification results for adversarial robustness.
Time (in seconds) shows the total time spent for analyzing all instances while \#cert. shows the number of instances verified as safe. UB (upper bound) indicates the number of instances for which the PGD attack was unable to find a counterexample;
it therefore is an upper bound
on the number of certifiable instances.}
\label{tab:comparison-prior-work}
\centering
\footnotesize

\setlength{\tabcolsep}{4pt}
\renewcommand{\arraystretch}{1.1}

\begin{tikzpicture}
\node (tab) {
\begin{tabular}{llc|rrrrrr|rrrrrr}
\toprule
\multicolumn{3}{c|}{} 
& \multicolumn{6}{c|}{MNIST CNN $4$-Layer (100 instances each)} 
& \multicolumn{6}{c}{CIFAR10 CNN $5$-Layer (100 instances each)} \\
\cmidrule(lr){4-9} \cmidrule(lr){10-15}
Method & & Metric
& atansq & gelu & lisht & loglog & mish & swish
& atansq & gelu & lisht & loglog & mish & swish \\
\midrule

PGD &  & UB
& 88 & 92 & 95 & 85 & 92 & 86
& 59 & 60 & 56 & 39 & 60 & 61 \\
\midrule

\sol{}$^{\S}$ &  & \#cert.
& $-^{\dag}$ & 73 & $-^{\dag}$ & 24 & $-^{\dag}$ & \textbf{76}
& $-^{\dag}$ & 20 & $-^{\dag}$ & 27 & $-^{\dag}$ & \textbf{24} \\

\linSyn{}$^{\P}$ &  & \#cert.
& $-^{\dag}$ & 72 & $-^{\dag}$ & 23 & $-^{\dag}$ & \textbf{76}
& $-^{\dag}$ & 20 & $-^{\dag}$ & 27 & $-^{\dag}$ & 23 \\
& & time
& $-^{\dag}$ & 1\,345 & $-^{\dag}$ & 1\,393 & $-^{\dag}$ & 1\,338
& $-^{\dag}$ & 1\,803 & $-^{\dag}$ & 1\,017 & $-^{\dag}$ & 1\,747 \\

E.\ Guided &  & \#cert.
& 16 & 70 & 11 & $-^{\dag}$ & 28 & 74
& 14 & 19 & 0 & $-^{\dag}$ & 19 & 23 \\

& & time
& 138 & 183 & 114 & $-^{\dag}$ & 140 & 115
& 214 & 263 & 185 & $-^{\dag}$ & 222 & 182 \\
\midrule

\aCROWN{} & init & \#cert.
& 1 & 64 & 0 & $-^{\ddag}$ & 0 & 34
& 0 & 11 & 0 & $-^{\ddag}$ & 0 & 2 \\

& & time
& 7.8 & 6.7 & 8.9 & $-^{\ddag}$ & 45.2 & 9.8
& 13.1 & 13.6 & 15.3 & $-^{\ddag}$ & 451 & 14.5 \\

& opt & \#cert.
& 3 & 70 & 9 & $-^{\ddag}$ & 6 & 47
& 3 & 14 & 0 & $-^{\ddag}$ & 1 & 5 \\

& & time
& 4\,868 & 3\,488 & 3\,625 & $-^{\ddag}$ & 16\,172 & 3\,959
& 29\,602 & 19\,465 & 17\,899 & $-^{\ddag}$ & 76\,707 & 16\,339 \\
\midrule

SLiR & init & \#cert.
& 18 & 72 & 81 & 23 & 63 & 76
& 15 & 20 & 0 & 26 & 19 & 23 \\

(ours) & & time
& 958 & 647 & 950 & 308 & 857 & 567
& 785 & 466 & 953 & 173 & 462 & 434 \\

& opt & \#cert.
& \textbf{25} & \textbf{75} & \textbf{86} & \textbf{32} & \textbf{66} & \textbf{76}
& \textbf{16} & \textbf{21} & 0 & \textbf{28} & \textbf{20} & 23 \\

& & time
& 6\,808 & 5\,794 & 6\,940 & 4\,844 & 6\,213 & 5860
& 19\,300 & 17\,880 & 21\,328 & 11\,888 & 18\,806 & 18\,372 \\
\bottomrule
\end{tabular}
};

\node[rotate=-90, anchor=south] 
    at ($(tab.east)+(1.50,0.12)$) 
    {$^{\dag}$ Activation function not supported by the tool.};

\node[rotate=-90, anchor=south] 
    at ($(tab.east)+(1.15,0.00)$) 
    {$^{\ddag}$ \aCROWN{} encountered \texttt{NaN} in the computation.};

\node[rotate=-90, anchor=south, text width=5cm, align=center] 
    at ($(tab.east)+(0.50,0)$) 
    {$^{\P}$ We had to replace the multiprocessing library, runtimes may be affected.};

\node[rotate=-90, anchor=south] 
    at ($(tab.east)+(0.15,0)$) 
    {$^{\S}$ Results taken from~\cite{Biktairov2023}.};
\end{tikzpicture}
\end{table*}
\smallskip\noindent\emph{Comparison to Prior Work.}\
\label{ssec:comparison-prior-work}
Tab. \ref{tab:comparison-prior-work} compares SLiR to \sol{}~\cite{Biktairov2023}, \linSyn{}~\cite{Paulsen2022linsyn}, the example guided (E. Guided) approach \cite{Paulsen2022example} and \aCROWN{}~\cite{Xu21}.
We used adversarial attacks (the PGD method \cite{Madry2018pgd}) to generate counterexamples for the verification problems.
The top row indicates instances with no found counterexample, serving as an upper bound on the number of provable instances.
For all approaches, we report the number of verified instances and runtime (if source code was available).
If parametric relaxations are used, we show both metrics for the initial relaxations and after optimization.

Our experiments show that optimizing the relaxations significantly boosts verification rate --- with the effects being more pronounced for the MNIST than for the CIFAR10 benchmark.
A possible reason could be that the pre-activation ranges, and thus the range of admissible slope values, of the neurons were larger in the MNIST than in the CIFAR benchmark.

Notably, SLiR verifies more instances --- both after initialization and after optimization --- than \aCROWN{} even for $\gelu$, where \aCROWN{} uses a custom relaxation.
Similarly, we can verify significantly more properties than E. Guided for the $\lisht$ and $\mish$ MNIST NNs.

While times for \linSyn{} have to be taken with a grain of salt\footnote{We had to replace the python module \texttt{multiprocessing} by \texttt{pathos.multiprocessing} to be able to run the code.}, repeating their approach for all $40$ gradient descent iterations will clearly exceed SLiR's total time after optimization.

As expected, SLiR's initialization times are slower than for \aCROWN{} and E. Guided, as these approaches use composition of custom relaxations or precomputation, whereas SLiR has to generate a PWL approximation and extend it to an overapproximation for each instance.
However, SLiR is only $1.4 - 1.9 \times$ slower than \aCROWN{} on MNIST (except for $\mish$, where it is significantly faster).
On CIFAR10, performance varies:
Notably, \aCROWN{} struggles with $\mish$ because decomposing it into a large number of elementary functions increases the NN's effective depth (see complexity of Eq.~\eqref{eq:fun-backsubs}).

\begin{table}[tbh]
\caption{Effects of different contributions (ablation study).
Top lines show \# cert.\ instances, bottom lines show times (s).}
\label{tab:ablation}
\centering
\footnotesize

\setlength{\tabcolsep}{4pt}
\renewcommand{\arraystretch}{1.12}

\begin{tabular}{c l c rrrrrr}
\toprule
\multicolumn{3}{c}{} & \multicolumn{6}{c}{MNIST CNN $4$-Layer} \\
\cmidrule(lr){4-9}
Method & & Metric 
& atansq & gelu & lisht & loglog & mish & swish \\
\midrule

\multirow{4}{*}{%
  \rotatebox[origin=c]{90}{%
    \parbox{1.4cm}{\centering SLiR}%
  }%
}
& init & \#cert.
& 18 & 72 & 81 & 23 & 63 & 76 \\
&      & time
& 666 & 490 & 681 & 270 & 579 & 437 \\
& opt  & \#cert.
& \textbf{25} & \textbf{75} & \textbf{86} & \textbf{32} & \textbf{66} & 76 \\
&      & time
& 5,763 & 5\,095 & 5\,939 & 4\,433 & 5\,105 & 5\,259 \\
\midrule

\multirow{4}{*}{%
  \rotatebox[origin=c]{90}{%
    \parbox{1.4cm}{\centering SLiR\\(base)}%
  }%
}
& init & \#cert.
& 18 & 72 & 80 & 23 & 62 & 76 \\
&      & time
& 829 & 641 & 777 & 615 & 676 & 718 \\
& opt  & \#cert.
& \textbf{25} & 74 & \textbf{86} & 31 & 64 & 76 \\
&      & time
& 6\,299 & 5\,439 & 6\,634 & 4\,880 & 5\,650 & 6\,023 \\
\midrule

\multirow{4}{*}{%
  \rotatebox[origin=c]{90}{%
    \parbox{1.5cm}{\centering SLiR\\(envelope)\\Section~\ref{subsec:pyavskii-optim}}%
  }%
}
& init & \#cert.
& 18 & 72 & 81 & 23 & 63 & 76 \\
\vspace*{0.5em}
&      & time
& 926 & 675 & 799 & 649 & 727 & 729 \\
& opt  & \#cert.
& \textbf{25} & 74 & \textbf{86} & \textbf{32} & 65 & \textbf{77} \\
&      & time
& 7\,064 & 5\,822 & 6\,759 & 5\,286 & 6\,084 & 6\,166 \\
\midrule

\multirow{4}{*}{%
  \rotatebox[origin=c]{90}{%
    \parbox{1.5cm}{\centering SLiR\\(local)\\Section~\ref{subsec:pyavskii-optim}}%
  }%
}
& init & \#cert.
& 18 & 72 & 80 & 23 & 62 & 76 \\
\vspace*{0.5em}
&      & time
& 1010 & 811 & 953 & 732 & 917 & 962 \\
& opt  & \#cert.
& \textbf{25} & 74 & \textbf{86} & \textbf{32} & 64 & 76 \\
&      & time
& 6\,556 & 5\,617 & 6\,593 & 5\,333 & 6\,230 & 6\,481 \\
\midrule

\multirow{4}{*}{%
  \rotatebox[origin=c]{90}{%
    \parbox{1.4cm}{\centering SLiR\\(loose bounds)\\Section~\ref{sec:slope-opt}}%
  }%
}
& init & \#cert.
& 18 & 72 & 74 & 23 & 63 & 76 \\
&      & time
& 870 & 544 & 808 & 272 & 763 & 524 \\
& opt  & \#cert.
& 22 & 74 & 79 & 25 & 64 & 76 \\
&      & time
& 4\,005 & 3\,389 & 3\,672 & 3\,543 & 3\,812 & 3\,819 \\
\midrule

\multirow{4}{*}{%
  \rotatebox[origin=c]{90}{%
    \parbox{1.4cm}{\centering SLiR\\(direct slope)\\Section~\ref{sec:slope-opt}}%
  }%
}
& init & \#cert.
& 18 & 72 & 81 & 23 & 63 & 76 \\
&      & time
& 878 & 585 & 867 & 282 & 811 & 533 \\
& opt  & \#cert.
& 22 & 74 & \textbf{86} & 24 & 64 & 76 \\
&      & time
& 6\,665 & 5\,727 & 6\,759 & 5\,294 & 6\,414 & 6\,331 \\
\bottomrule
\end{tabular}
\end{table}

\smallskip\noindent\emph{Ablations.}\
To evaluate our optimizations from Sec. \ref{subsec:pyavskii-optim}, we conducted an ablation study on the MNIST benchmark.
The top row in Tab.~\ref{tab:ablation} shows the number of verified instances and runtimes with all optimizations enabled.
Using only the traditional Piyavskii method for Lipschitz optimization (SLiR (base)), adding just the improved envelopes (SLiR (envelope)) or just the local Lipschitz constants (SLiR (local)) decreases the number of verified properties, showing the effectiveness of each optimization.
However, SLiR (envelope) verifies just one instance less than our final approach, showing good performance even if only a global Lipschitz constant is available.

Allowing wider bounds for the slope of the linear lower and upper relaxation within the range $[\min_{x \in [l, u]} f'(x), \max_{x \in [l, u]} f'(x)]$ (SLiR (loose-bounds)) of a neuron significantly decreases the number of verified instances after optimization.
The difference is especially evident for the NNs with $\lisht$ and $\loglog$ activation functions.
Runtimes are faster for SLiR (loose-bounds), because no convex hulls have to be computed.
The number of initially verified instances can also differ, as the slopes of the initial linear relaxations are not clipped to the tightest possible range.

Replacing the sigmoid-based parametrization of Eq.~\eqref{eq:sigmoid-transform}
with direct optimization of the slope values (SLiR (direct-slope)) likewise reduces the number of verified instances, indicating that both the convex-hull restriction and the parameter transformation contribute substantially to overall performance.

\section{Conclusion}
\label{sec:conclusion}

We presented SLiR, a shifting-based framework for constructing parameterized, optimizable linear relaxations for general activation functions. 
By separating slope optimization from synthesizing valid, tight relaxations through minimal vertical shifting computation, SLiR enables automatic and principled construction of sound relaxations without requiring expert knowledge and manual activation-specific derivations.

When the shifting distance can be computed in closed-form, the practitioner only needs to provide the set of critical points. 
If no closed-form solution exists, it suffices to provide one global or multiple local Lipschitz constants to enable sound overapproximation. 
While SLiR is not always as fast as some specialized relaxation techniques, our evaluation shows that it can verify a larger number of properties on the overwhelming majority of tested benchmarks.

\smallskip\noindent\emph{Acknowledgements.}
The authors acknowledge support by the state of Baden-Württemberg through bwHPC.

\bibliographystyle{IEEEtran}
\bibliography{optimal_bound}

\begin{thebibliography}{10}
\providecommand{\url}[1]{#1}
\csname url@samestyle\endcsname
\providecommand{\newblock}{\relax}
\providecommand{\bibinfo}[2]{#2}
\providecommand{\BIBentrySTDinterwordspacing}{\spaceskip=0pt\relax}
\providecommand{\BIBentryALTinterwordstretchfactor}{4}
\providecommand{\BIBentryALTinterwordspacing}{\spaceskip=\fontdimen2\font plus
\BIBentryALTinterwordstretchfactor\fontdimen3\font minus \fontdimen4\font\relax}
\providecommand{\BIBforeignlanguage}[2]{{%
\expandafter\ifx\csname l@#1\endcsname\relax
\typeout{** WARNING: IEEEtran.bst: No hyphenation pattern has been}%
\typeout{** loaded for the language `#1'. Using the pattern for}%
\typeout{** the default language instead.}%
\else
\language=\csname l@#1\endcsname
\fi
#2}}
\providecommand{\BIBdecl}{\relax}
\BIBdecl

\bibitem{Xu21}
K.~Xu, H.~Zhang, S.~Wang, Y.~Wang, S.~Jana, X.~Lin, and C.~Hsieh, ``Fast and complete: Enabling complete neural network verification with rapid and massively parallel incomplete verifiers,'' in \emph{Proc. of the 9th International Conference on Learning Representations({ICLR}'21), Virtual Event, Austria, May 3-7, 2021}.\hskip 1em plus 0.5em minus 0.4em\relax OpenReview.net, 2021.

\bibitem{Radford2018}
A.~Radford, K.~Narasimhan, T.~Salimans, and I.~Sutskever, ``Improving language understanding by generative pre-training,'' \emph{OpenAI}, 2018.

\bibitem{Devlin2019}
J.~Devlin, M.~Chang, K.~Lee, and K.~Toutanova, ``{BERT:} pre-training of deep bidirectional transformers for language understanding,'' in \emph{Proc. of the 2019 Conference of the North American Chapter of the Association for Computational Linguistics: Human Language Technologies ({NAACL-HLT}'19), Minneapolis, MN, USA, June 2-7, 2019}.\hskip 1em plus 0.5em minus 0.4em\relax Association for Computational Linguistics, 2019, pp. 4171--4186.

\bibitem{Dosovitskiy2021}
A.~Dosovitskiy, L.~Beyer, A.~Kolesnikov, D.~Weissenborn, X.~Zhai, T.~Unterthiner, M.~Dehghani, M.~Minderer, G.~Heigold, S.~Gelly, J.~Uszkoreit, and N.~Houlsby, ``An image is worth 16x16 words: Transformers for image recognition at scale,'' in \emph{Proc. of the 9th International Conference on Learning Representations {(ICLR'21)}, Virtual Event, Austria, May 3-7, 2021}.\hskip 1em plus 0.5em minus 0.4em\relax OpenReview.net, 2021.

\bibitem{Touvron2023}
H.~Touvron, T.~Lavril, G.~Izacard, X.~Martinet, M.~Lachaux, T.~Lacroix, B.~Rozi{\`{e}}re, N.~Goyal, E.~Hambro, F.~Azhar, A.~Rodriguez, A.~Joulin, E.~Grave, and G.~Lample, ``Llama: Open and efficient foundation language models,'' \emph{CoRR}, vol. abs/2302.13971, 2023.

\bibitem{Bochkovskiy2020}
A.~Bochkovskiy, C.~Wang, and H.~M. Liao, ``{YOLOv4}: Optimal speed and accuracy of object detection,'' \emph{CoRR}, vol. abs/2004.10934, 2020.

\bibitem{Wang2023Silu}
C.~Wang, A.~Bochkovskiy, and H.~M. Liao, ``Yolov7: Trainable bag-of-freebies sets new state-of-the-art for real-time object detectors,'' in \emph{Proc. of the {IEEE/CVF} Conference on Computer Vision and Pattern Recognition ({CVPR'23}), Vancouver, BC, Canada, June 17-24, 2023}.\hskip 1em plus 0.5em minus 0.4em\relax {IEEE}, 2023, pp. 7464--7475.

\bibitem{wang2023learning}
\BIBentryALTinterwordspacing
H.~Wang, L.~Lu, S.~Song, and G.~Huang, ``Learning specialized activation functions for physics-informed neural networks,'' \emph{ArXiv}, vol. abs/2308.04073, 2023. [Online]. Available: \url{https://api.semanticscholar.org/CorpusID:260704245}
\BIBentrySTDinterwordspacing

\bibitem{Balunovic2019}
M.~Balunovic, M.~Baader, G.~Singh, T.~Gehr, and M.~T. Vechev, ``Certifying geometric robustness of neural networks,'' in \emph{Advances in Neural Information Processing Systems 32: Annual Conference on Neural Information Processing Systems 2019 (NeurIPS'19), December 8-14, 2019, Vancouver, BC, Canada}, 2019, pp. 15\,287--15\,297.

\bibitem{Ryou2021}
W.~Ryou, J.~Chen, M.~Balunovic, G.~Singh, A.~M. Dan, and M.~T. Vechev, ``Scalable polyhedral verification of recurrent neural networks,'' in \emph{Proc. of the 33rd International Conference on Computer Aided Verification ({CAV'21}), Virtual Event, July 20-23, 2021, Proceedings, Part {I}}, ser. Lecture Notes in Computer Science, vol. 12759.\hskip 1em plus 0.5em minus 0.4em\relax Springer, 2021, pp. 225--248.

\bibitem{Laurel2023}
\BIBentryALTinterwordspacing
J.~Laurel, S.~B. Qian, G.~Singh, and S.~Misailovic, ``Synthesizing precise static analyzers for automatic differentiation,'' \emph{Proc. of the {ACM} on Programming Languages}, vol.~7, no. {OOPSLA2}, pp. 1964--1992, 2023. [Online]. Available: \url{https://doi.org/10.1145/3622867}
\BIBentrySTDinterwordspacing

\bibitem{Paulsen2022linsyn}
B.~Paulsen and C.~Wang, ``{LinSyn}: Synthesizing tight linear bounds for arbitrary neural network activation functions,'' in \emph{Proc. of the 28th International Conference on Tools and Algorithms for the Construction and Analysis of Systems ({TACAS'22}), Held as Part of the European Joint Conferences on Theory and Practice of Software, {ETAPS} 2022, Munich, Germany, April 2-7, 2022, Proceedings, Part {I}}, ser. Lecture Notes in Computer Science, vol. 13243.\hskip 1em plus 0.5em minus 0.4em\relax Springer, 2022, pp. 357--376.

\bibitem{Paulsen2022example}
\BIBentryALTinterwordspacing
------, ``Example guided synthesis of linear approximations for neural network verification,'' in \emph{Proc. of the 34th International Conference on Computer Aided Verification ({CAV'22}), Haifa, Israel, August 7-10, 2022, Proceedings, Part {I}}, ser. Lecture Notes in Computer Science, vol. 13371.\hskip 1em plus 0.5em minus 0.4em\relax Springer, 2022, pp. 149--170. [Online]. Available: \url{https://doi.org/10.1007/978-3-031-13185-1\_8}
\BIBentrySTDinterwordspacing

\bibitem{Biktairov2023}
Y.~Biktairov and J.~Deshmukh, ``{SOL:} sampling-based optimal linear bounding of arbitrary scalar functions,'' in \emph{Advances in Neural Information Processing Systems 36: Annual Conference on Neural Information Processing Systems 2023 (NeurIPS'23), New Orleans, LA, USA, December 10 - 16, 2023}, 2023.

\bibitem{Ma2025}
\BIBentryALTinterwordspacing
Z.~Ma, Z.~Wang, and G.~Bai, ``\BIBforeignlanguage{en}{Convex {Hull} {Approximation} for {Activation} {Functions}},'' \emph{\BIBforeignlanguage{en}{Proc. of the ACM on Programming Languages}}, vol.~9, no. OOPSLA2, pp. 1007--1033, 2025. [Online]. Available: \url{https://dl.acm.org/doi/10.1145/3763086}
\BIBentrySTDinterwordspacing

\bibitem{Wang2023}
J.~Wang, A.~Gupta, and C.~Wang, ``Synthesizing {MILP} constraints for efficient and robust optimization,'' \emph{Proc. of the {ACM} Programming Languages}, vol.~7, no. {PLDI}, pp. 1896--1919, 2023.

\bibitem{Shi2025}
Z.~Shi, Q.~Jin, Z.~Kolter, S.~Jana, C.-J. Hsieh, and H.~Zhang, ``Neural network verification with branch-and-bound for general nonlinearities,'' in \emph{Tools and Algorithms for the Construction and Analysis of Systems}.\hskip 1em plus 0.5em minus 0.4em\relax Cham: Springer Nature Switzerland, 2025, pp. 315--335.

\bibitem{DBLP:journals/corr/abs-2002-12920}
K.~Xu, Z.~Shi, H.~Zhang, M.~Huang, K.~Chang, B.~Kailkhura, X.~Lin, and C.~Hsieh, ``Automatic perturbation analysis on general computational graphs,'' \emph{CoRR}, vol. abs/2002.12920, 2020.

\bibitem{Singh19deeppoly}
G.~Singh, T.~Gehr, M.~P{\"{u}}schel, and M.~T. Vechev, ``An abstract domain for certifying neural networks,'' \emph{Proc. of the {ACM} Programming Languages}, vol.~3, no. {POPL}, pp. 41:1--41:30, 2019.

\bibitem{Xu2020}
K.~Xu, Z.~Shi, H.~Zhang, Y.~Wang, K.~Chang, M.~Huang, B.~Kailkhura, X.~Lin, and C.~Hsieh, ``Automatic perturbation analysis for scalable certified robustness and beyond,'' in \emph{Advances in Neural Information Processing Systems 33: Annual Conference on Neural Information Processing Systems (NeurIPS'20), December 6-12, 2020, virtual}, 2020.

\bibitem{Zelazny2022}
\BIBentryALTinterwordspacing
T.~Zelazny, H.~Wu, C.~W. Barrett, and G.~Katz, ``On optimizing back-substitution methods for neural network verification,'' in \emph{22nd Formal Methods in Computer-Aided Design, {FMCAD} 2022, Trento, Italy, October 17-21, 2022}, A.~Griggio and N.~Rungta, Eds.\hskip 1em plus 0.5em minus 0.4em\relax {IEEE}, 2022, pp. 17--26. [Online]. Available: \url{https://doi.org/10.34727/2022/isbn.978-3-85448-053-2\_7}
\BIBentrySTDinterwordspacing

\bibitem{Wong2018provable}
\BIBentryALTinterwordspacing
E.~Wong and J.~Z. Kolter, ``Provable defenses against adversarial examples via the convex outer adversarial polytope,'' in \emph{Proceedings of the 35th International Conference on Machine Learning, {ICML} 2018, Stockholmsm{\"{a}}ssan, Stockholm, Sweden, July 10-15, 2018}, ser. Proceedings of Machine Learning Research, J.~G. Dy and A.~Krause, Eds.\hskip 1em plus 0.5em minus 0.4em\relax {PMLR}, 2018, pp. 5283--5292. [Online]. Available: \url{http://proceedings.mlr.press/v80/wong18a.html}
\BIBentrySTDinterwordspacing

\bibitem{Salman2019barrier}
\BIBentryALTinterwordspacing
H.~Salman, G.~Yang, H.~Zhang, C.~Hsieh, and P.~Zhang, ``A convex relaxation barrier to tight robustness verification of neural networks,'' in \emph{Advances in Neural Information Processing Systems 32: Annual Conference on Neural Information Processing Systems 2019, NeurIPS 2019, December 8-14, 2019, Vancouver, BC, Canada}, H.~M. Wallach, H.~Larochelle, A.~Beygelzimer, F.~d'Alch{\'{e}}{-}Buc, E.~B. Fox, and R.~Garnett, Eds., 2019, pp. 9832--9842. [Online]. Available: \url{https://proceedings.neurips.cc/paper/2019/hash/246a3c5544feb054f3ea718f61adfa16-Abstract.html}
\BIBentrySTDinterwordspacing

\bibitem{Lyu2020fastened}
\BIBentryALTinterwordspacing
Z.~Lyu, C.~Ko, Z.~Kong, N.~Wong, D.~Lin, and L.~Daniel, ``Fastened {CROWN:} tightened neural network robustness certificates,'' in \emph{The Thirty-Fourth {AAAI} Conference on Artificial Intelligence, {AAAI} 2020, The Thirty-Second Innovative Applications of Artificial Intelligence Conference, {IAAI} 2020, The Tenth {AAAI} Symposium on Educational Advances in Artificial Intelligence, {EAAI} 2020, New York, NY, USA, February 7-12, 2020}.\hskip 1em plus 0.5em minus 0.4em\relax {AAAI} Press, 2020, pp. 5037--5044. [Online]. Available: \url{https://doi.org/10.1609/aaai.v34i04.5944}
\BIBentrySTDinterwordspacing

\bibitem{Clevert2015elu}
D.~Clevert, T.~Unterthiner, and S.~Hochreiter, ``Fast and accurate deep network learning by exponential linear units ({ELUs}),'' in \emph{Proc. of the 4th International Conference on Learning Representations (ICLR'16), San Juan, Puerto Rico, May 2-4, 2016}, 2016.

\bibitem{PIYAVSKII197257}
S.~A. Piyavskii, ``An algorithm for finding the absolute extremum of a function,'' \emph{Journal of {USSR} Computational Mathematics and Mathematical Physics}, vol.~12, no.~4, pp. 57--67, 1972.

\bibitem{cobzacs2019lipschitz}
{\c{S}}.~Cobza{\c{s}}, R.~Miculescu, A.~Nicolae \emph{et~al.}, \emph{Lipschitz functions}.\hskip 1em plus 0.5em minus 0.4em\relax Springer, 2019.

\bibitem{Zhang18}
H.~Zhang, T.~Weng, P.~Chen, C.~Hsieh, and L.~Daniel, ``Efficient neural network robustness certification with general activation functions,'' in \emph{Advances in Neural Information Processing Systems 31: Annual Conference on Neural Information Processing Systems (NeurIPS'18), December 3-8, 2018, Montr{\'{e}}al, Canada}, 2018, pp. 4944--4953.

\bibitem{Andrew1979}
\BIBentryALTinterwordspacing
A.~M. Andrew, ``Another efficient algorithm for convex hulls in two dimensions,'' \emph{Information Processing Letters}, vol.~9, no.~5, pp. 216--219, 1979. [Online]. Available: \url{https://doi.org/10.1016/0020-0190(79)90072-3}
\BIBentrySTDinterwordspacing

\bibitem{Paszke2019}
A.~Paszke, S.~Gross, F.~Massa, A.~Lerer, J.~Bradbury, G.~Chanan, T.~Killeen, Z.~Lin, N.~Gimelshein, L.~Antiga, A.~Desmaison, A.~K{\"{o}}pf, E.~Z. Yang, Z.~DeVito, M.~Raison, A.~Tejani, S.~Chilamkurthy, B.~Steiner, L.~Fang, J.~Bai, and S.~Chintala, ``{PyTorch}: An imperative style, high-performance deep learning library,'' in \emph{Advances in Neural Information Processing Systems 32: Annual Conference on Neural Information Processing Systems 2019, NeurIPS 2019, December 8-14, 2019, Vancouver, BC, Canada}, 2019, pp. 8024--8035.

\bibitem{Bak2021}
S.~Bak, C.~Liu, and T.~T. Johnson, ``The second international verification of neural networks competition {(VNN-COMP} 2021): Summary and results,'' \emph{CoRR}, vol. abs/2109.00498, 2021.

\bibitem{Muller2022}
M.~N. M{\"{u}}ller, C.~Brix, S.~Bak, C.~Liu, and T.~T. Johnson, ``The third international verification of neural networks competition {(VNN-COMP} 2022): Summary and results,'' \emph{CoRR}, vol. abs/2212.10376, 2022.

\bibitem{Brix2023}
C.~Brix, S.~Bak, C.~Liu, and T.~T. Johnson, ``The fourth international verification of neural networks competition {(VNN-COMP} 2023): Summary and results,'' \emph{CoRR}, vol. abs/2312.16760, 2023.

\bibitem{Brix2024}
C.~Brix, S.~Bak, T.~T. Johnson, and H.~Wu, ``The fifth international verification of neural networks competition {(VNN-COMP} 2024): Summary and results,'' \emph{CoRR}, vol. abs/2412.19985, 2024.

\bibitem{Kaulen2025}
K.~Kaulen, T.~Ladner, S.~Bak, C.~Brix, H.~Duong, T.~Flinkow, T.~T. Johnson, L.~Koller, E.~Manino, T.~H. Nguyen, and H.~Wu, ``The 6th international verification of neural networks competition {(VNN-COMP} 2025): Summary and results,'' \emph{CoRR}, vol. abs/2512.19007, 2025.

\bibitem{LeCun1998mnist}
Y.~LeCun, L.~Bottou, Y.~Bengio, and P.~Haffner, ``Gradient-based learning applied to document recognition,'' \emph{Proc. of the {IEEE}}, vol.~86, no.~11, pp. 2278--2324, 1998.

\bibitem{krizhevsky2009learning}
\BIBentryALTinterwordspacing
A.~Krizhevsky and G.~Hinton, ``Learning multiple layers of features from tiny images,'' University of Toronto, Toronto, Ontario, Tech. Rep.~0, 2009. [Online]. Available: \url{https://www.cs.toronto.edu/~kriz/learning-features-2009-TR.pdf}
\BIBentrySTDinterwordspacing

\bibitem{Ramachandran2018searching}
P.~Ramachandran, B.~Zoph, and Q.~V. Le, ``Searching for activation functions,'' in \emph{Proc. of the 6th International Conference on Learning Representations ({ICLR'18}), Vancouver, BC, Canada, April 30 - May 3, 2018, Workshop Track Proceedings}.\hskip 1em plus 0.5em minus 0.4em\relax OpenReview.net, 2018.

\bibitem{Hendrycks2016gelu}
D.~Hendrycks and K.~Gimpel, ``Bridging nonlinearities and stochastic regularizers with gaussian error linear units,'' \emph{CoRR}, vol. abs/1606.08415, 2016.

\bibitem{Roy2022lisht}
S.~K. Roy, S.~Manna, S.~R. Dubey, and B.~B. Chaudhuri, ``{LiSHT}: Non-parametric linearly scaled hyperbolic tangent activation function for neural networks,'' in \emph{Proc. of the 7th International Conference on Computer Vision and Image Processing ({CVIP'22}), Nagpur, India, November 4-6, 2022, Revised Selected Papers, Part {I}}, ser. Communications in Computer and Information Science, vol. 1776.\hskip 1em plus 0.5em minus 0.4em\relax Springer, 2022, pp. 462--476.

\bibitem{gomes2008complementary}
G.~S. d.~S. Gomes and T.~B. Ludermir, ``Complementary {Log-Log} and {Probit}: Activation functions implemented in artificial neural networks,'' in \emph{Proc. of 2008 Eighth International Conference on Hybrid Intelligent Systems, Barcelona, Spain, September 10-12, 2008}.\hskip 1em plus 0.5em minus 0.4em\relax IEEE, 2008, pp. 939--942.

\bibitem{Misra2020mish}
D.~Misra, ``Mish: {A} self regularized non-monotonic activation function,'' in \emph{Proc. of the 31st British Machine Vision Conference ({BMVC}'20), Virtual Event, UK, September 7-10, 2020}.\hskip 1em plus 0.5em minus 0.4em\relax {BMVA} Press, 2020.

\bibitem{Singh2018deepz}
G.~Singh, T.~Gehr, M.~Mirman, M.~P{\"{u}}schel, and M.~T. Vechev, ``Fast and effective robustness certification,'' in \emph{Advances in Neural Information Processing Systems 31: Annual Conference on Neural Information Processing Systems (NeurIPS'18), December 3-8, 2018, Montr{\'{e}}al, Canada}, 2018, pp. 10\,825--10\,836.

\bibitem{Szegedy2013}
\BIBentryALTinterwordspacing
C.~Szegedy, W.~Zaremba, I.~Sutskever, J.~Bruna, D.~Erhan, I.~J. Goodfellow, and R.~Fergus, ``Intriguing properties of neural networks,'' in \emph{Proc. of 2nd International Conference on Learning Representations, {ICLR} 2014, Banff, AB, Canada, April 14-16, 2014, Conference Track Proceedings}, Y.~Bengio and Y.~LeCun, Eds., 2014. [Online]. Available: \url{http://arxiv.org/abs/1312.6199}
\BIBentrySTDinterwordspacing

\bibitem{Madry2018pgd}
A.~Madry, A.~Makelov, L.~Schmidt, D.~Tsipras, and A.~Vladu, ``Towards deep learning models resistant to adversarial attacks,'' in \emph{Proc. of the 6th International Conference on Learning Representations ({ICLR}'18), Vancouver, BC, Canada, April 30 - May 3, 2018}.\hskip 1em plus 0.5em minus 0.4em\relax OpenReview.net, 2018.

\bibitem{Zhou2023pwl}
Z.~Zhou, M.~Baratchi, G.~Si, H.~H. Hoos, and G.~Huang, ``Adaptive error bounded piecewise linear approximation for time-series representation,'' \emph{Journal of Engineering Applications of Artificial Intelligence}, vol. 126, p. 106892, 2023.

\bibitem{scipy2020}
P.~Virtanen, R.~Gommers, T.~E. Oliphant, M.~Haberland, T.~Reddy, D.~Cournapeau, E.~Burovski, P.~Peterson, W.~Weckesser, J.~Bright, S.~J. {van der Walt}, M.~Brett, J.~Wilson, K.~J. Millman, N.~Mayorov, A.~R.~J. Nelson, E.~Jones, R.~Kern, E.~Larson, C.~J. Carey, {\.I}.~Polat, Y.~Feng, E.~W. Moore, J.~{VanderPlas}, D.~Laxalde, J.~Perktold, R.~Cimrman, I.~Henriksen, E.~A. Quintero, C.~R. Harris, A.~M. Archibald, A.~H. Ribeiro, F.~Pedregosa, P.~{van Mulbregt}, and {SciPy 1.0 Contributors}, ``{{SciPy} 1.0: Fundamental Algorithms for Scientific Computing in Python},'' \emph{Nature Methods}, vol.~17, pp. 261--272, 2020.

\bibitem{Brent2002}
R.~P. Brent, \emph{\BIBforeignlanguage{eng}{Algorithms for Minimization without Derivatives}}, ser. Dover books on mathematics.\hskip 1em plus 0.5em minus 0.4em\relax Mineola, NY: Dover Publications, 2002.

\end{thebibliography}

\clearpage
\appendix
\subsection{Extended Description of the Backsubstitution Procedure}
\label{sec:backsubs}

In this section, we provide more details on the backsubstitution procedure $\textsc{Backsubstitute}(\NN, \X)$.

Fundamentally, backsubstitution is a bound propagation procedure.
It computes an overapproximate solution to Eq.~\eqref{eq:opt_nn} by successively computing affine upper and lower bounds
$\up{\vec{a}}^T \vec{x} + \up{d}$ and
$\lo{\vec{a}}^T \vec{x} + \lo{d}$.
These symbolic bounds are computed for both activations and preactivations
of each layer, in the order
$\vec{n}_k \rightarrow \hat{\vec{n}}_k \rightarrow 
\vec{n}_{k-1} \rightarrow \hat{\vec{n}}_{k-1} \rightarrow \dots$,
going backwards from outputs to inputs.
We use the symbols $\up{\vec{a}}_k$ and $\up{d}_k$
in affine forms expressed in terms
of preactivation values
$\vec{\hat{n}}_k$, and $\up{\vec{\hat{a}}}_k$ and $\up{\hat{d}}_k$
expressed in terms of
activation values $\vec{n}_k$; similarly for lower bounds,
replacing the overlines by underlines.

In parallel to the \emph{symbolic bounds}, i.e.
affine expressions over earlier-layer variables,
the procedure maintains 
\emph{concrete bounds}, that is scalar intervals $[l,u]$, for
the activation and preactivation values of each
neuron, as will be detailed below.

The procedure starts with an inital affine function in the NN's output variables $\vec{y}$ expressing 
the violation $v^*$ of the output property:
$\vec{a}^T \vec{y} - d$. This also gives us the first
upper bound (in fact, it is not an approximation, but exact) in the form
$\up{\vec{a}}_L^T \vec{\hat{n}}_L + \up{d}_L$, with 
$\up{\vec{a}}_L = \vec{a}$, $\vec{\hat{n}}_L = \vec{y}$,
and $\up{d}_L = -d$. Note that, for the last layer, we only need the upper bound. 
For all other layers, upper and lower bounds are needed to compute relaxations of the activation functions.

Backsubstitution then computes, for decreasing layers,
new values $\up{\vec{a}}_{k-1}$ and
$\up{d}_{k-1}$ 
as well as $\lo{\vec{a}}_{k-1}$ and
$\lo{d}_{k-1}$
for affine, overapproximating upper and lower bound
functions
$\up{\vec{a}}_{k-1}^T \vec{\hat{n}}_{k-1} + \up{d}_{k-1}$ and
$\lo{\vec{a}}_{k-1}^T \vec{\hat{n}}_{k-1} + \lo{d}_{k-1}$ 
that only depend on variables $\vec{\hat{n}}_{k-1}$ from the previous layer.
Analogous computations are interleavingly
performed to bound $\vec{n}_{k-1}$.
Repeating this, we finally arrive at affine approximations
that only contain input variables.
The computations, in detail, are as follows:

\noindent
{\bf $\hat{\vec{n}}_k \rightarrow \vec{n}_{k-1}$:} Going from 
  preactivation to activation of the previous layer,
  defining $\up{\hat{\vec{a}}}^T_{k-1}$
  and $\up{\hat{d}}_{k-1}$ by:
  \begin{align}
    \up{\vec{a}}_k^T \hat{\vec{n}}_k + \up{d}_k &= \up{\vec{a}}_k^T \left(W_k \vec{n}_{k-1} + \vec{b}_k \right) + \up{d}_k \nonumber \\
        &= \left( \up{\vec{a}}_k^T W_k \right) \vec{n}_{k-1} + \left( \up{\vec{a}}_k^T \vec{b}_k + \up{d}_k \right) \nonumber \\
                                                &= \up{\hat{\vec{a}}}^T_{k-1} \vec{n}_{k-1} + \up{\hat{d}}_{k-1} \;. \label{eq:backsubs-lin}
  \end{align}
\noindent
{\bf $\vec{n}_k \!\rightarrow\! \hat{\vec{n}}_k$:} From
  activation to preactivation, defining~$\up{\vec{a}}^T_k$~and~$\up{d}_k$:
  \begin{align}
    \up{\hat{\vec{a}}}^T_k \vec{n}_k + \up{\hat{d}}_k
    &\leq [\up{\vec{\hat{a}}}_k]_+^T \up{\vec{f}}(\hat{\vec{n}}_k) + [\up{\vec{\hat{a}}}_k]_-^T \lo{\vec{f}}(\hat{\vec{n}}_k) + \up{\hat{d}}_k
    \nonumber \\
    &= [\up{\vec{\hat{a}}}_k]_+^T \! \left(\up{\vec{\alpha}} \odot \hat{\vec{n}}_k \!+ \up{\vec{\beta}} \right) + [\up{\vec{\hat{a}}}_k]_-^T \left(\lo{\vec{\alpha}} \odot \hat{\vec{n}}_k\! + \lo{\vec{\beta}} \right) \! + \up{\hat{d}}_k \nonumber \\
    &= \up{\vec{a}}_k^T \hat{\vec{n}}_k + \up{d}_k \;, \label{eq:backsubs-act}
\end{align}
where $[x]_+ \!=\! \max(0, x)$ and $[x]_- \!=\! \min(0, x)$, applied elem\-ent-wise, denote
the positive and negative parts of $x$.
The computations for lower bounds are, \emph{mutatis mutandis}, the same.

In this step, we also need symbolic upper and lower bounds
$\lo{\vec{f}}$ and $\up{\vec{f}}$ of the vectorized
activation function $\vec{f}$ for concrete input ranges
of each neuron.
So, if a concrete bounding interval $[l,u]$ is available for a neuron's preactivation value $\hat{n}$, we can compute one linear lower and one upper relaxations $\lo{f}(x)$ and $\up{f}(x)$ for that particular neuron such 
that
\begin{equation}
  \lo{f}(x) := \lo{\alpha} x + \lo{\beta} \leq f(x) \leq \up{\alpha} x + \up{\beta} =: \up{f}(x) \label{eq:act-approx-appendix}  
\end{equation}
for all $x \in [l, u]$.
These relaxations, gathered for all
neurons on level $k$, are then 
employed in Eq.~\eqref{eq:backsubs-act}.

As the backsubstitution process
results in a relaxation that only depends on the
NN's input variables, it can, together with
the hyperrectangle-constraints of the input region $\mathcal{X}$, be used to compute
an upper bound $\up{v^*}$ of the 
maximum violation $v^*$.

\paragraph{Invariant}

That $\up{v^*}$ is indeed an upper bound is justified
by the following invariant of the backsubstitution
process:
\begin{equation}
\up{\vec{a}}_k^T \hat{\vec{n}}_k + \up{d}_k \;\geq\;
\vec{a}^T \vec{y} - d
\quad\text{for}~ 1 \leq k \leq L, \label{eq:invariant}
\end{equation}
The invariant obviously holds for $k=L$
by definition. Using Equations~\eqref{eq:backsubs-lin}
and \eqref{eq:backsubs-act} we have
$$\up{\vec{a}}_{k-1}^T \hat{\vec{n}}_{k-1} + \up{d}_{k-1} 
\;\geq\;\, \up{\vec{a}}_k^T \hat{\vec{n}}_k + \up{d}_k \enspace,$$
maintaining the invariant for decreasing values of $k$.
We thus finally obtain
$\up{\vec{a}}_1^T \hat{\vec{n}}_1 + \up{d}_1 \geq
\vec{a}^T \vec{y} - d$, and from this, by Equation~\eqref{eq:backsubs-lin} and $\vec{n}_0 = \vec{x}$, 
\begin{align}
    \up{\vec{\hat{a}}}_0^T \vec{x} + \up{\hat{d}}_0 \geq \vec{a}^T \vec{y} - d \quad \text{for all}~ \vec{x} \in \X
\end{align}
Thus, if 
$\up{\vec{\hat{a}}}_0^T \vec{x} + \up{\hat{d}}_0 \leq 0,$
then so is $\vec{a}^T \vec{y} - d$ for all $x \in \mathcal{X}$.
Moreover, a concrete bound, $\up{v^*}$, on $v^*$ can be computed for $\X = [\lo{\vec{x}}, \up{\vec{x}}]$ using 
\begin{align}
    \up{v^*} = [\up{\vec{\hat{a}}}_0]_+^T \up{\vec{x}} + [\up{\vec{\hat{a}}}_0]_-^T \lo{\vec{x}} + \up{\hat{d}}_0 \geq 
    \up{\vec{\hat{a}}}_0^T {\vec{x}} + \up{\hat{d}}_0 \;. \label{eq:concretize}
\end{align} 
If $\up{v^*} \leq 0$, then the overapproximate approach is able to prove the property --- otherwise, no conclusions can be drawn.

The preactivation bounds necessary for the above steps are computed by recursively applying the steps in Equations \eqref{eq:backsubs-lin}-\eqref{eq:backsubs-act} and the concretization step \eqref{eq:concretize} for each neuron in the earlier layers --- once to compute an upper bound on its preactivation values and once to compute a lower bound.

\subsection{Closed-form Solutions for Critical Points}
\label{sec:closed-form-crit}

In the following, we derive closed-form expressions for the candidate critical points used in our shifting procedure.
Given an activation function $f$ and a candidate slope $m$, we consider
$g(x)=f(x)-mx$ and compute the set of critical points
$\hat{\mathcal{X}}^* \subseteq [l,u]$ such that 
\begin{align}
    \max_{x \in [l, u]} g(x) = \max_{x \in \left(\hat{X}^* \cup \{l, u\}\right)} g(x) \;.
\end{align}
For functions with non-constant derivative, $\hat{\mathcal{X}}^*$ coincides with the preimage of the derivative in Proposition \ref{prop:preim}, in other cases, it can also contain the break-points of piecewise functions.
Evaluating $g(x)$ at the points in $\hat{\mathcal{X}}^*$ determines the minimal shift $b$ such that
$U(x)=mx+b$ is a sound upper (or lower) relaxation of $f(x)$ on $[l,u]$.
In some cases, we distinguish maximizers and minimizers by $\hat{\mathcal{X}}^+$ and $\hat{\mathcal{X}}^-$, respectively; but in our implementation we simply evaluate all candidates
$\hat{\mathcal{X}}^*=\hat{\mathcal{X}}^+\cup \hat{\mathcal{X}}^- \cup \{l, u\}$.

\subsubsection{Piecewise Linear Activation Functions}
\label{ssec:piece-wise-linear}

In this subsection we present a general solution to an arbitrary piecewise linear activation function, consisting of $n$ linear pieces. The function does not need to be continuous, but we expect that it's domain covers the whole set of real numbers (i.e., $f : \R \rightarrow \R$).

We consider a finite decomposition of $\R$ into $n \in \N_{>0}$ non-empty disjoint intervals $\I_1, \I_2, \hdots, \I_n$, such that $\bigcup_{i=1}^n \I_i = \R$ and $\I_i \cap \I_j = \emptyset$ for all $i,j \in \{1, 2, \ldots, n \}$ with $i \neq j$. We use $a_i$ and $b_i$ to refer to the lower resp. upper bound of the interval $\I_i$. Each of the intervals might be either open or closed in each direction, but we assume w.l.o.g. that the intervals are ordered according to the order on $\R$: $\I_1$ is unbounded from below (i.e. $\I_1$ is left-open with $a_1=-\infty$), $\I_n$ is unbounded from above (i.e. $\I_n$ is right-open with $b_n=\infty$), and $b_i=a_{i+1}$ for $i=1,\ldots,n-1$ (note that the property of decomposition implies that $\I_i$ is right-closed if and only if $\I_{i+1}$ is left-open).

A piecewise linear function is composed of $n$ linear function, each of which is applied over one of these intervals.

\begin{definition}[Piecewise linear function]
    A \emph{piecewise linear function} is a function of type $f:\R\rightarrow\R$ such that there exists a finite decomposition of $\R$ into  $n\in\N_{>0}$ non-empty intervals $\I_1, \I_2, \hdots, \I_n$ with
    \begin{equation}
        f(x) = 
        \begin{cases} 
            f_1(x) & \text{if } x \in \I_1 \\
            f_2(x) & \text{if } x \in \I_2 \\
            \vdots & \vdots \\
            f_{n}(x) & \text{if } x \in \I_n
        \end{cases}
    \end{equation}
for some linear functions $f_i:\I_i\rightarrow\R$, $f_i(x)=m_i\cdot x + n_i$ with $m_i,n_i\in\R$ for $i=1,\ldots,n$.

For any interval $[l,u]\subseteq\R$, we call 
    \begin{eqnarray}
    F=\left\{(x,f(x))^T\,\middle|\,x\in[l,u]\subseteq\R\right\}
    \label{eq:graph}
    \end{eqnarray}
the \emph{graph} of $f$ over $[l,u]$.
\end{definition}

In this case, considering the candidate line $y = m \cdot x$, the error is also piecewise linear function of the following form:

\begin{definition}[Piecewise error function]
    \begin{equation}
        e(x) = 
        \begin{cases} 
            e_1(x) & \text{if } x \in \I_1 \\
            e_2(x) & \text{if } x \in \I_2 \\
            \vdots & \vdots \\
            e_n(x) & \text{if } x \in \I_n
        \end{cases}
    \end{equation}
\end{definition}

for some linear functions $e_i:\I_i\rightarrow\R$, $e_i(x)=f_i(x) - m \cdot x$ for $i=1,\ldots,n$.

Our goal is to find the maximum (or minimum for finding a lower relaxation) of the error function over the domain~$[l, u]$:

\begin{equation}
    b = \max_{x \in [l, u]}{e(x)} = \max_{i \in \{1, 2, \hdots, n\}}{\max_{x \in \I_i \cap [l, u]}{e_i(x)}} \; .
\end{equation}

Since each piece of $e(x)$ is linear, there are no local extrema inside the domain $\I_i$ of a single piece $e_i(x)$, thus the maximum of a single piece of the error function can happen only at the endpoints of its domain, i.e. the maximum of $e_i(x)$ on the interval $\I_i$ can be either $e_i(a_i)$ or $e_i(b_i)$, where $a_i$ and $b_i$ are the endpoints of interval $\I_i$. Then, for indices $k_1$ and $k_2$, such that $l \in \I_{k_1}$ and $u \in \I_{k_2}$, $k_1 < k_2$, the maximum of the error function $e(x)$ over the domain $[l, u]$ is:

\begin{eqnarray*}
    b = \text{max} \bigl(
        & \{ e_{k_1}(l), e_{k_1}(b_{k_1}) \} & \cup \\ 
        & \{ e_k(a_k) \; | \; k_1 < k < k_2 \} & \cup \\ 
        & \{ e_k(b_k) \; | \; k_1 < k < k_2 \} & \cup  \\
        & \{ e_{k_2}(a_{k_2}), e_{k_2}(u)\} & \bigr) \; .
\end{eqnarray*}

\subsubsection{Sigmoid}

The sigmoid function is defined as
\begin{align}
    \sigma(x) = \frac{1}{1 + e^{-x}} \;.
\end{align}
It is an injection on $\R$ and a surjection to $(0, 1)$.
Therefore it has an inverse function on $(0, 1)$:
\begin{align}
    \sigma^{-1}(x) = -\log\left( \frac{1}{x} - 1 \right) \;.
\end{align}

We want to find the extrema of $g(x) = \sigma(x) - m \cdot x$ with derivative $g'(x) = \sigma(x)(1 - \sigma(x)) - m$.
The critical points satisfy
\begin{align}
    g'(x) &= \sigma(x) (1 - \sigma(x)) - m = 0\\
    \Leftrightarrow& -\sigma(x)^2 + \sigma(x) - m = 0 \quad \text{let } s = \sigma(x)\\
    \Leftrightarrow& -s^2 + s - m = 0
\end{align}
By the quadratic formula we get the candidate solutions
\begin{align}
    s_{1,2} = \frac{1}{2} \pm \sqrt{\frac{1}{4} - m} \;,
\end{align}
if $m \in [0, 1/4]$.
To ensure there is an $x$, s.t. $s_i = \sigma(x)$, we need to check if $s_i \in (0, 1)$.
Note that this condition is fulfilled iff. $m \in (0, 1/4]$.
The set of critical points is then
\begin{align}
    \hat{\mathcal{X}}^* = \{x ~|~ s_i \in (0, 1) \wedge x = \sigma^{-1}(s_i)\} \;.
\end{align}

\subsubsection{Tanh}

The proof for $\tanh$ proceeds similarly to the proof for $\sigma$.
The $\tanh$ function is an injection on $\R$ and a surjection to $(-1, 1)$. Therefore it also has an inverse function $\arctanh$ on $(-1, 1)$.

We want to find the extrema of $g(x) = \tanh(x) - m \cdot x$ with derivative $g'(x) = 1 - \tanh(x)^2 - m$. The critical points thus have to satisfy
\begin{align}
    g'(x)& = 1 - \tanh(x)^2 - m = 0\\
    &\Leftrightarrow -t^2 + 1 - m = 0 && (\text{let } t = \tanh(x))
\end{align}
with solutions
\begin{align}
    t_{1,2} = \pm \sqrt{1 - m} \;,
\end{align}
which is defined for $m \in [0, 1]$.
To ensure that there is an $x$ s.t. $t_i = \tanh(x)$, we need to restrict our solutions to $t_i \in (-1, 1)$.
Note that this condition is fulfilled iff. $m \in (0, 1]$.
The set of critical points is then
\begin{align}
    \hat{\mathcal{X}}^* = \left\{x ~\middle|~ t_i \in (-1, 1) \wedge x = \arctanh(t_i)\right\} \;.
\end{align}

\subsubsection{Abs}

Since the absolute value function
\begin{align}
    \text{abs}(x) = \abs{x} = \begin{cases}
      -x & \text{if } x < 0 \\
       x & \text{if } x \geq 0
    \end{cases}
\end{align}
is piecewise linear, we can just apply the results of Section \ref{ssec:piece-wise-linear}.

\subsubsection{Acos}

We have $g(x) = \arccos(x) - m \cdot x$ and $g'(x) = \frac{-1}{\sqrt{1 - x^2}} - m$. 
The critical points can be found by setting the derivative equal to zero:
\begin{align}
    g'(x) = & \frac{-1}{\sqrt{1 - x^2}} - m = 0 \\
            \Leftrightarrow &~ \frac{-1}{\sqrt{1 - x^2}} = m \\ 
            \Leftrightarrow &~ \sqrt{1 - x^2} = -\frac{1}{m} &&, m \neq 0\\
            \Leftrightarrow &~ 1 - x^2 = \frac{1}{m^2} &&, m < 0 \\
            \Leftrightarrow &~ x^2 = 1 - \frac{1}{m^2} &&, m < 0\\
            \Leftrightarrow &~ x = \pm \sqrt{1 - \frac{1}{m^2}} &&, m < 0\; .
\end{align}

The function $g(x)$ has no discontinuities or piecewise definitions, so the set of critical points are
\begin{align}
    \hat{\mathcal{X}}^* = \begin{cases}
        \pm \sqrt{1 - \frac{1}{m^2}} &, m < 0\\
        \emptyset &, \text{otherwise} \;.
    \end{cases}
\end{align}

\subsubsection{Cos}

We want to solve for the critical points of $g(x) = \text{cos}(x) - m \cdot x$, with derivative $g'(x) = -\text{sin}(x) - m$. 
The solution is analogous to the solution for the sine function described in Appendix \ref{ssec:sin}.

\subsubsection{Exp}

We have $g(x) = e^x - x\cdot m$.
With derivative $g'(x) = e^x - m$.
Therefore, critical points are 
\begin{align}
    g'(x) &= e^x - m = 0 \\
    \Leftrightarrow&~ e^x = m \\
    \Leftrightarrow& x = \log m, && m > 0
\end{align}

The second order derivative $g''(x) = e^x$, which is purely positive, means that the error function is convex on the whole set of $\R$. Thus, $x = \text{log}(m)$ is a global minimum, and it cannot be the $x$ value which maximizes the error function on the bounded domain $[l, u]$.

The function $g(x)$ has no discontinuities or piecewise definitions, so the set of critical points when maximizing the error function is

\begin{align}
    \hat{\mathcal{X}}^+ = \emptyset
\end{align}

Note that for minimizing the error function, the set of critical points is

\begin{align}
    \hat{\mathcal{X}}^- = \begin{cases}
        \log m &, m > 0\\
        \emptyset &, \text{otherwise}
    \end{cases}
\end{align}

\subsubsection{HardSigmoid, ReLU, LeakyReLU, Sign}

All these functions are piecewise linear, thus our solution from Section \ref{ssec:piece-wise-linear} should work here.

\begin{equation}
    \text{HardSigmoid}(x) = \begin{cases}
        0 & \text{if} x \leq -3 \\
        1 & \text{if} x \geq +3 \\
        \frac{x}{6} + \frac{1}{2} & \text{otherwise}
    \end{cases}
\end{equation}

\begin{equation}
    \text{ReLU}(x) = \begin{cases}
        0 & \text{if } x < 0 \\
        x & \text{otherwise}
    \end{cases}
\end{equation}

\begin{equation}
    \text{LeakyReLU}(x) = \begin{cases}
        \gamma x & \text{if } x < 0 \\
        x & \text{otherwise}
    \end{cases}
\end{equation}

\begin{equation}
    \text{Sign}(x) = \begin{cases}
        -1 & \text{if } x < 0 \\
        +1 & \text{if } x > 0 \\
         0 & \text{otherwise}
    \end{cases}
\end{equation}

\subsubsection{HardSwish}

The HardSwish activation is defined as

\begin{equation}
    \mathrm{HardSwish}(x)=
    \begin{cases}
        0 & \text{if } x \le -3\\
        x & \text{if } x \ge 3\\
        \frac{x(x+3)}{6} & \text{otherwise.}
    \end{cases}
\end{equation}

We consider $g(x)=\mathrm{HardSwish}(x)-mx$.
On $(-3,3)$ we have
\begin{align}
    g(x) &= \frac{x(x+3)}{6}-mx,\\
    g'(x) &= \frac{2x+3}{6}-m.
\end{align}

Thus, any interior critical point must satisfy
\begin{align}
    g'(x)=0
    \;\Leftrightarrow\;
    \frac{2x+3}{6}=m
    \;\Leftrightarrow\;
    x^*(m)=3m-\frac{3}{2}.
\end{align}

The point is admissible iff.~$x^*(m)\in(-3,3)$, equivalently $m\in(-\frac12,\frac32)$.
Moreover, $g''(x)=\frac13>0$ on $(-3,3)$, hence $x^*(m)$ is a (strict) local minimum.
Therefore, when maximizing $g$ over a bounded interval $[l,u]$, no interior critical point needs to be considered and it suffices to check only the interval endpoints and the breakpoints $\{-3,3\}$ (when they lie in $[l,u]$).
Conversely, when minimizing $g$, the candidate critical point is $x^*(m)$ whenever $m\in(-\frac12,\frac32)$ and $x^*(m)\in[l,u]$.

\subsubsection{Sin}
\label{ssec:sin}

We have $g(x) = \sin(x) - m \cdot x$ with derivative $g'(x) = \cos(x) - m$. 
Assuming that the co-domain of $\arccos$ is $[0, \pi]$ (with $[-\pi/2, \pi/2]$ being another popular convention), i.e. $\arccos: [-1, 1] \to [0, \pi]$, the critical points \emph{inside} $[0, \pi]$ can be computed as 
\begin{align}
    g'(x) = &~\cos(x) - m = 0 \\
    \Leftrightarrow &~\cos(x) = m \\
    \Leftrightarrow &~x = \arccos(m) &&, m \in [-1, 1] \;.
\end{align}
However, there can be critical points \emph{outside} of $[0, \pi]$ as well - to find those, we use symmetry and periodicity of the $\cos$ function.

For every $x = \arccos(m)$, also $\cos(-x) = m$ since $\cos(-x) =\cos(x)$.
Similarly, if $x$ is a solution, then any $x + 2\pi$ is another valid solution due to periodicity.

We can avoid enumerating all infinitely many solutions by taking the bounds $[l, u]$ of the approximation domain into account.
Since we are only interested in solutions $l \leq x \leq u$ inside the approximation domain, we can restrict our search to the set 
\begin{align}
    \{x + 2\pi k \mid k = k_{min},\dots,k_{max}\}
\end{align}
for $x = \pm \arccos(m)$, where
\begin{align}
    k_{min} &= \left\lceil\frac{l - x}{2\pi}\right\rceil\\
    k_{max} &= \left\lfloor\frac{u - x}{2\pi}\right\rfloor \;.
\end{align}

Indeed, we show that we can restrict our search to evaluation of at most two interior critical points in all cases, where $m \neq 0$.

As a first step, we can utilize curvature information.
The second order derivative is given by $g''(x) = -\text{sin}(x)$ and is always positive on the intervals $[2k\pi, (2k+1)\pi]$ and always negative on the domains $[(2k +1)\pi, (2k+2)\pi]$, for $k \in \Z$. 
Thus, if $\arccos(m) \in [2k\pi, (2k+1)\pi]$, then it is a local maximum, if $\arccos(m) \in [(2k+1)\pi, (2k+2)\pi]$, then it is a local minimum. 
Since the co-domain of the $\arccos(\cdot)$ function is $[0, \pi]$, we are always in the first case, i.e., $x = \arccos(m) \in [2k \cdot \pi, (2k+1) \cdot \pi]$ and $x + 2k\pi$ is therefore a local maximum.
Similarly, we are always in the second case for $x = -\arccos(m)$ and $x + 2k\pi$ is therefore a local minimum. 
Denoting the sets of local maxima and minima as $\mathcal{M}^+$ and $\mathcal{M}^-$ respectively, we obtain
\begin{align*}
    \mathcal{M}^+ &\! =\! \{ \arccos(m)\! +\! 2k\pi \mid k = k_{\text{min}}, \hdots, k_{\text{max}}, -1\! <\! m\! <\! 1 \} \\
    \mathcal{M}^- &\! =\! \{ -\arccos(m)\! +\! 2k\pi \mid k = k_{\text{min}}, \hdots, k_{\text{max}}, -1\! <\! m\! <\! 1 \} 
\end{align*}

Before we continue with our analysis, we state four lemmas about sequences of the local minima and maxima.

\begin{lemma}
    Let $m < 0$, then the sequence of local maxima $g(x_k)$ for maximizers
    \begin{align}
        x_k &= \arccos(m)\! +\! 2k\pi, && k \in \mathbb{Z}
    \end{align}
    is (strictly) monotonically increasing in $k$.
    \label{lemma:max-seq}
\end{lemma}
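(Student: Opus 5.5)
Since the sine function is $2\pi$-periodic, the plan is to compute the difference of consecutive terms directly. With $g(x)=\sin(x)-m\cdot x$ and $x_k=\arccos(m)+2k\pi$, periodicity gives $\sin(x_{k+1})=\sin(x_k)$, so
\begin{align*}
g(x_{k+1})-g(x_k) = -m\,(x_{k+1}-x_k) = -2\pi m \;.
\end{align*}
Under the hypothesis $m<0$ this difference equals $2\pi\abs{m}>0$. The difference is independent of $k$, so the sequence $g(x_k)$ is in fact arithmetic with a positive common difference. Strict monotonic increase in $k\in\Z$ follows immediately, for instance by induction, or by noting that $g(x_j)-g(x_k)=-2\pi m(j-k)>0$ whenever $j>k$.

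We do not need the fact that each $x_k$ is a local maximizer, which was derived above from $g''(x)=-\sin(x)$, to prove the monotonicity claim itself. That fact only explains why the lemma matters: when maximizing over $[l,u]$ with $m<0$, only the largest admissible index $k_{\max}$ needs to be evaluated.

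There is no real obstacle in this argument. The only point needing care is that $\arccos(m)$ is well defined, which requires $m\in[-1,0)$ (or $-1<m<0$ as in the definition of $\mathcal{M}^+$). For $m<-1$ the set of maximizers is empty and the statement holds vacuously. Note also that the difference computation never uses the value of $\arccos(m)$; it uses only that consecutive points differ by exactly $2\pi$.
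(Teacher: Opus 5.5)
Your proof is correct and takes essentially the same route as the paper's. Periodicity of $\sin$ cancels the sine terms, which leaves $-m$ times the gap $2\pi$ between consecutive arguments, and this is positive because $m<0$. Your direct computation $g(x_{k+1})-g(x_k)=-2\pi m>0$ is slightly cleaner than the paper's chain of non-strict equivalences, since it gives the claimed strict increase explicitly.
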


\begin{proof}
    We now show $g(x_k) < g(x_{k+1})$ for all $k \in \mathbb{Z}$.
    For any $a \in \R$, we have 
    \begin{align}
        & \sin\left( a + 2k \pi \right) - m \cdot \left( a + 2k \pi \right) \\
        \leq&  \sin\left( a + 2(k+1) \pi \right) - m \cdot \left( a + 2(k+1) \pi \right)\\
        \Leftrightarrow& - m \cdot \left( a + 2k \pi \right) \leq - m \cdot \left( a + 2(k+1) \pi \right) \label{eq:max-seq-periodicity}\\
        \Leftrightarrow&~ a + 2k \pi \leq a + 2(k+1) \pi \;, \label{eq:max-seq-neg}
    \end{align}
    where the last statement is obviously true, Equation \eqref{eq:max-seq-periodicity} holds due to periodicity of the $\sin$ function and Equation \eqref{eq:max-seq-neg} holds as $m < 0$.
    The statement then follows from choosing $a = \arccos(m)$.
\end{proof}

The next three lemmas can be proven analogously to Lemma \ref{lemma:max-seq}:
\begin{lemma}
    Let $m < 0$, then the sequence of local minima $g(x_k)$ for minimizers
    \begin{align}
        x_k &= -\arccos(m)\! +\! 2k\pi, && k \in \mathbb{Z}
    \end{align}
    is (strictly) monotonically increasing in $k$.
    \label{lemma:min-seq}
\end{lemma}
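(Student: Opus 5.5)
We argue exactly as in the proof of Lemma~\ref{lemma:max-seq}, replacing the offset $a = \arccos(m)$ by $a = -\arccos(m)$. Fix $m < 0$ and an arbitrary $a \in \R$. Since $\sin$ is $2\pi$-periodic, $\sin(a + 2k\pi) = \sin(a + 2(k+1)\pi)$ for all $k \in \Z$. Hence the difference between consecutive terms collapses to the linear part:
\begin{align*}
    g(a + 2(k+1)\pi) - g(a + 2k\pi) = -m \cdot 2\pi \;.
\end{align*}
Because $m < 0$, this quantity is strictly positive. Therefore
\begin{align*}
    g(a + 2k\pi) < g(a + 2(k+1)\pi) \quad \text{for every } k \in \Z \;.
\end{align*}

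Choosing $a = -\arccos(m)$ gives $g(x_k) < g(x_{k+1})$ for the minimizers $x_k = -\arccos(m) + 2k\pi$. The case analysis preceding the lemmas established that these points are local minima of $g$ (for $-1 < m < 0$). Hence the sequence of local minimum values is strictly increasing in $k$.

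There is no real obstacle. The only point worth checking is that the argument never uses the particular offset: the ordering comes solely from periodicity of $\sin$ and the sign of $m$. If $m \le -1$, $\arccos(m)$ is undefined or $g$ has no interior local minima, so the statement is vacuous or can be read for the formal sequence, for which the same computation still holds.
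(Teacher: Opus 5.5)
Your proof is correct and is the paper's argument. The paper proves Lemma~\ref{lemma:max-seq} for an arbitrary offset $a$ by cancelling the $\sin$ terms via $2\pi$-periodicity and using $m<0$ to order the linear parts, then says this lemma follows analogously; your choice $a=-\arccos(m)$ is exactly that, and computing the increment $-2\pi m>0$ makes the strictness explicit, which the paper's chain of non-strict inequalities does not.
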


\begin{lemma}
    Let $m > 0$, then the sequence of local maxima $g(x_k)$ for maximizers
    \begin{align}
        x_k &= \arccos(m)\! +\! 2k\pi, && k \in \mathbb{Z}
    \end{align}
    is (strictly) monotonically decreasing in $k$.
    \label{lemma:max-seq2}
\end{lemma}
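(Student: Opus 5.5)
The statement is Lemma~\ref{lemma:max-seq2}: for $m>0$ the values $g(x_k)$ at $x_k=\arccos(m)+2k\pi$ are strictly decreasing in $k$. I would mirror the proof of Lemma~\ref{lemma:max-seq} and flip the sign argument in the last step. The idea is that $\sin$ is $2\pi$-periodic, so consecutive values of $g(x)=\sin(x)-mx$ at points differing by $2\pi$ differ only in the linear term.

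Fix an arbitrary $a\in\R$ and compare $g(a+2k\pi)$ with $g(a+2(k+1)\pi)$. By periodicity, $\sin(a+2k\pi)=\sin(a+2(k+1)\pi)$. The desired inequality $g(a+2k\pi)>g(a+2(k+1)\pi)$ is therefore equivalent to
\[
-m(a+2k\pi)>-m(a+2(k+1)\pi).
\]
Since $m>0$, dividing by $-m$ reverses the inequality. It becomes $a+2k\pi<a+2(k+1)\pi$, which is trivially true. More directly, the difference is
\[
g(x_k)-g(x_{k+1})=2\pi m>0.
\]
This gives strict decrease, and strictness uses $m\neq 0$.

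Finally, I would set $a=\arccos(m)$. This is well defined for $m\in(0,1]$, which is the range in which the $x_k$ are the maximizers listed in $\mathcal{M}^+$. That yields $g(x_k)>g(x_{k+1})$ for all $k\in\Z$.

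There is no real obstacle here. The only point needing care is the direction of the inequality when multiplying by $-m$ with $m>0$, as opposed to $m<0$ in Lemma~\ref{lemma:max-seq}. One should also note that the argument never uses the maximizer property, only the $2\pi$ spacing.
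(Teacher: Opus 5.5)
Your proof is correct and follows the paper's route: the paper proves this lemma ``analogously to Lemma~\ref{lemma:max-seq}'', i.e., it uses $2\pi$-periodicity of $\sin$ to reduce the comparison to the linear term $-mx$ and then applies the sign of $m$, exactly as you do. Your explicit difference $g(x_k)-g(x_{k+1})=2\pi m>0$ makes the strictness cleaner than the paper's chain of non-strict inequalities.
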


\begin{lemma}
    Let $m > 0$, then the sequence of local minima $g(x_k)$ for minimizers
    \begin{align}
        x_k &= -\arccos(m)\! +\! 2k\pi, && k \in \mathbb{Z}
    \end{align}
    is (strictly) monotonically decreasing in $k$.
    \label{lemma:min-seq2}
\end{lemma}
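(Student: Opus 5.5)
The statement to prove is Lemma~\ref{lemma:min-seq2}: for $m>0$, the values $g(x_k)$ at the minimizers $x_k=-\arccos(m)+2k\pi$ of $g(x)=\sin(x)-m x$ are strictly decreasing in $k$. I would follow the template of the proof of Lemma~\ref{lemma:max-seq} directly. The only changes are the choice of base point and the sign of $m$.

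Fix an arbitrary $a\in\R$ and compare $g(a+2k\pi)$ with $g(a+2(k+1)\pi)$. By $2\pi$-periodicity of $\sin$, the sine terms are equal, namely $\sin(a+2k\pi)=\sin(a+2(k+1)\pi)=\sin(a)$. The difference therefore reduces to the linear parts:
\begin{align*}
g(a+2(k+1)\pi)-g(a+2k\pi) = -m\bigl(a+2(k+1)\pi\bigr)+m\bigl(a+2k\pi\bigr) = -2\pi m .
\end{align*}
Since $m>0$, this quantity is strictly negative. Hence $g(a+2(k+1)\pi)<g(a+2k\pi)$ for every $k\in\Z$. Equivalently, in the chain-of-equivalences style of Lemma~\ref{lemma:max-seq}: the desired inequality $g(x_{k+1})<g(x_k)$ is equivalent to $-m(a+2(k+1)\pi)<-m(a+2k\pi)$. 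Dividing by $-m<0$ flips it to $a+2k\pi<a+2(k+1)\pi$, which is obviously true.

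Finally, instantiate $a=-\arccos(m)$. The points $x_k$ are then exactly the minimizers in the statement, which gives strict monotone decrease of $g(x_k)$ in $k$.

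There is no real obstacle here. The argument never uses that $x_k$ are minimizers; it holds for any arithmetic progression with step $2\pi$. The only points needing care are the sign flip when dividing by $m$ and the well-definedness of $\arccos(m)$. The latter requires $m\le 1$, which matches the standing assumption $-1<m<1$ in the definition of $\mathcal{M}^-$. If that range is not assumed, I would note that for $m>1$ there are no critical points and the claim is vacuous.
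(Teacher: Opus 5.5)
Your proposal is correct and takes the same approach as the paper, which proves Lemma~\ref{lemma:min-seq2} ``analogously to Lemma~\ref{lemma:max-seq}'': $2\pi$-periodicity cancels the sine terms, the sign of $m$ fixes the direction of the remaining linear part, and one then sets $a=-\arccos(m)$. Your direct computation $g(x_{k+1})-g(x_k)=-2\pi m<0$ is a cleaner form of that step, and your note that $\arccos(m)$ needs $m\le 1$ is a fair caveat that the lemma's own statement omits.
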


Now we analyze five cases based on the value of $m$:

\begin{enumerate}
    \item $m \leq -1$: The error function $g(x)$ is monotonically increasing.
    \item $-1 < m < 0$:  The error function itself is not monotonous, but the sequence of local maxima (or minima) strictly increases (Lemmas \ref{lemma:max-seq},\ref{lemma:min-seq}).
    \item $m = 0$: The error function is the same as the $\text{sin}(x)$ function.
    \item $0 < m < +1$: The error function itself is not monotonous, but the sequence of local maxima (or minima) strictly decreases (Lemmas \ref{lemma:max-seq2},\ref{lemma:min-seq2}).
    \item $m \geq +1$: The error function is monotonically decreasing.
\end{enumerate}

In cases (1) and (5) due to the error function being monotonic, minimum and maximum are attained at the boundary values $l, u$ and there are no interior critical points, i.e. $\hat{\mathcal{X}}^+ = \hat{\mathcal{X}}^- = \emptyset$

In case (3), we all points in $\mathcal{M}^+ = \{ \frac{\pi}{2} + k\pi \mid k = k_{\text{min}}, \hdots, k_{\text{max}} \}$ attain the maximum value and all points in $\mathcal{M}^- = \{ -\frac{\pi}{2} + k\pi \mid k = k_{\text{min}}, \hdots, k_{\text{max}} \}$ attain the minimum value.
We set $\mathcal{X} = \mathcal{M}^+ \cup \{ l, u \}$ (for the upper bound) and $\mathcal{X} = \mathcal{M}^- \cup \{ l, u \}$ (when finding the lower bound).

In cases (2) and (4), the error function is not monotonic, but the sequence of maxima and the sequence of minima are monotonic. In case (2), we set $\hat{\mathcal{X}}^+ = \{ 2k_{\text{max}}\pi + \arccos(x) \}$ and $\hat{\mathcal{X}}^- = \{ 2k_{\text{min}}\pi - \arccos(x) \}$. 
In case (4), we use $\hat{\mathcal{X}}^+ = \{ 2k_{\text{min}}\pi + \arccos(x) \}$ and $\hat{\mathcal{X}}^- = \{ 2k_{\text{max}}\pi - \arccos(x) \}$.

\subsubsection{Sqrt}

We have $g(x) = \sqrt{x} - x \cdot m$, with derivative $g'(x) = \frac{1}{2}\sqrt{\frac{1}{x}} - m$.
Since $\sqrt{x}$ needs to be defined, we also assume $x\geq 0$.
The critical points can be derived as
\begin{align}
    g'(x) &= \frac{1}{2}\sqrt{\frac{1}{x}} - m = 0\\
    &\Leftrightarrow \frac{1}{2}\sqrt{\frac{1}{x}} = m\\
    &\Leftrightarrow \sqrt{\frac{1}{x}} = 2m\\
    &\Leftrightarrow \frac{1}{x} = 4m^2 &&, m \geq 0\\
    &\Leftrightarrow x = \frac{1}{4m^2} &&, m > 0
\end{align}

The second order derivative $g''(x) = -\frac{1}{4}\sqrt{\frac{1}{x^3}}$ is strictly negative, thus the error function is concave on the set of reals $\R$. Thus, the computed extrema $x = \frac{1}{4m^2}$ if $m > 0$ is the global maximum of the function.

Since $g(x)$ has no discontinuities or piecewise definitions, so the set of critical points is just
\begin{align}
    \hat{\mathcal{X}}^+ &= \begin{cases}
        \frac{1}{4m^2} & \text{if } m > 0 \\
        \emptyset & \text{otherwise}
    \end{cases} \\
    \hat{\mathcal{X}}^- &= \emptyset \;.
\end{align}

\subsection{Piecewise Linear Approximation}
\label{sssec:pwl-approx}

While the employed widening approach ensures soundness of $f_{\mathit{pwl}}$ for any PWL approximation $\tilde{f}_{\mathit{pwl}}$, we achieve better results, when $\tilde{f}_{\mathit{pwl}}$ closely approximates $f$.
However, minimizing the approximation error may result in a large number of linear segments increasing the runtime later on.
Therefore, we try to make each linear segment as long as possible while maintaining an approximation error smaller than a given tolerance~$\epsilon_{tol}$.
We are aware that finding good PWL approximations with a low number of segments that maintain an error bound is a known problem \cite{Zhou2023pwl}, however, we did not find the exact algorithm we used in the literature, so we state it here.
Our algorithm for constructing such PWL approximations is shown in Alg.~\ref{alg:pwl}.

\begin{algorithm}[t]
    \begin{algorithmic}[1]
        \Require Function $f$, domain $[l, u]$, error bound $\epsilon_{tol}$, $N \in \R_+$
        \Ensure Set $\Xi$ of segment boundaries of $\tilde{f}_{\mathit{pwl}}$ such that $|\tilde{f}_{\mathit{pwl}}(x) - f(x)| \lesssim \epsilon_{tol}$ (approx. smaller than $\epsilon_{tol}$)
        \Function{$\textsc{approxPWL}$}{$f, l, u$}
            \State $lo \gets l$ \Comment{lower limit for root finding}
            \State $\displaystyle hi \gets \frac{u - l}{N}$ \Comment{upper limit for root finding}
            \While{$\textsc{linApproxError}(f, l, hi) < \epsilon_{tol}$}
                \State $lo \gets hi$
                \State $hi \gets \min\left(u, hi + (hi - l)\right)$ \Comment{double step size}
            \EndWhile
            \State $\xi \gets \textsc{root}\left( \textsc{linApproxError}(f, lo, hi) - \epsilon_{tol} \right)$ \\ \Comment{if no root in interior, return $hi$}
            \If{$\xi < u$}
                \Return $\{l, \xi\} \cup \textsc{approxPWL}(f, \xi, u)$
            \Else ~
                \Return $\{l, u\}$
            \EndIf
        \EndFunction
    \end{algorithmic}
    \caption{Construction of the PWL approximation.}
    \label{alg:pwl}
\end{algorithm}

Our algorithm maintains bounds $[lo, hi]$, which contain the next segment boundary.
It starts at the lower bound \(l\) and iteratively evaluates the approximation error over intervals \([l, hi]\), doubling the distance to \(hi\) each time until the error exceeds the tolerance \(\epsilon_{tol}\).
At that point, there must exist a value between the previous and current \(hi\) where the error equals \(\epsilon_{tol}\); this point \(\xi\) is selected as a segment boundary of \(\tilde{f}_{\mathit{pwl}}\), and the process continues on \([\xi, u]\).
Since the subroutines $\textsc{linApproxError}$ and $\textsc{root}$ can be called many times while constructing $\tilde{f}_{\mathit{pwl}}$, we do not compute them using verified approaches, but instead rely on SciPy's \cite{scipy2020} implementation of Brent's methods for optimization and root finding \cite{Brent2002} which yield good approximations in practice.
The linear function in each segment $[\xi_i, \xi_{i+1}]$, for the final PWL approximation as well as in $\textsc{linApproxError}$, is constructed just as the line connecting the end points $(\xi_i, f(\xi_i))$ and $(\xi_{i+1}, f(\xi_{i+1}))$.

\subsection{Proofs}
\label{sec:proofs}

\setcounter{proposition}{0}
\setcounter{lemma}{0}

In this section we provide the proofs for \Cref{prop:preim}, \Cref{prop:imp-env}, \Cref{lemma:upper-max} and \Cref{lemma:upper-min}.

\begin{proposition}
    Let $f: \R \to \R$ be differentiable over the interval $[l, u]$ and let $f'^{-1}(y) = \{x \mid f'(x) = y\}$ be the pre-image of the derivative of $f$. Then
    \begin{align}
        y &= \max_{x \in [l, u]} f(x) - m \cdot x \\
          &= \max \left(\{l, u \} \cup \left(f'^{-1}(m) \cap [l, u] \right)\right)
    \end{align}
\end{proposition}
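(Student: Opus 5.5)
This is the classical Fermat/extreme-value argument applied to the auxiliary function $g(x) = f(x) - m\cdot x$ on $[l,u]$. The appendix restatement abbreviates the right-hand side as a maximum over a set of points; I read it as in the main-text version of \Cref{prop:preim}, namely the maximum of $g$ over the candidate set $C := \{l,u\} \cup \big(f'^{-1}(m)\cap[l,u]\big)$.

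\textbf{Existence of a maximizer.} Since $f$ is differentiable on $[l,u]$, it is continuous there, and so is $g$. As $[l,u]$ is compact, the extreme value theorem yields a point $x^\star\in[l,u]$ with $g(x^\star) = \max_{x\in[l,u]} g(x)$. In particular the left-hand maximum is attained and well defined.

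\textbf{Locating the maximizer.} If $x^\star\in\{l,u\}$, then $x^\star\in C$. Otherwise $x^\star\in(l,u)$ is an interior local maximum of the differentiable function $g$. By Fermat's theorem (compare the one-sided difference quotients, which are $\geq 0$ from the left and $\leq 0$ from the right), $g'(x^\star)=0$, i.e.\ $f'(x^\star)=m$. Hence $x^\star\in f'^{-1}(m)\cap[l,u]\subseteq C$.

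\textbf{Conclusion.} In either case $x^\star\in C$, so $\max_{x\in C} g(x) \geq g(x^\star)$. Conversely, $C\subseteq[l,u]$ gives $\max_{x\in C} g(x)\leq \max_{x\in[l,u]} g(x) = g(x^\star)$. The two sides are therefore equal. The maximum over $C$ exists because it is attained at $x^\star$, even if $f'^{-1}(m)\cap[l,u]$ is infinite: the supremum over $C$ is bounded above by $g(x^\star)$ and achieved there.

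\textbf{Main obstacle.} There is no real mathematical difficulty. The only delicate points are interpreting the notation correctly (the maximum is of $g$ over $C$, not of the points themselves) and the differentiability assumption at the endpoints. The interior case only needs two-sided differentiability on $(l,u)$, and the endpoints are handled by explicit inclusion in $C$, so one-sided behavior there never matters.
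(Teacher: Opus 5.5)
Your proof is correct and takes the same approach as the paper. A maximizer of $f(x)-m x$ lies either at an endpoint or at an interior point where Fermat's condition forces $f'(x^*)=m$. Your version is more careful in three places the paper leaves implicit: you invoke the extreme value theorem to guarantee the maximum is attained, you prove both inequalities, and you read the right-hand side as the maximum of $g$ over the candidate set.
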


\begin{proof}
    The maximum of a function over an interval $[l, u]$ can occur either at the boundary or in the interior of the interval.
    If the maximum occurs at $x^* \in (l, u)$, then the derivative at $x^*$ has to vanish:
    \begin{align}
        &\frac{d}{dx} f(x^*) - m \cdot x^* = 0 \\
        \Leftrightarrow& f'(x^*) - m = 0\\
        \Leftrightarrow& f'(x^*) = m
    \end{align}
    The solution set to the last equation is exactly the pre-image of $f'^{-1}(m)$.
    However, $f'^{-1}(m)$ may contain values outside of $[l, u]$. 
    Since we are only interested in solutions inside $[l, u]$, we need to intersect with the interval.
    The overall maximum is then
    \begin{align*}
        \max \left(\{l, u \} \cup \left(f'^{-1}(m) \cap [l, u] \right)\right)
    \end{align*}
\end{proof}

\begin{proposition}
Let $f:[x_l,x_r]\to\mathbb{R}$ be Lipschitz continuous with constant $L_f$ and $g(x)=f(x)-\gamma x$.
Let the \emph{straightforward} Lipschitz envelope use $L_g=L_f+|\gamma|$, and let the
\emph{improved} envelope be obtained by applying the Lipschitz bounds to $f$
and subtracting $\gamma x$.

Then, for all $x\in[x_l,x_r]$, the improved lower envelope
\begin{align}
    g(x)\ge f(x_l)-L_f(x-x_l) - \gamma x&=:\ell^{\mathrm{imp}}_l(x) \\
    g(x)\ge f(x_r)+L_f(x-x_r) - \gamma x&=:\ell^{\mathrm{imp}}_r(x)
\end{align}

is pointwise at least as tight as
the straightforward lower envelope:
\begin{align}
    g(x)\ge g(x_l)-L_g(x-x_l)&=:\ell^{\mathrm{std}}_l(x) \\
    g(x)\ge g(x_r)+L_g(x-x_r)&=:\ell^{\mathrm{std}}_r(x) \;.
\end{align}

Meaning $\ell^{\mathrm{imp}}_l(x) \geq \ell^{\mathrm{std}}_l(x)$ and $\ell^{\mathrm{imp}}_r(x) \geq \ell^{\mathrm{std}}_r(x)$.

Moreover, if $\gamma>0$, the bottom-right
segment~$\ell^{\mathrm{imp}}_r(x)$ is strictly tighter while the bottom-left coincide; if $\gamma<0$,
the bottom-left segment~$\ell^{\mathrm{imp}}_l(x)$ is strictly tighter while the bottom-right coincide.
\end{proposition}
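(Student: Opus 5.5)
The plan is a direct algebraic comparison. I will write each of the four bounding lines explicitly and subtract the standard line from the corresponding improved line. The sign of each difference is then determined by the factors $|\gamma|\pm\gamma$ and by the position of $x$ relative to the anchor point.

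\emph{Validity.} First I will check that the improved lines are in fact lower bounds. Lipschitz continuity of $f$ gives $f(x)\ge f(x_l)-L_f|x-x_l|$. For $x\in[x_l,x_r]$ this reads $f(x)\ge f(x_l)-L_f(x-x_l)$. Likewise $f(x)\ge f(x_r)-L_f(x_r-x)=f(x_r)+L_f(x-x_r)$. Subtracting $\gamma x$ from both sides yields $g\ge\ell^{\mathrm{imp}}_l$ and $g\ge\ell^{\mathrm{imp}}_r$, exactly as in Eq.~\eqref{eq:improved-bounds-1}--\eqref{eq:improved-bounds-2}. The standard bounds are valid because $g$ is Lipschitz with constant $L_g=L_f+|\gamma|$.

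\emph{Left segment.} I expand $g(x_l)=f(x_l)-\gamma x_l$ inside $\ell^{\mathrm{std}}_l$. The terms $f(x_l)$ and $-L_f(x-x_l)$ cancel, which leaves
\[
\ell^{\mathrm{imp}}_l(x)-\ell^{\mathrm{std}}_l(x) = -\gamma x+\gamma x_l+|\gamma|(x-x_l)=(|\gamma|-\gamma)(x-x_l).
\]
This is nonnegative, since $|\gamma|\ge\gamma$ and $x\ge x_l$.

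\emph{Right segment.} In the same way,
\[
\ell^{\mathrm{imp}}_r(x)-\ell^{\mathrm{std}}_r(x) = -\gamma x+\gamma x_r-|\gamma|(x-x_r)=(|\gamma|+\gamma)(x_r-x),
\]
which is nonnegative because $x\le x_r$. This proves the pointwise inequalities.

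\emph{Strictness.} This follows by reading off the factors. If $\gamma>0$, the left difference has factor $|\gamma|-\gamma=0$, so the bottom-left lines coincide. The right difference equals $2\gamma(x_r-x)$, which is strictly positive for every $x<x_r$. If $\gamma<0$, the roles swap: the right factor $|\gamma|+\gamma$ vanishes, and the left difference is $2|\gamma|(x-x_l)>0$ for $x>x_l$.

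The only delicate point I expect is the meaning of ``strictly tighter.'' The two lines necessarily agree at their common anchor point, where both equal $g(x_r)$ (respectively $g(x_l)$). So strictness should be stated as strict inequality on $[x_l,x_r)$ (respectively $(x_l,x_r]$), that is, everywhere except the anchor endpoint, and I will phrase the conclusion that way. The upper-envelope statement mentioned after the proposition is symmetric and follows the same computation with the signs flipped.
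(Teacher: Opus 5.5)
Your proposal is correct and follows essentially the paper's argument: you validate the improved lines from the Lipschitz inequality for $f$, then subtract each standard line from its improved counterpart, and you obtain strict inequality everywhere except the shared anchor endpoint. The only difference is cosmetic. The paper splits into the cases $\gamma\ge 0$ and $\gamma\le 0$ so that $L_g=L_f\pm\gamma$, whereas your closed forms $(|\gamma|-\gamma)(x-x_l)$ and $(|\gamma|+\gamma)(x_r-x)$ handle both signs at once. Your expansion also keeps the $\gamma x_l$ and $\gamma x_r$ terms explicit, whereas the paper's intermediate rewriting writes $f(x_l)$, $f(x_r)$ where $g(x_l)$, $g(x_r)$ are meant.
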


\begin{proof}
For an interval $[x_l,x_r]$ and $g(x)=f(x)-\gamma x$ with $f$ being $L_f$-Lipschitz,
let $z_l=g(x_l)$ and $z_r=g(x_r)$.

\emph{Straightforward envelope.}
Since $g$ is Lipschitz with $L_g=L_f+|\gamma|$, we have for all $x\in[x_l,x_r]$:
\begin{align}
    g(x)\ge & z_l-L_g(x-x_l)=:\ell^{\mathrm{std}}_l(x) \\
    g(x)\ge & z_r+L_g(x-x_r)=:\ell^{\mathrm{std}}_r(x) \;.
\end{align}

\emph{Improved envelope.}
From Lipschitz continuity of $f$,
$f(x)\ge f(x_l)-L_f(x-x_l)$ and $f(x)\ge f(x_r)+L_f(x-x_r)$.
Subtracting $\gamma x$ yields two valid lower bounds for $g$:
\[
g(x)\ge f(x_l)-L_f(x-x_l) - \gamma x=:\ell^{\mathrm{imp}}_l(x) \; ,
\]
and
\[
g(x)\ge f(x_r)+L_f(x-x_r) - \gamma x=:\ell^{\mathrm{imp}}_r(x) \; .
\]

Now rewrite the improved bounds in terms of $f(x_l)$ and $f(x_r)$.

For the left bound,
\begin{align*}
\ell^{\mathrm{imp}}_l(x)
&= f(x_l) - L_f(x-x_l) - \gamma x \\
&= f(x_l) - \gamma x_l - (L_f+\gamma)(x-x_l) \\
&= f(x_l) - (L_f+\gamma)(x-x_l).
\end{align*}

For the right bound,
\begin{align*}
\ell^{\mathrm{imp}}_r(x)
&= f(x_r) + L_f(x-x_r) - \gamma x \\
&= f(x_r) - \gamma x_r + (L_f-\gamma)(x-x_r) \\
&= f(x_r) + (L_f-\gamma)(x-x_r).
\end{align*}

We now distinguish two cases.

\medskip
\noindent\textbf{Case $\gamma\ge 0$.}
Then $L_g=L_f+\gamma$.
Hence
\[
\ell^{\mathrm{imp}}_l(x)
= f(x_l)-(L_f+\gamma)(x-x_l)
= \ell^{\mathrm{std}}_l(x).
\]
For the right bound,
\begin{align}
    \ell^{\mathrm{std}}_r(x)&=f(x_r)+(L_f+\gamma)(x-x_r), \\
    \ell^{\mathrm{imp}}_r(x)&=f(x_r)+(L_f-\gamma)(x-x_r).
\end{align}

Thus
\[
\ell^{\mathrm{imp}}_r(x)-\ell^{\mathrm{std}}_r(x)
= -2\gamma(x-x_r)\ge 0
\]
for all $x\in[x_l,x_r]$.
For $x < x_r$ and $\gamma > 0$, even strict inequality holds.
Hence the right-anchored segment is strictly tighter, while the left-anchored one coincides.

\medskip
\noindent\textbf{Case $\gamma\le 0$.}
Then $L_g=L_f-\gamma$.
Hence
\[
\ell^{\mathrm{imp}}_r(x)
= f(x_r)+(L_f-\gamma)(x-x_r)
= \ell^{\mathrm{std}}_r(x).
\]
For the left bound,
\begin{align}
    \ell^{\mathrm{std}}_l(x)&=f(x_l)-(L_f-\gamma)(x-x_l), \\
    \ell^{\mathrm{imp}}_l(x)&=f(x_l)-(L_f+\gamma)(x-x_l).
\end{align}

Thus
\[
\ell^{\mathrm{imp}}_l(x)-\ell^{\mathrm{std}}_l(x)
= -2\gamma(x-x_l)\ge 0
\]
for all $x\in[x_l,x_r]$.
For $x > x_l$ and $\gamma < 0$, even strict inequality holds.
Hence the left-anchored segment is strictly tighter, while the right-anchored one coincides.

Showing that the improved upper segments are also as tight as the straightforward ones works analogously.
\end{proof}

\begin{lemma}
    Let $l < u$, $f: \R \to \R$, $a_{max}$ the maximum slope of its upper convex hull $H(x)$ over $x \in [l, u]$, $\epsilon > 0$ and $U(x) = (a_{max} + \epsilon)x + b \geq f(x)$ be a linear upper relaxation over $[l, u]$.
    Then there is another linear overapproximation $\hat{U}(x)$ with slope $a_{max}$, s.t. $f(x) \leq \hat{U}(x) < U(x)$ over $x \in (l,u]$.
\end{lemma}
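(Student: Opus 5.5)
The plan is to construct $\hat U$ explicitly by pivoting $U$ about its left endpoint and then lowering it to the smallest admissible offset. The choice is
\[
\hat{U}(x) = a_{max}\, x + \hat b, \qquad \hat b := \sup_{x\in[l,u]} \bigl(f(x) - a_{max} x\bigr),
\]
the shifted line from Eq.~\eqref{eq:problem-def} with slope $a_{max}$. By construction $\hat U \ge f$ on $[l,u]$, so only the strict inequality $\hat U < U$ on $(l,u]$ remains. I will assume the upper convex hull $H$ is finite, i.e.\ $f$ is bounded above on $[l,u]$; this is implicit in the existence of $U$.

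\textbf{Step 1: $\hat U$ dominates $H$.} Any affine function lying above $f$ on $[l,u]$ also lies above the upper convex hull $H$, since $H$ is the smallest concave majorant of $f$ there. Hence both $U \ge H$ and $\hat U \ge H$ on $[l,u]$. Moreover, because $a_{max}$ is the largest slope of the concave function $H$, it equals the right derivative of $H$ at $l$. Concavity then gives $H(x) \le H(l) + a_{max}(x-l)$ for all $x\in[l,u]$. This line majorizes $H$, and therefore also $f$, and it touches $H$ at $l$. Consequently
\[
\hat b = H(l) - a_{max} l, \qquad \hat U(x) = H(l) + a_{max}(x-l).
\]
Establishing this formula is the key step. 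I expect the main obstacle to be the boundary behaviour of $H$ at $l$: if $f$ is not upper semicontinuous, the hull value $H(l)$ can exceed $f(l)$, and the supremum of slopes may be approached rather than attained. The first thing to try is to define $H$ as the concave envelope, i.e.\ the infimum of affine majorants. Then $\hat b = H(l) - a_{max} l$ follows from $H(l) = \inf$ over affine majorants at $l$, combined with the slope bound, without needing attainment.

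\textbf{Step 2: the comparison.} Since $U \ge H$, we have $U(l) \ge H(l) = \hat U(l)$. For $x \in (l,u]$,
\[
U(x) - \hat U(x) = \bigl(U(l) - \hat U(l)\bigr) + \epsilon (x-l) \ge \epsilon(x-l) > 0 .
\]
This gives $f \le \hat U < U$ on $(l,u]$ and completes the argument.

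Equality at $x=l$ is possible, which is why the statement excludes $l$. A symmetric argument, anchoring at $u$ with the minimum slope $a_{min}$, would give the companion Lemma~\ref{lemma:upper-min}.
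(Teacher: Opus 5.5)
Your proof is correct. It reaches the same witness as the paper, $\hat U(x) = H(l) + a_{max}(x-l)$, and uses the same key inequality $H(x) - H(l) \le a_{max}(x-l)$, but by a shorter route.

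The paper's route is as follows. It first argues, informally via secant/tangent reasoning, that $U$ can meet $H$ only at an endpoint. It then excludes contact at $u$ by contradiction and deduces $U(l) = H(l)$. Only after that does it compare $U(x) = H(l) + (a_{max}+\epsilon)(x-l)$ with $\hat U$. The case where $U$ does not touch $H$ at all gets only a brief remark about lowering the bias.

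You replace all of this with the single inequality $U(l) \ge H(l)$. It is immediate from minimality of $H$ and directly gives $U - \hat U \ge \epsilon(x-l)$, so touching and non-touching $U$ are handled uniformly. The paper's detour additionally shows that such a $U$, once lowered onto $H$, touches it exactly at $l$; the statement does not need this. Defining $\hat U$ through the shift formula also has a small bonus: it shows the dominating line is exactly what the shifting procedure returns at slope $a_{max}$.

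The obstacle you anticipate in Step 1 is not real. Step 2 only needs $\hat U(l) \le H(l) \le U(l)$. The bound $\hat b \le H(l) - a_{max} l$ follows at once from $f \le H \le H(l) + a_{max}(\cdot - l)$, with no attainment required. You could equally set $\hat U(x) = H(l) + a_{max}(x-l)$ directly, as the paper does.

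One side remark about the edge case you raise. For the paper's definition of $H$ as the smallest concave majorant, $H(l) = f(l)$ always holds. The genuine edge case for non-u.s.c.\ $f$ is a downward jump of $H$ at $l$. Your switch to the infimum of affine majorants does remove that jump.
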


\begin{proof}
    Recall that the upper convex hull of $f(x)$ over $[l, u]$ is the smallest concave function $H(x) \geq f(x)$ for $x \in [l, u]$.

    If $U(x) > H(x)$ for all $x \in (l, u]$, we can construct a dominating function $\hat{U}(x)$ by using a smaller bias term.
    
    Note that $U(x) = (a_{max} + \epsilon) x + b$ cannot touch $H(x)$ in two or more points, otherwise it would be a secant and thus tangent to the upper convex hull.
    It can also not be tangent to $H(x)$ or it would also be part of the upper convex hull.
    Therefore, $U(x) = H(x)$ is only possible for either $x = l$ or $x = u$.

    Note that
    \begin{align}
        H(x) - H(l) \leq a_{max} (x - l) \label{eq:max-slope-hull} \\
        U(x) - U(l) = (a_{max} + \epsilon) (x - l) \label{eq:slope-lin}
    \end{align}
    due to $a_{max}$ being the maximum slope of $H(x)$ and $U(x)$ being a linear function.

    We first assume $U(u) = H(u)$.
    Let $x = u$ in Equations \eqref{eq:max-slope-hull},\eqref{eq:slope-lin}, then
    Since $\epsilon > 0$ and $(u - l) > 0$, we know that $a_{max} (u - l) > (a_{max} + \epsilon) (u - l)$ and thus
    \begin{align*}
        & U(u) - U(l) \geq H(u) - H(l) \\
        \Leftrightarrow \; & U(u) - H(u) > U(l) - H(l) \;.
    \end{align*}
    But since $U(u) = H(u)$ this is equivalent to
    \begin{align}
        0 > U(l) - H(l) \Leftrightarrow H(l) > U(l)
    \end{align}
    which is a contradiction to $H(x)$ being the smallest concave upper bound to $f(x)$.

    Therefore, $U(l) = H(l)$.
    Note that, since they touch at $l$, we have $U(x) = (a_{max} + \epsilon) (x - l) + H(l)$.
    But then, we can construct 
    \begin{align*}
        \hat{U}(x) = \; & a_{max} (x - l)\! +\! H(l) \\
         < \; & a_{max} (x - l)\! +\! H(l)\! +\! \epsilon (x - l) = U(x) \;
    \end{align*}
    for all $x \in (l, u]$.
    Note that 
    \begin{align}
        a_{max} (x - l) + H(l) \geq & \; H(x) \Leftrightarrow a_{max} (x - l) \\ 
        \geq & \; H(x) - H(l) \;,
    \end{align}
    where the right hand side is true by Equation \eqref{eq:max-slope-hull}.
    Therefore, $\hat{U}(x) \geq H(x) \geq f(x)$ and $\hat{U}(x)$ is still an upper relaxation. 
\end{proof}

\begin{lemma}
    \label{lemma:upper-min}
    Let $l < u$, $f: \R \to \R$, $a_{min}$ the minimum slope of its upper convex hull $H(x)$ over $x \in [l, u]$, $\epsilon > 0$ and $U(x) = (a_{min} - \epsilon)x + b \geq f(x)$ be a linear upper relaxation over $[l, u]$.
    Then there is another linear overapproximation $\hat{U}(x)$ with slope $a_{min}$, s.t. $f(x) \leq \hat{U}(x) < U(x)$ over $x \in [l,u)$.
\end{lemma}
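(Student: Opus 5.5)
This is the mirror image of Lemma~\ref{lemma:upper-max}. I would either reflect the domain or redo that argument with the roles of $l$ and $u$ swapped. The reflection is cleanest: set $\tilde f(x) = f(-x)$ on $[-u,-l]$. The upper convex hull of $\tilde f$ is $\tilde H(x) = H(-x)$, since reflection preserves concavity and the minimality of concave majorants. Its slopes are the negatives of the slopes of $H$, so its maximum slope is $-a_{min}$. The relaxation $U(x) = (a_{min}-\epsilon)x + b$ becomes $\tilde U(x) = U(-x) = (-a_{min}+\epsilon)x + b$, which has slope $\tilde a_{max}+\epsilon$ with $\tilde a_{max} = -a_{min}$. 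Applying Lemma~\ref{lemma:upper-max} to $\tilde f$ yields a line $\hat{\tilde U}$ with slope $-a_{min}$ and $\tilde f \le \hat{\tilde U} < \tilde U$ on $(-u,-l]$. Reflecting back with $\hat U(x) = \hat{\tilde U}(-x)$ gives slope $a_{min}$ and $f \le \hat U < U$ on $[l,u)$, which is exactly the claim.

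To keep the proof self-contained, I would also sketch the direct version, following the proof of Lemma~\ref{lemma:upper-max}. The key inequality is that, because $a_{min}$ is the minimum slope of the concave $H$,
\[
H(u) - H(x) \ge a_{min}(u-x), \qquad U(u) - U(x) = (a_{min}-\epsilon)(u-x).
\]
First, if $U > H$ on all of $[l,u]$, I lower the bias, or equivalently reduce to a line touching $H$. As in the earlier lemma, a line whose slope lies outside the range of slopes of $H$ can touch $H$ only at an endpoint. Suppose it touches at $l$. Taking $x=l$ above gives $U(u) - U(l) < H(u) - H(l)$, and with $U(l) = H(l)$ this yields $U(u) < H(u)$. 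That contradicts $U \ge H$, which holds because $U$ is a linear, hence concave, majorant of $f$ and $H$ is the smallest concave majorant. So the line touches at $u$, i.e.\ $U(x) = (a_{min}-\epsilon)(x-u) + H(u)$.

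Now define $\hat U(x) = a_{min}(x-u) + H(u)$. Then $U(x) - \hat U(x) = \epsilon(u-x) > 0$ for $x \in [l,u)$, and $\hat U \ge H \ge f$ follows directly from the displayed slope inequality.

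The main obstacle is the case where $U$ does not touch $H$ at all. Lowering the bias there changes $U$, so the comparison must be against the original $U$. I would handle it by letting $U_0$ be the line of slope $a_{min}-\epsilon$ shifted down until it touches $H$. Then $U_0 \le U$, and by the argument above it touches at $u$. Since $\hat U < U_0 \le U$ on $[l,u)$, the strict inequality is preserved.

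Tangency in the interior is also impossible. A supporting line at an interior point must have slope between the one-sided derivatives of $H$ there, and those lie within $[a_{min}, a_{max}]$, which excludes $a_{min}-\epsilon$.
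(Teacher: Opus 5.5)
Your proposal is correct, and it contains two proofs. The direct sketch is essentially the paper's argument. The paper also compares $H(u)-H(x)\ge a_{min}(u-x)$ with $U(u)-U(x)=(a_{min}-\epsilon)(u-x)$ at $x=l$ to exclude contact at $l$. It then concludes that $U$ meets $H$ at $u$ and sets $\hat U(x)=a_{min}(x-u)+H(u)$. Your identity $U(x)-\hat U(x)=\epsilon(u-x)$ is the correctly signed version of the paper's final display.

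The reflection $x\mapsto -x$ is a genuinely different route. The paper says ``by a similar argument'' and repeats the endpoint analysis with $l$ and $u$ swapped. You instead reduce the statement to Lemma~\ref{lemma:upper-max} in one step, which is shorter and makes the symmetry explicit, at the cost of depending on that lemma. The direct version, by contrast, is self-contained.

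Your handling of the case where $U$ does not touch $H$, via the lowered line $U_0\le U$, is more careful than the paper, which only gestures at it. One detail there: the existence of $U_0$ requires the supremum of $H(x)-(a_{min}-\epsilon)x$ over $[l,u]$ to be attained. This holds because every secant slope of $H$ is at least $a_{min}$. Hence $H(x)-(a_{min}-\epsilon)x$ is strictly increasing and is maximized exactly at $u$, and that observation alone already excludes contact at $l$ and in the interior.

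In fact you can drop the touching analysis entirely. Since $U$ is an affine majorant of $f$, we have $U\ge H$, in particular $U(u)\ge H(u)$. Therefore $U(x)=U(u)-(a_{min}-\epsilon)(u-x)\ge \hat U(x)+\epsilon(u-x)>\hat U(x)$ for $x\in[l,u)$. Finally, $\hat U\ge H\ge f$ follows from the slope inequality, as you say.
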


\begin{proof}
    By a similar argument to the proof of Lemma \ref{lemma:upper-max}, we know that $U(x) = H(x)$ is only possible for either $x = l$ or $x = u$.

    Now note that
    \begin{align}
        H(u) - H(x) \geq a_{min} (u - x) \label{eq:min-slope-hull}\\
        U(u) - U(x) = (a_{min} - \epsilon) (u - x) \label{eq:min-slope-lin}
    \end{align}

    We first assume that $U(l) = H(l)$. 
    Let $x = l$ in Equations \eqref{eq:min-slope-hull},\eqref{eq:min-slope-lin}, then as $\epsilon > 0$, we know that $a_{min}(u - x) > (a_{min} - \epsilon) (u - x)$ and thus
    \begin{align*}
        & H(u) - H(l) > U(u) - U(l) \\
        \Leftrightarrow \; & H(u) - U(u) > H(l) - U(l) = 0
    \end{align*}
    which would lead to
    \begin{align}
        H(u) - U(u) > 0 \leftrightarrow H(u) > U(u) \geq f(u)
    \end{align}
    and thus contradicts the assumption that $H(u)$ is the smallest concave upper bound to $f(x)$.

    Therefore, $U(u) = H(u)$.
    Since they touch at $u$, we can equivalently rite $U(x) = (a_{min} - \epsilon) (x - u) + H(u)$.
    We can then construct a dominating upper bound
    \begin{align*}
        \hat{U}\!(x) =\; & a_{min}(x - u) + H(u) \\
        <\; & a_{min} (x - u) + H(u) + \epsilon (x - u) = U(x) \;,
    \end{align*}
    where the last step holds because $x - u < 0$ for $x \in [l, u)$.

    Note that
    \begin{align}
        & a_{min}(x - u) + H(u) \geq H(x) \\
        \Leftrightarrow \; & a_{min}(x - u) \geq H(x) - H(u) \\
        \Leftrightarrow \; & a_{min} (u - x) \leq H(u) - H(x)
    \end{align}
    where the right hand side is true by Equation \eqref{eq:min-slope-hull}.
    Therefore, $\hat{U}(x) \geq H(x) \geq f(x)$ and $\hat{U}(x)$ is still an upper relaxation.
\end{proof}

\subsection{Implementation Details of the Piyavskii Method}

In order to implement the Piyavskii method in a tensor-friendly and computationally efficient manner, we introduce several improvements.
First, instead of maintaining an ordered list of trial points, we store a collection of minorant segment entities, each defined by its left and right endpoints and the corresponding minorant intersection point.
This representation avoids the need for sorting or ordered insertion operations on tensors.
During refinement, a selected segment entity is replaced by its left subsegment, while the corresponding right subsegment is appended to the end of the stored lists.
This design significantly simplifies the minorant improvement step.
Second, since tensor concatenation in \texttt{PyTorch} is a costly operation, we preallocate the tensors $x_l, z_l, x_r, z_r, x_m, z_m$ to their maximum required size, namely $n - 1 + \textit{max\_iter}$.
Although this strategy incurs additional memory usage, it enables substantially faster refinement steps by avoiding repeated reallocation.

\subsection{Supplementary Evaluation Material}
\label{sec:supplementary-eval}

\subsubsection{Parameter Settings}
\label{subsec:params}

This section summarizes the parameter values used in our experimental evaluation. 
Unless stated otherwise, all experiments use the same default configuration.

\paragraph{Initialization}
For the gradient-based initialization of the relaxation slopes we perform 
$100$ iterations of gradient descent.

\paragraph{Optimization of relaxation parameters}
During bound optimization we run $20$ iterations of gradient-based optimization with learning rate $0.05$.

\paragraph{Piyavskii refinement}
For constructing the overapproximation we employ the Piyavskii--Shubert algorithm with a maximum of $1000$ iterations or until the tolerance $10^{-4}$ is reached. 
The same tolerance $10^{-4}$ is used for the SLiR approximation procedure.

\paragraph{Additional parameters}
Unless explicitly overridden, the following default parameters are used in our implementation:

\begin{itemize}
    \item initialization method: \texttt{gradient-based}
    \item initialization iterations: $100$
    \item overapproximation tolerance: $\epsilon_{tol} = 10^{-4}$
    \item Piyavksii maximum iterations: $max\_iter = 1000$
    \item Piyavskii tolerance: $\varepsilon = 10^{-4}$
    \item Piyavskii initial points: $n_{ini} = 100$
    \item local Lipschitz constant estimation enabled
    \item improved Piyavskii envelope enabled
    \item sigmoid transformation epsilon: $\epsilon_\sigma = 10^{-6}$
\end{itemize}

All remaining parameters use the default values provided by our implementation.

\subsubsection{Lipschitz Optimization and Bound Tightness}
\label{ssec:bound-tightness}

While the improvements to the Piyavskii algorithm described in Section \ref{subsec:pyavskii-optim} did not lead to a large increase in verified properties for the MNIST benchmark, the effects are noticable, when having a closer look at the verifiable output bounds.

\begin{figure}[ht]
    \includegraphics[width=\linewidth]{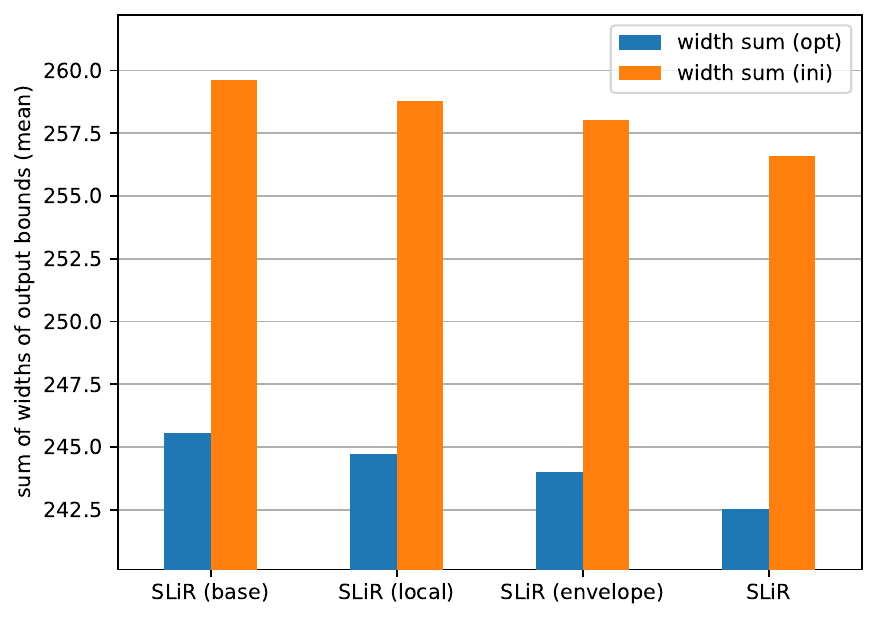}
    \caption{Effect of different improvements on the Piyavskii method on the width of the provable output bounds of the MNIST networks. All modifications improve upon the baseline. Combining both local Lipschitz constants (SLiR (local)) and the improved envelope computation (SLiR (envelope)) leads to further improvements (SLiR).}
    \label{fig:bounds-width-lipschitz}
\end{figure}

Figure \ref{fig:bounds-width-lipschitz} shows the mean of the summed width of the output bounds
\begin{align}
    \sum_i u_i - l_i \;,
\end{align}
where $l_i, u_i$ are the verifiable lower and upper bounds returned by the different configurations of SLiR for the $i$-th output of a network for both after initialization and after optimization.
The mean is taken over all considered images and networks.

Each of our modifications - using local Lipschitz constants and improved envelopes - leads to SLiR producing tighter output bounds than when just using the baseline Piyavskii method.
Moreover, combining our modifications - as used in the final SLiR algorithm - leads to an additional reduction in bounds width.
For the CIFAR benchmarks results are similar (see \Cref{fig:bounds-width-lipschitz-cifar}).

\begin{figure}[ht]
    \includegraphics[width=\linewidth]{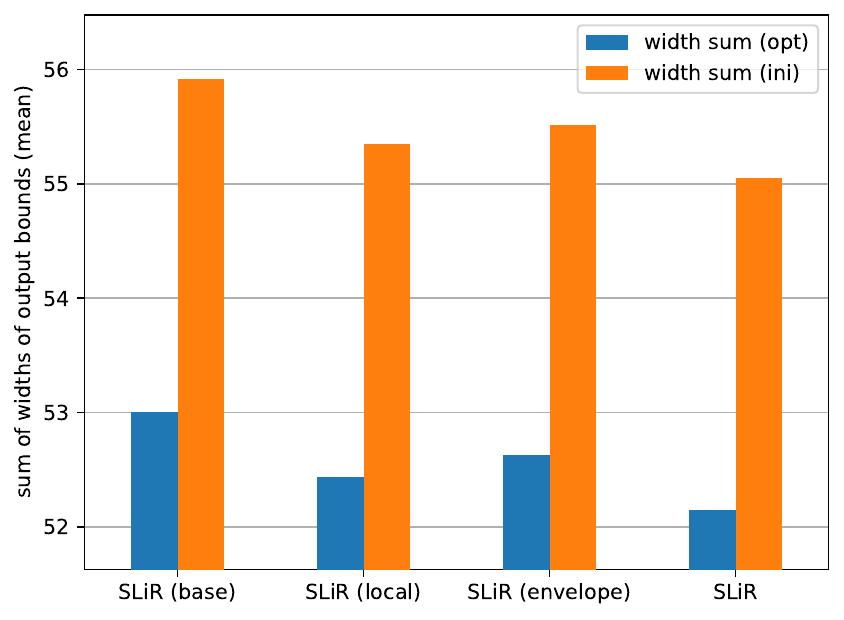}
    \caption{Effect of different improvements on the Piyavskii method on the width of the provable output bounds of the CIFAR networks. All modifications improve upon the baseline. Combining both local Lipschitz constants (SLiR (local)) and the improved envelope computation (SLiR (envelope)) leads to further improvements (SLiR).}
    \label{fig:bounds-width-lipschitz-cifar}
\end{figure}

\subsubsection{Failure Cases of Other Methods}
\label{ssec:failure-cases}

The experimental results in Sections \ref{ssec:comparison-prior-work} showed that our approach --- even after initialization --- can verify significantly more properties than the example guided approach \cite{Paulsen2022example} for the $\lisht$ and $\mish$ activation functions for the MNIST benchmark.

Similarly, we can verify more properties than $\alpha$-CROWN for the $\gelu$ and $\tanh$ activation functions --- both after initialization and after optimization --- even though, $\alpha$-CROWN utilizes manually derived custom relaxations in these cases.

\begin{figure}[ht]
    \includegraphics[width=\linewidth]{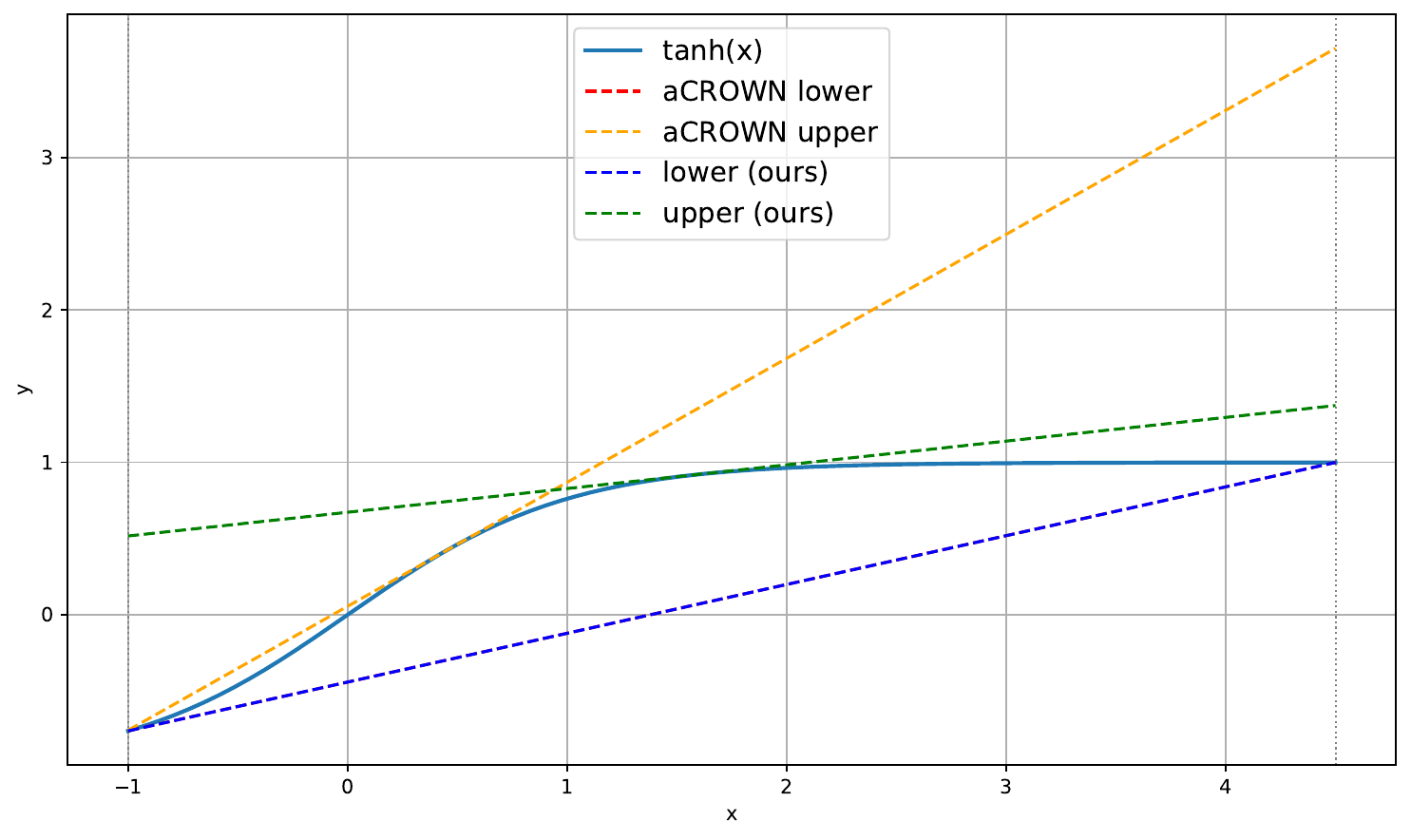}
    \caption{Example where $\alpha$-CROWN fails to find a good initial overapproximation for the $\tanh$ function. Enclosed area: $\alpha$-CROWN: $7.4823$, ours: $4.5441$}
    \label{fig:acrown-init-tanh}
\end{figure}

In this section, we show specific failure cases, where the (initial) relaxations chosen by the example guided approach and $\alpha$-CROWN are not as tight with respect to the area enclosed between the lower and the upper relaxation --- leading to suboptimal performance of those tools for these benchmarks.

\begin{figure*}[t]
    \centering
    \begin{subfigure}{0.3\textwidth}
        \centering
        \includegraphics[width=\textwidth]{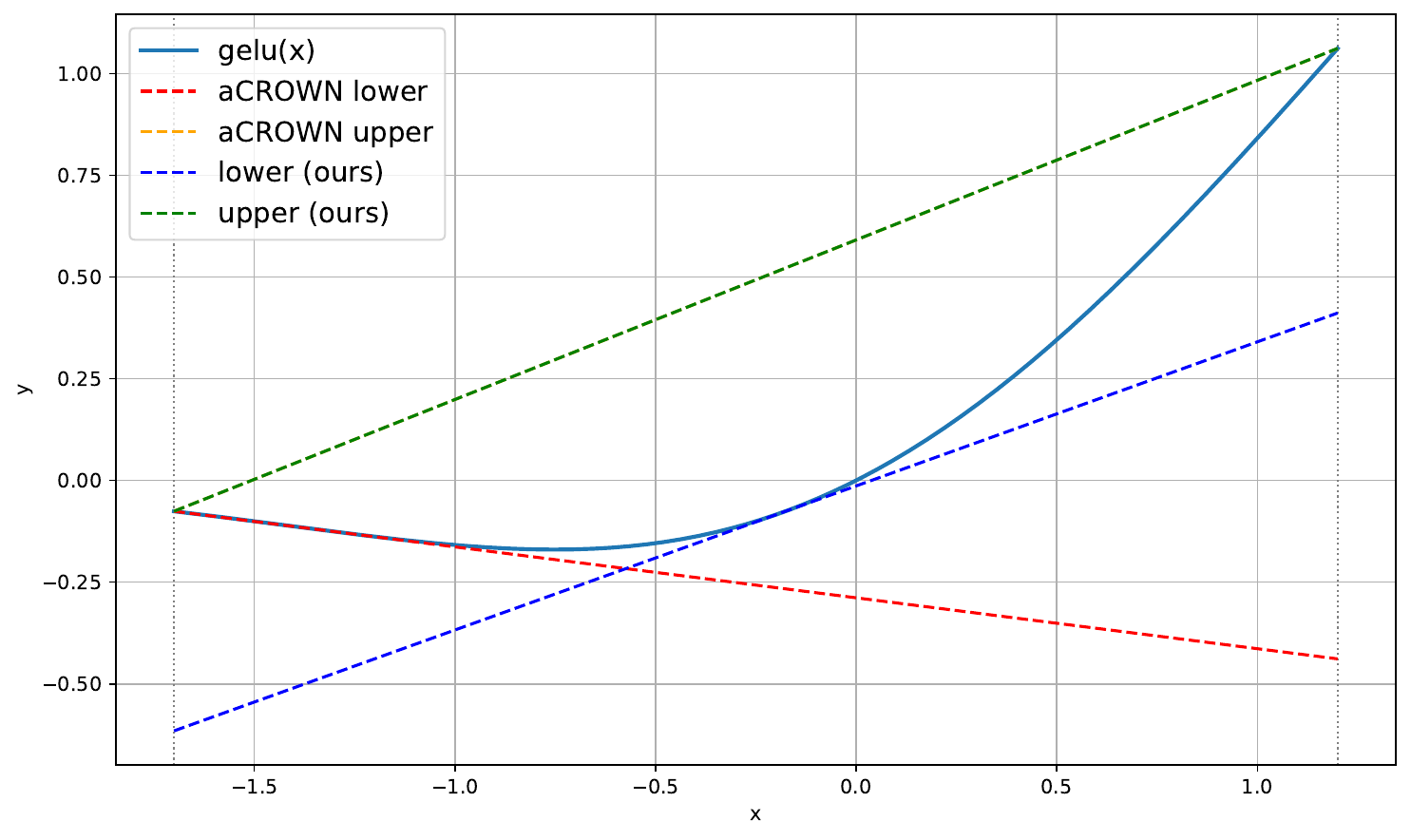}
        \caption{$\alpha$-CROWN: $2.1758$, ours: $1.7261$}
        \label{fig:acrown-init-tangent-left}
    \end{subfigure}
    \hfill
    \begin{subfigure}{0.3\textwidth}
        \centering
        \includegraphics[width=\textwidth]{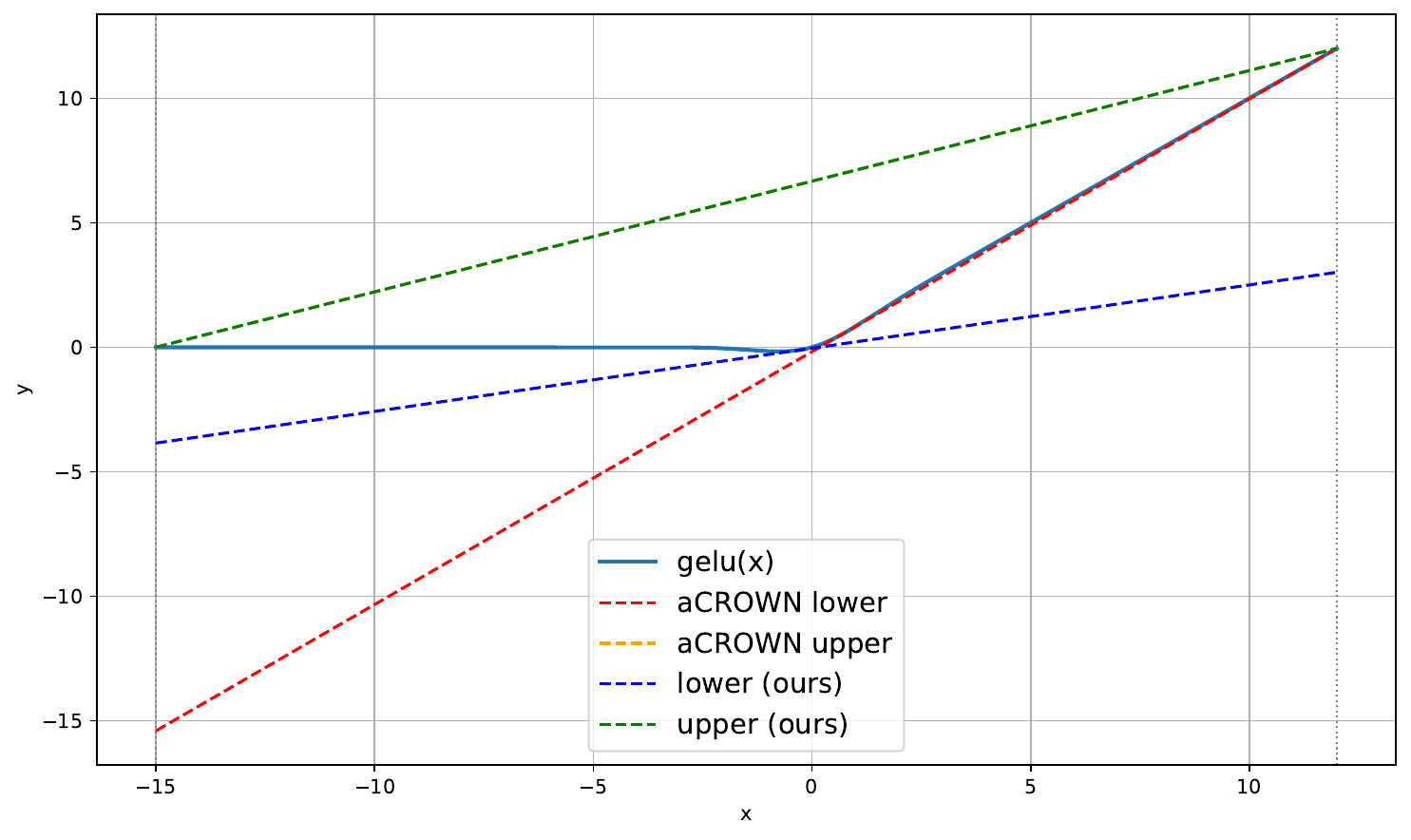}
        \caption{$\alpha$-CROWN: $208.0164$, ours: $173.3838$}
        \label{fig:acrown-init-tangent-right}
    \end{subfigure}
    \hfill
    \begin{subfigure}{0.3\textwidth}
        \centering
        \includegraphics[width=\textwidth]{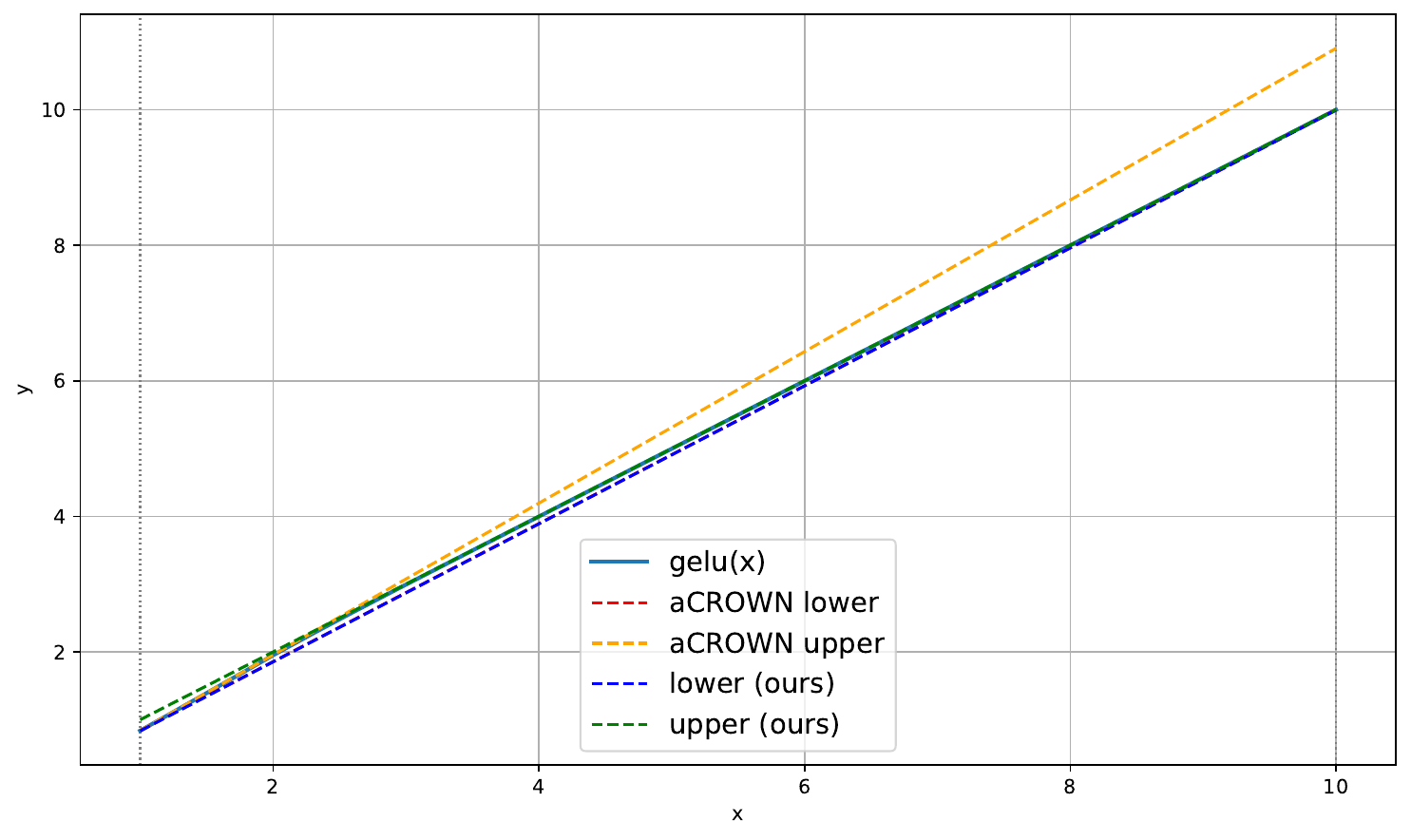}
        \caption{$\alpha$-CROWN: $4.0749$, ours: $0.7659$}
        \label{fig:acrown-init-pos}
    \end{subfigure}
    \caption{Cases where $\alpha$-CROWN fails to find good initial overapproximations for the $\gelu$ function. The area enclosed between the lower and upper relaxations can be found in the caption of each subfigure.}
    \label{fig:acrown-init}
\end{figure*}

While the E-GUIDED approach is generally capable of synthesizing tight linear relaxations, there exist configurations in which the combination of subinterval partitioning and tangent point estimation results in comparatively loose bounds. 
\Cref{fig:eguided-comp} illustrates representative cases of this behavior.

In \Cref{fig:eguided-comp-1,fig:eguided-comp-2}, the E-GUIDED method employs the same subinterval partitioning $l \in [-9.0,-8.0]$ and $u \in [-1.0,0.0]$, together with the same tangent-line template. 
For intervals in this partition, the tangent line is first constructed locally based on the predicted tangent point, and is then shifted upward by the maximum violation computed over the entire subinterval in order to ensure soundness.

In \Cref{fig:eguided-comp-1}, the primary source of looseness is the coarse partitioning of the input interval space. 
Even when the locally constructed tangent is close to tight for the concrete interval, the required global shift—computed with respect to all intervals in the subpartition—introduces a substantial degradation of the upper bound.

In contrast, \Cref{fig:eguided-comp-2} demonstrates a case in which the looseness is mainly attributable to suboptimal tangent point estimation. 
Although the required soundness shift is relatively small, the predicted tangent point itself does not induce a tight upper relaxation, and the resulting bound remains conservative.

Moreover, \Cref{fig:eguided-comp-3} shows that even in situations where the E-GUIDED relaxation appears reasonably tight, our approach can still derive a slightly tighter relaxation, resulting in a smaller enclosed area between lower and upper bounds.

Finally, we observe that the E-GUIDED approach may yield particularly loose bounds when the queried interval lies outside the predefined input universe ($[-10,10]$ in the shared artifact). 
In this case, the method reverts to its decomposition-based bounding procedure rather than employing the synthesized template-based relaxations. 
Such decomposition-based bounds are generally more conservative, and therefore tend to be significantly looser than the bounds obtained within the designated input universe.

\begin{figure*}[t]
    \centering
    \begin{subfigure}[t]{0.32\textwidth}
        \centering
        \includegraphics[width=\linewidth,trim=100 100 50 20,clip]{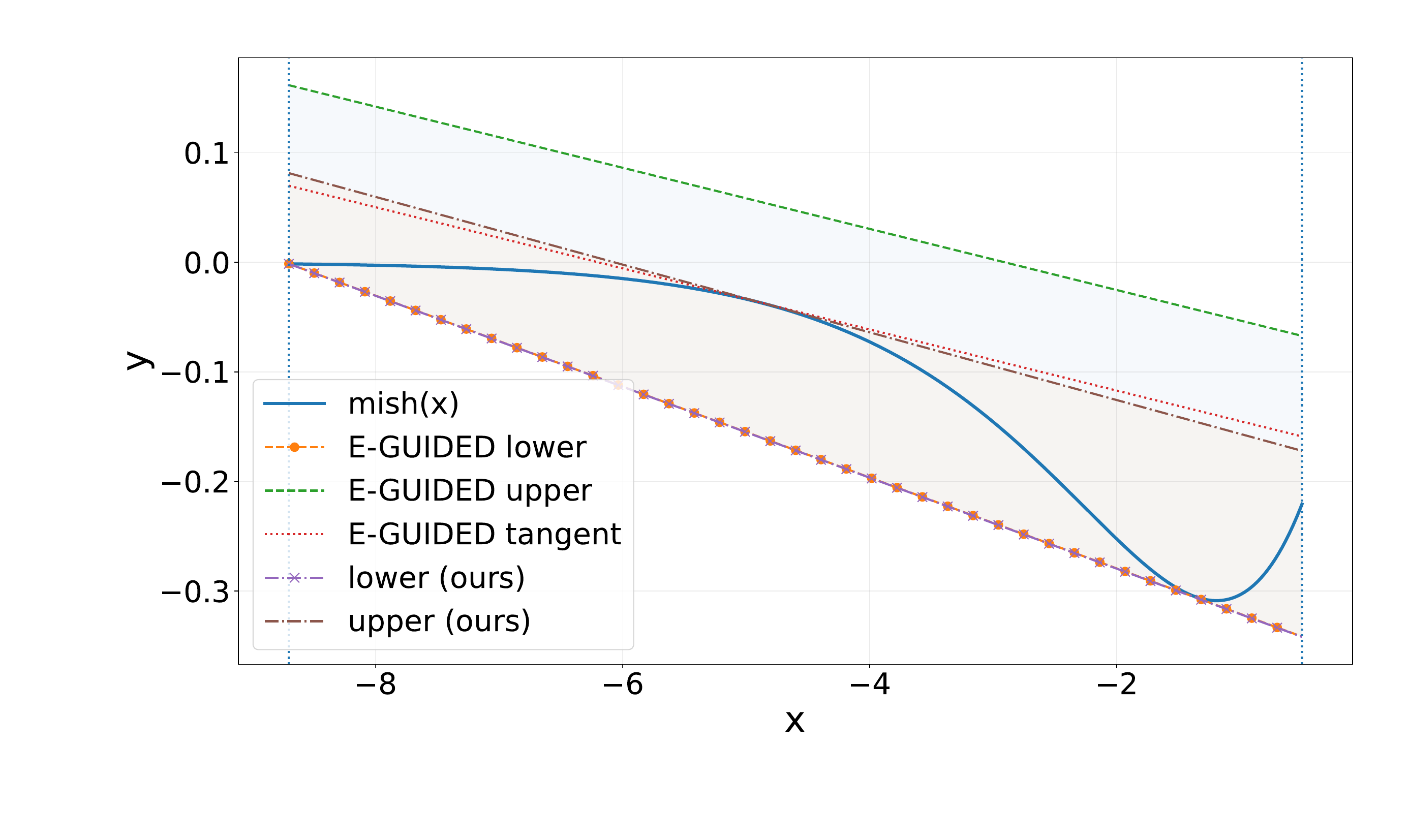}
        \caption{Input domain $[-8.70,-0.50]$. \\ E-GUIDED lower: $y=-0.0415x-0.3624$, E-GUIDED upper: $y=-0.0279x-0.0811$, E-GUIDED tangent: $y=-0.0279x-0.1729$, ours lower: $y=-0.0415x-0.3627$, ours upper: $y=-0.0309x-0.1876$. Relaxation area: E-GUIDED $1.7937$, ours $1.0357$.}
        \label{fig:eguided-comp-1}
    \end{subfigure}\hfill
    \begin{subfigure}[t]{0.32\textwidth}
        \centering
        \includegraphics[width=\linewidth,trim=100 100 50 20,clip]{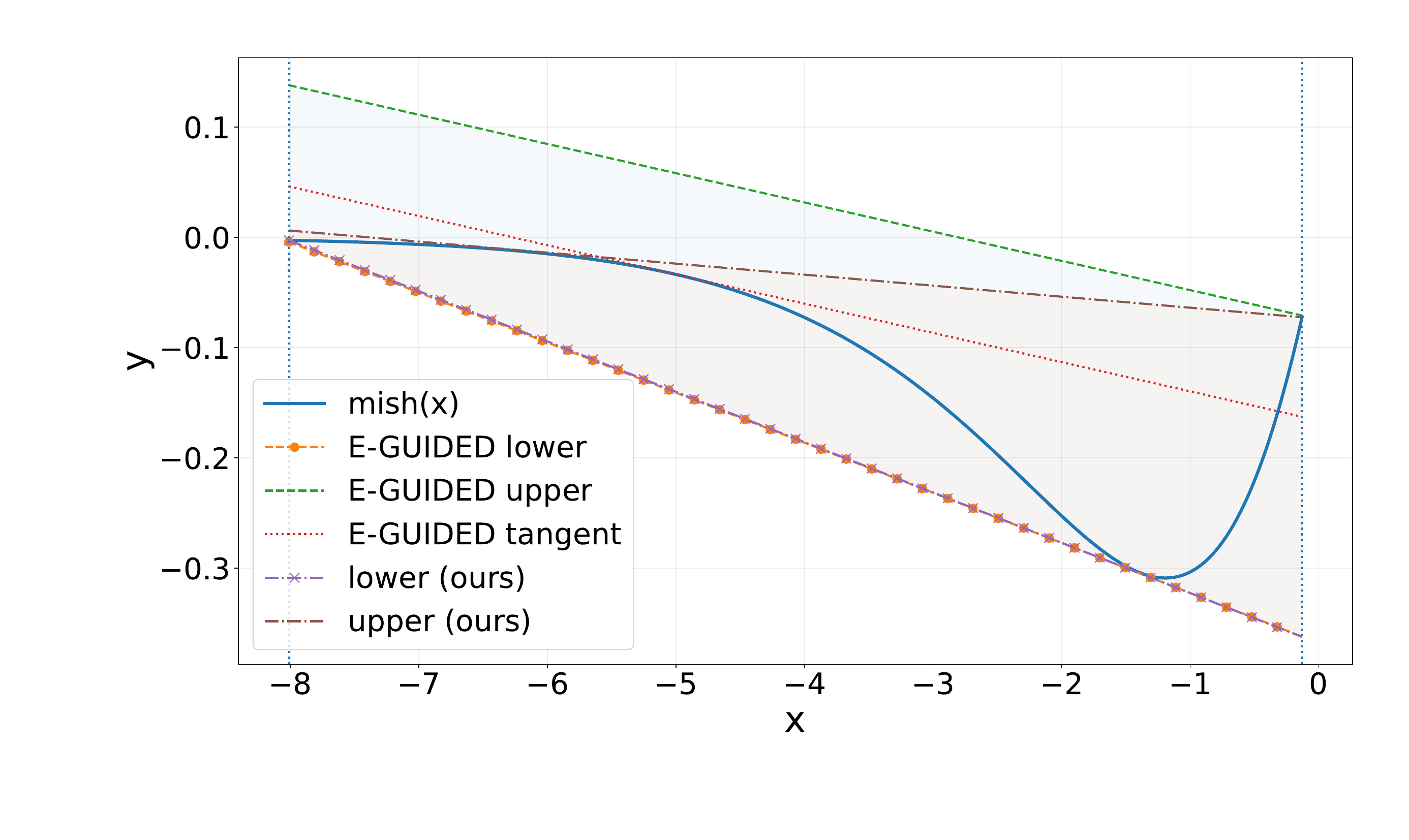}
        \caption{Input domain $[-8.01,-0.13]$. \\ E-GUIDED lower: $y=-0.0454x-0.3678$, E-GUIDED upper: $y=-0.0265x-0.0743$, E-GUIDED tangent: $y=-0.0265x-0.1661$, ours lower: $y=-0.0456x-0.3681$, ours upper: $y=-0.0100x-0.0738$. Relaxation area: E-GUIDED $1.7066$, ours $1.1767$.}
        \label{fig:eguided-comp-2}
    \end{subfigure}\hfill
    \begin{subfigure}[t]{0.32\textwidth}
        \centering
        \includegraphics[width=\linewidth,trim=100 100 50 20,clip]{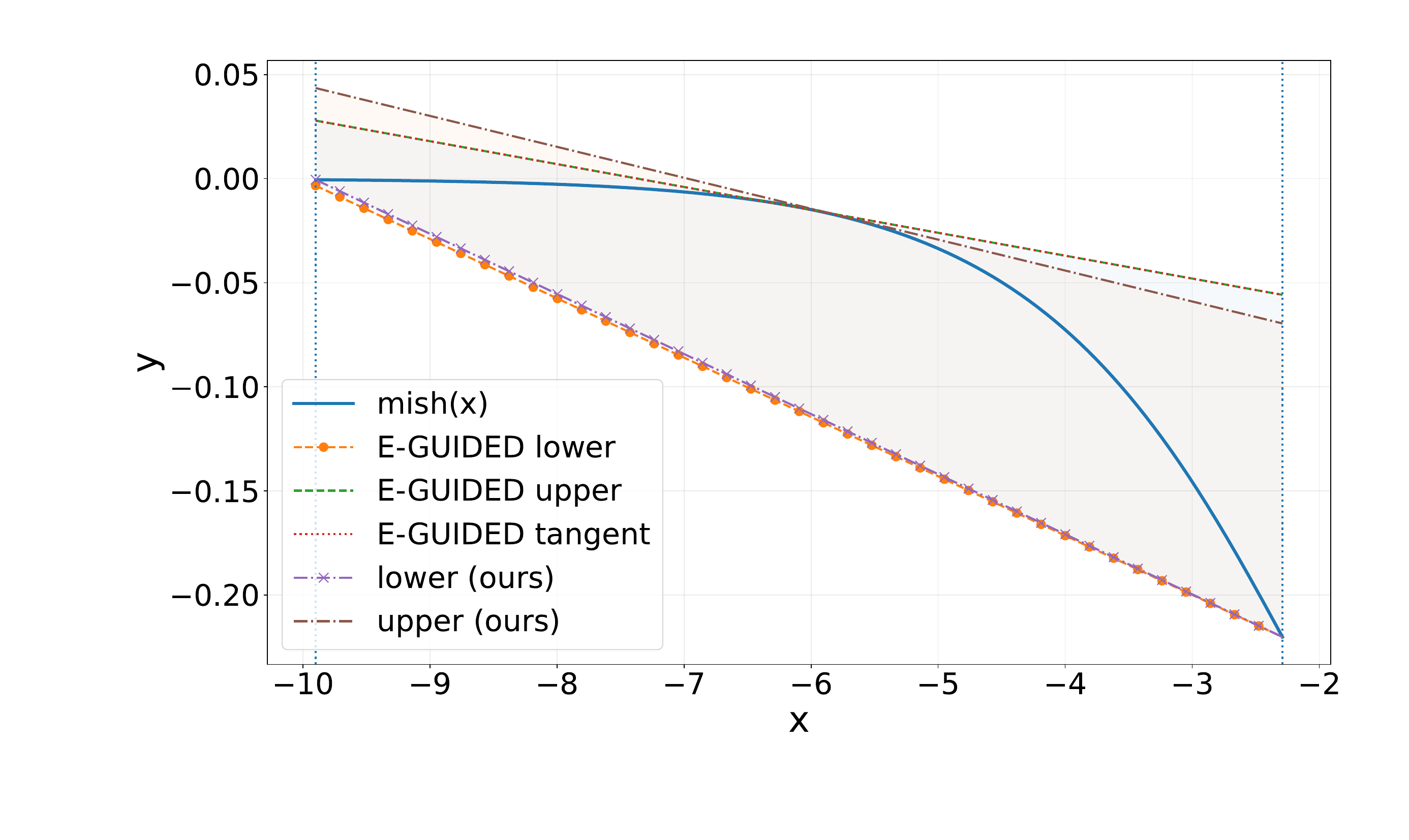}
        \caption{Input domain $[-9.90,-2.29]$. \\ E-GUIDED lower: $y=-0.0285x-0.2855$, E-GUIDED upper: $y=-0.0110x-0.0810$, E-GUIDED tangent: $y=-0.0110x-0.0810$, ours lower: $y=-0.0289x-0.2863$, ours upper: $y=-0.0149x-0.1036$. Relaxation area: E-GUIDED $0.7445$, ours $0.7409$.}
        \label{fig:eguided-comp-3}
    \end{subfigure}

    \caption{Cases where the E-GUIDED approach fails to obtain a tight overapproximation of the $\mathrm{mish}(x)$ activation. 
    The area enclosed between the lower and upper relaxations is reported in the caption of each subfigure. 
    In \Cref{fig:eguided-comp-1,fig:eguided-comp-2}, both methods produce identical lower relaxations, while in \Cref{fig:eguided-comp-3} our method yields a slightly tighter lower bound. 
    The improvement in \Cref{fig:eguided-comp-1,fig:eguided-comp-2} stems primarily from significantly tighter upper bounds. 
    In \Cref{fig:eguided-comp-3}, although the bounds are closer, our approach still achieves a smaller overall relaxation area.}
  \label{fig:eguided-comp}
\end{figure*}

\end{document}